\def\BibTeX{{\rm B\kern-.05em{\sc i\kern-.025em b}\kern-.08em
    T\kern-.1667em\lower.7ex\hbox{E}\kern-.125emX}}
\definecolor{darkred}{RGB}{200,0,0}
\definecolor{darkgreen}{RGB}{0,120,0}
\definecolor{darkblue}{RGB}{0,0,150}
\def\BibTeX{{\rm B\kern-.05em{\sc i\kern-.025em b}\kern-.08em
    T\kern-.1667em\lower.7ex\hbox{E}\kern-.125emX}}
\newcommand*{\rom}[1]{\expandafter\@slowromancap\romannumeral #1@}
\newcommand{\mathleft}{\@fleqntrue\@mathmargin0pt}
\newcommand{\mathcenter}{\@fleqnfalse}
\newcommand{\ssymbol}[1]{^{\@fnsymbol{#1}}}
\newcommand{\R}{\mathbb{R}}
\newcommand{\Lm}{\mathcal{L}}
\newcommand{\simiid}{\widesim{\text{\small{iid}}}}
\theoremstyle{theorem}
\newtheorem{ass}{Assumption}
\theoremstyle{remark}
\newtheorem{remark}{Remark}%[subsection]
\newcommand{\eps}{\varepsilon}
\newcommand{\sign}{\texttt{sign}}
\newcommand{\one}{\mathbf{1}}
\newcommand{\E}{\mathbb{E}}                    % expectation
\newcommand{\nn}{\notag}
\newcommand{\Dc}{\mathcal{D}}
\newcommand{\Nn}{\mathcal{N}}
\newcommand{\Oc}{\mathcal{O}}
\newcommand{\beq}{\begin{equation}}
\newcommand{\eeq}{\end{equation}}
\newcommand{\bea}{\begin{align}}
\newcommand{\eea}{\end{align}}
\newcommand{\widesim}[2][1.5]{
  \mathrel{\overset{#2}{\scalebox{#1}[1]{$\sim$}}}}
\def\bea#1\eea{\begin{align}#1\end{align}}
\theoremstyle{plain}
\newtheorem{theorem}{Theorem}
\newtheorem{lemma}[theorem]{Lemma}
\newtheorem{proposition}[theorem]{Proposition}
\begin{document}
\twocolumn[
\aistatstitle{On Generalization of Decentralized Learning with Separable Data}
\aistatsauthor{ Hossein Taheri \And Christos Thrampoulidis }

\aistatsaddress{ University of California, Santa Barbara \And  University of British Columbia}  ]
%\maketitle
%%%%%%%%%%%%%%%%%%%%%%%%% ABSTRACT %%%%%%%%%%%%%%%%%%%%
\begin{abstract}
Decentralized learning offers privacy and communication efficiency when data are naturally distributed among agents communicating over an underlying graph. Motivated by overparameterized learning settings, in which models are trained to zero training loss, we study algorithmic and generalization properties of decentralized learning with gradient descent on separable data. Specifically, for decentralized gradient descent (DGD) and a variety of loss functions that asymptote to zero at infinity (including exponential and logistic losses), we derive novel finite-time generalization bounds. This complements a long line of recent work that studies the generalization performance and the implicit bias of gradient descent over separable data, but has thus far been limited to centralized learning scenarios. Notably, our generalization bounds approximately match in order their centralized counterparts. Critical behind this, and of independent interest, is establishing novel bounds on the training loss and the rate-of-consensus of DGD for a class of self-bounded losses. Finally, on the algorithmic front, we design improved gradient-based routines for decentralized learning with separable data and empirically demonstrate orders-of-magnitude of speed-up in terms of both training and generalization performance.
 \end{abstract}
\tolerance=1
\emergencystretch=\maxdimen
\hyphenpenalty=10000
\hbadness=10000
%\begin{IEEEkeywords}
%multi-agent learning, logistic regression, distributed learning. 
%\end{IEEEkeywords}

%%%%%%%%%%%%%%%%%%%%%    INTRODUCTION    %%%%%%%%%%%%%%%%%%%%
%\blfootnote{Correspondence to hossein@ucsb.edu}

\section{INTRODUCTION}
\subsection{Motivation}
Machine learning tasks often revolve around inference from data using empirical risk minimization (ERM):
\bea\label{eq:logreg}
\min_{w\in\R^d}\hat F(w):=\frac{1}{n}\sum_{i=1}^{n} f\left(w, x_{i}\right).
\eea
Here $f:\R^d\times\R^{d'}\rightarrow \R$ is a loss function and $x_i:=y_i a_i$, where $(a_i,y_i)_{i=1}^n \simiid \mathcal{D}$ represent features and labels, sampled from a distribution $\mathcal{D}$. In large scale machine learning, due to privacy concerns and communication constraints, data points are often distributed on a set of local computing agents. Decentralized learning methods aim at minimizing the global loss function \eqref{eq:logreg} while agents communicate their parameters on an underlying connected graph. The most ubiquitous of these algorithms is Decentralized Gradient Descent (DGD). Here the $\ell$\,th agent runs a step of gradient descent followed by an averaging step in which every agent replaces its parameter with the average of its neighbors \cite{nedic2009distributed}:
\bea\label{eq:dec_main}
w^{(t+1)}_\ell = \sum_{k\in\Nn_\ell} A_{\ell k} w_k^{(t)} - \eta_t\nabla \hat F_\ell (w_\ell^{(t)}). 
\eea
The superscripts signify the iteration number and $A_{\ell k}$ refers to the averaging weights used by agent $\ell$ for the parameter of agent $k\in\mathcal{N}_\ell$ where $\mathcal{N}_\ell$ is the set of neighbors of agent $\ell$. The global loss $\hat F$ is the average of local loss functions $\hat F_\ell,\, \ell\le N$, where each $\hat F_\ell$ is formed as the average empirical risk evaluated on the local training dataset $\mathcal{S}_\ell$ of the $\ell$\,th agent:
\bea\label{eq:localloss}
\hspace{-0.1cm}\hat F(w)= \frac{1}{N}\sum_{\ell=1}^N \hat F_\ell (w),\;\; \hat F_\ell(w)= \frac{1}{n_\ell} \sum_{x_j\in\mathcal{S}_\ell} f(w,x_j),
\eea
where $n_\ell$ denotes the dataset size of agent $\ell$. Convergence properties of the train loss $\hat F(\cdot)$ in DGD have been studied extensively in literature, e.g., \cite{nedic2009distributed,nedic2014distributed,yuan2016convergence,lian2017can,nedic2016stochastic}. The bulk of these studies build upon classical optimization theory \cite{nesterov2003introductory} suited for studying the train loss per iteration. In particular, it is well-stablished in the literature that DGD converges at the rate $\frac{1}{T}\sum_{t=1}^T \hat F(\bar w^{(t)})-\hat F^\star = O(\frac{1}{\sqrt{T}})$ for smooth convex functions \cite{nedic2014distributed}. Here $\bar w^{(t)}$ is the average of local parameters $w_\ell^{(t)}$. Our results in Sections \ref{sec:convex}-\ref{sec:LIC} show a rate of $\hat F(\bar w^{(T)}) = O(\frac{(\log T)^2}{T})$ and $\|W^{(T)} - \bar W^{(T)}\|_F^2=O(\frac{(\log T)^4}{T^2})$ for the training loss and consensus error of DGD over separable data with ``exponentially tailed'' losses.
% In Section \ref{sec:LIC}, we improve this bound to a last iterate convergence rate $\hat F(\bar w^{(T)}) =O((\log T)^2/T)$, for a subset of self-bounded gradient losses and under an additional assumption on self-boundedness of the Hessian.

The study of generalization performance of DGD algorithms in the literature is mostly limited to empirical observations e.g., \cite{jiang2017collaborative,wang2019slowmo,koloskova2019decentralized}, making the theory behind test error performance largely unexplored. Moreover, the traditional wisdom in convergence analysis of DGD algorithms assumes the existence of a finite norm minimizer, which is often the case for ERM with non-separable training data, e.g. \cite{koloskova2020unified}. However, modern machine learning models operate in over-parameterized settings where the model perfectly interpolates the training data, i.e., it achieves perfect accuracy on the training data \cite{zhang2021understanding}. Understanding the challenges imposed by over-parameterization and the behavior of gradient descent on separable data has been the subject of several recent works \cite{soudry2018implicit,ji2018risk,arora2019fine,nacson2019convergence,chizat2020implicit,shamir2021gradient,ji2021fast,ji2021characterizing,schliserman2022stability}. Yet, they are all focused on centralized GD, while here we study the impact of the consensus error of DGD on both training and generalization errors. 

% In particular, in the case of linearly separable data and by using ERM with decreasing losses such as logistic loss $f(w,x)=\log(1+\exp(-w^\top x))$ and exponential loss $f(w,x)=\exp(-w^\top x)$, it is shown by \cite{ji2018risk,soudry2018implicit} that at iteration $t$ of gradient descent $\|w^{(t)}\|_2$ grows at the rate $O(\log(t))$. 
%%%%%%%%%%%%%%%%%%%%%%%%%%%%%%%%%%%%%%%%%%%%%%%%%%

Our first goal is to complement prior general results on the convergence of training loss in DGD by considering specific, but commonly encountered, settings in ERM over separable data. This includes the analysis of non-smooth objectives such as the exponential loss, analysis of logistic regression in the separable regime where the optimum is achieved at infinity, and analysis of objectives satisfying the PL condition. The second goal is to study, for the first time in these settings, convergence rates of the DGD test loss $F(\bar w^{(t)}):=\E_{x\sim\mathcal{D}}[f(\bar w^{(t)},x)]$.  Finally, we leverage recent advances in the study of centralized learning with separable data to design fast algorithms for decentralized learning. We discuss our contributions below.

\paragraph{Contributions.} 
%Our contributions are summarized as follows.
%
In Sections \ref{sec:convex} and \ref{sec:polyak}, we derive convergence rates for the training and test loss of DGD over separable data. Our results hold for convex losses satisfying realizability and self-boundedness, as well as, convex losses satisfying self-boundedness and the PL condition. In Section \ref{sec:LIC}, we prove under additional self-boundedness assumptions on the Hessian and gradient, which hold for exponentially tailed losses, that the test loss bound can be improved to approximately match the test loss bounds of centralized GD. When specialized to decentralized logistic regression on separable data, our results provide the first generalization guarantees of DGD.
In Section \ref{sec:alg}, we propose two algorithms for speeding up the convergence of decentralized learning under separable data. Numerical experiments demonstrate that our proposed algorithms significantly improve both the train test error of decentralized logistic regression.

%%%%%%%%%%%%%%%%%%%%%%%%%%%%%%%%%%%%%%%%%%%%%%%%%%%%

\subsection{Further related works}
\paragraph{Decentralized learning.} Over the last few years there have been numerous research works which consider the convergence of first order methods for decentralized learning; an incomplete list includes \cite{nedic2009distributed,nedic2014distributed,yuan2016convergence,lian2017can,jiang2017collaborative,assran2018stochastic,pu2020push,koloskova2020unified,kovalev2020optimal,xin2021improved,toghani2022communication,toghani2022scalable}. While DGD is suboptimal for strongly-convex objectives \cite{nedic2014distributed,nedic2016stochastic}, alternative algorithms, namely EXTRA and Grading Tracking, for achieving exponential rate appeared in \cite{shi2015extra,nedic2017achieving} and were studied further in \cite{koloskova2021improved,xin2021improved}. More recently, \cite{lin2021quasi} proposes accelerated methods for improving generalization and training accuracy of decentralized algorithms; however, their study of generalization error is empirical. While this paper was nearing completion we became aware of the recent works \cite{sun2021stability,richards2020graph} which study the generalization bounds of decentralized methods for Lipschitz convex losses (see also \cite{richards2020decentralised,sun2022high}). However, we consider exponentially tailed losses under the separable data regime and prove faster convergence and generalization rates under these conditions. Compared to these works, we also propose improved algorithms for learning with separable data. 
Finally, we highlight that our rates on the train loss are comparable to \cite[Theorem 2]{koloskova2020unified}. While \cite{koloskova2020unified} also derives convergence of DGD train loss on separable data, their analysis is valid only for bounded optimizers. In contrast, we derive training loss bounds which are true for the case of unbounded optimizers as is the case for logistic regression over separable data.

\paragraph{Implicit bias of GD.} An early work on the behavior of ERM with vanishing regularization on separable data appeared in \cite{Rosset2003MarginML}. Closely related, a line of recent works \cite{soudry2018implicit,ji2018risk,nacson2019convergence,ji2020directional,ji2021characterizing,shamir2021gradient,schliserman2022stability} studies the parameter convergence, as well as training and test loss convergence, of gradient descent on separable data, showing that for (a class of) monotonic losses the solution to ERM and the max-margin solution are the same in direction., i.e., $\|\hat w^{(t)} - \hat w_{_{\rm MM}}\|\rightarrow 0$. Here $\hat w^{(t)}:={{w}^{(t)}}/{\| {w}^{(t)}\|}$ and $\hat w_{_{\rm MM}}:={{w}_{_{\rm MM}}}/{\|{w}_{_{\rm MM}}\|}$, where the vector ${w}_{_{\rm MM}}$ is the solution to the hard-margin support vector machine problem,
$$
w_{_{\rm MM}}:= \arg\min_{w\in \R^d} \|w\| \;\;\; \text{s.t.} \;\;\;\; y_i w^{\top}a_i \ge1,\;\;\;  \forall i \in  [n]. \vspace{-0.03in}
$$
%In over-parameterized settings, the training dataset most often can be perfectly interpolated i.e., the model can achieve zero training error after sufficient iterations of gradient descent.
% It is known \cite{Rosset2003MarginML} that in linearly separable datasets, the normalized weight vector in gradient descent (with vanishing ridge-regularization parameter) converges to the normalized max-margin vector  i.e., $\|{{w}^{(t)}}/{\| {w}^{(t)}\|} - {{w}_{_{\rm MM}}}/{\|{w}_{_{\rm MM}}\|}\|\rightarrow 0$, where the vector ${w}_{_{\rm MM}}$ is the solution to the hard-margin support vector machine (SVM) problem defined as following,
%$$
%w_{_{\rm MM}}:= \arg\min_{w\in \R^d} \|w\| \;\;\; \text{s.t.} \;\;\;\; y_i w^{\top}a_i \ge1,\;\;\;  \forall i\le n. 
%$$
Notably, \cite{ji2018risk,soudry2018implicit} characterized the rate of directional convergence to be $\|\hat w^{(T)} - \hat w_{_{\rm MM}}\| = O({1}/{\log (T)})$ and for the training loss to be $\hat F(w^{(T)}) =O(\frac{1}{\eta T})$. Recently, Shamir \cite{shamir2021gradient} and Schliserman and Koren \cite{schliserman2022stability} showed that the test loss of GD for logistic regression on linearly separable data satisfies $F(w^{(T)})= \tilde{O}(\frac{1}{\eta T}+\frac{1}{n})$ signifying that overfitting does not happen during the iterates of GD. In Section \ref{sec:LIC} (Remark \ref{rem:improved}), we show that the test loss of DGD with logistic regression on linearly separable data satisfies $\E[F(\bar w^{(T)})]  = \tilde{O}(  \frac{1}{\eta T} +  \frac{1 }{n}+ \eta^2)$, where the expectation is taken over training samples chosen i.i.d. from the dataset. As we explain, the term $\eta^2$ captures the impact of consensus error (i.e., decentralization) on the generalization rate. %This signifies that, under data separability, the rate of decay for the test loss is essentially identical for centralized and decentralized scenarios. \ct{I don't see that. There is an $\eta^2$ term in the decentralized bound you wrote.}

%This contrast between the rates signifies that the test error is undergoing slow changes (as the margin gap is slowly improving) even when the training loss is extremely small and the misclassification train error is exactly zero. 
\par
While directional convergence is significantly slow for gradient descent, following the update rule
$
w^{(t+1)} =w^{(t)} - \eta_t\frac{\nabla{\hat F}(w^{(t)})}{\|\nabla{\hat F}(w^{(t)})\|},
$
it can be improved to $1/\sqrt{t}$ with decaying $\eta_t$ at rate $1/\sqrt{t}$ for linear models \cite{nacson2019convergence}. Furthermore \cite{taheri2023NGD} proved improved training convergence of this algorithm for two-layer neural networks, suggesting the benefits extend to non-linear settings. These results apply to centralized optimization scenarios. However, in decentralized learning settings, the local loss functions are kept private and any information about the global loss, such as its gradient $\|\nabla \hat F(\bar w^{(t)})\|$ is hidden from the agents. In Section \ref{sec:alg}, we propose algorithms which address these challenges and extend the normalized GD update rule to decentralized learning scenarios. Furthermore, we prove the asymptotic convergence of normalized local parameters $w_i^{(t)}/\| w_i^{(t)}\|$ to the solution of centralized GD.  
\paragraph{Notation}
We use $\|\cdot\|$ to denote the $\ell_2$-norm of vectors and the operator norm of matrices. The Frobenius norm of a matrix $W$ is shown by $\|W\|_F$. The set $\{i\in\mathbb{N}:i\le N\}$ is denoted by $[N]$. The gradient and hessian of a function $F:\R^d\rightarrow\R$ are denoted by $\nabla F(\cdot)$ and $\nabla^2 F(\cdot)$, respectively. For functions $f,g:\R\rightarrow\R$, we write $f(t)=O(g(t))$ when $|f(t)|\le M g(t)$ after $t\ge t_0$ for positive constants $M,t_0$. Finally, we write $f(t)=\tilde{O}(g(t))$ when $f(t)=O(g(t)h(t))$ for a polylogarithmic function $h$. 
%%%%%%%%%%%%%%%%%%%%%%% MAIN RESULTS%%%%%%%%%%%%%%%%%%%%%

\section{MAIN RESULTS}\label{sec:main}
Throughout the paper we make the following standard assumption on the mixing matrix $A=[A_{ij}]_{N\times N}$ corresponding to the  underlying connected network. 
\begin{ass}[Mixing matrix]\label{ass:mixing}
The mixing matrix $A\in \R^{N\times N}$ is symmetric, doubly stochastic with bounded spectrum i.e., $|\lambda_i(A)| \in (0,1]$ and $\lambda_2(A)<1$. 
\end{ass}
First, we state a lemma which relates the generalization loss of DGD at iteration $t$ to its train loss and consensus error up to iteration $t$. The lemma is derived based on a stability analysis \cite{bousquet2002stability,hardt2016train,lei2020fine}. Specifically we use a self-boundedness and a realizability assumption \cite{schliserman2022stability} which makes the stability analysis feasible for settings such as logistic regression on separable data. Additionally, we assume convexity and $L$-smoothness of the loss function. Formally, we assume the following, where for simplicity, we use the short-hand $f_x(w):=f(w,x)$ for the loss incurred at a generic $x\in\Dc$ in the data distribution $\Dc$.% and $f_i(w):=f(w,x_i), i\le n$ for the loss at points in the training set.}% We require the following assumptions for all $x$.} 
%For simplicity, we denote $f_i(w):=f(w,x_i)$ for $i\le n$. 

\begin{ass}[Convexity]\label{ass:convex}
The loss functions $f_x:\R^d\rightarrow\R$ are convex and differentiable, satisfying, 
$
f_x(w) \leq f_x(v)+\langle\nabla f_x(w),w-v\rangle.
$
\end{ass}
\begin{ass}[Smoothness]\label{ass:smooth}
The loss functions $f_x:\R^d\rightarrow\R$ are $L$-smooth and differentiable, i.e. 
%\bea\nn
$f_x(w)\le f_x(v) + \langle\nabla f_x(v),w-v\rangle + \frac{L}{2} \|w-v\|^2.$
%\eea
\end{ass}
\begin{ass}[Self-boundedness of the gradient]\label{ass:self-bounded}
The loss functions $f_x:\R^d\rightarrow\R$ satisfy the self-boundedness property with the parameters {$c>0$ and $\alpha\in[\frac{1}{2},1]$, i.e.,}
%$(c,\alpha)$ , 
\bea\nn
\| \nabla f_x(w)\| \le c  \left(f_x(w)\right)^\alpha.
\eea
\end{ass}\vspace{-0.02in}
Assumption \ref{ass:self-bounded} is weaker than Assumption \ref{ass:smooth}, since an $L$-smooth non-negative function $f$ satisfies $\|\nabla f(w)\|^2 \le 2L (f(w)-f^\star)\leq 2L f(w)$, where $f^\star:=\inf_w f(w)\geq0$.  However, we make use of the smoothness property whenever it suits the analysis, particularly to bound  training loss. 
\par 
Additionally, we make the following assumptions: All local parameters are initiated at zero i.e, $w_\ell^{(1)}=0$ for all $\ell\le N$. We assume for simplicity of exposition, that each agent has access to $n/N$ ($n_\ell=n/N$) samples from the dataset. The general case can be treated with minor modifications. We also assume that $f_x(w)\ge 0$ for all $w$ and the minimum of each loss is zero i.e., $f_i^{\star}=0$. 
%\ct{I removed data separability assumption. I find it confusing and is anyways never explicitly used as written.}
%Throughout the paper, we assume that the training data is separable.
% \begin{ass}[Data separability]\label{ass:sep}
% The training loss $\hat F$ in Eq.\eqref{eq:localloss} satisfies the data separability condition i.e., $\hat F^\star=0$, where $\hat F^\star:=\inf_{w\in \R^d} \hat F(w).$
% \end{ass}

%However, we make use of the $L$-smoothness property mainly for bounding the training error. \red{TBD}
Before our key lemma, we introduce a few necessary notations. We define matrix $W^{(t)}\in\R^{N\times d}$ as the concatenation of all agents' parameters at iteration $t$, i.e., $W=[w_1^{(t)},\cdots,w_N^{(t)}]^\top$. We also denote by $\bar w^{(t)}:= \frac{1}{N}\sum_{\ell=1}^N w_\ell^{(t)}$  the average of local parameters, and denote by $\bar W ^{(t)}= [\bar w^{(t)},\cdots,\bar w^{(t)}]\in \R^{N\times d}$ its concatenated matrix. %similarly,  we define $\bar \w_i$ and $\bar W_i$, by leaving out the i'th sample.  

%We also recall that $F,\hat F$ show the global test loss and train loss. 
\begin{lemma}[Key lemma, Informal version]\label{lem:main}
 Let Assumptions \ref{ass:mixing}-\ref{ass:self-bounded} hold. Consider the iterates of decentralized gradient descent in Eq.\eqref{eq:dec_main} with a fixed positive step-size $\eta\le \frac{2}{L}$. Then, for the test loss $F$ at iteration $T\ge1$, it holds that
\bea
\E \left[F(\bar w^{(T)})\right]&\lesssim \;\;\E \left[\hat F( \bar w^{(T)})\right]\nn \\&+ \frac{\eta^2 L^2 c^2T^{2}}{n^{3-2\alpha}}\E\Big[(\frac{1}{T}\sum_{t=1}^{T}\hat F(\bar {w}^{(t)}))^{2\alpha}\Big] \nn\\&+\frac{\eta^2 L^4}{N} \E\Big[(\sum_{t=1}^{T}\|W^{(t)}-\bar W^{(t)}\|_F)^2\Big],\label{eq:lem}
\eea
where the expectation is over the training set of $n$ i.i.d samples.
\end{lemma}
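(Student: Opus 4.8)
The plan is to prove the bound through an algorithmic-stability argument tailored to the averaged iterate $\bar w^{(t)}$ of DGD. First I would use the standard symmetrization device: let $S=(z_1,\dots,z_n)$ and let $S^{(i)}$ be $S$ with $z_i$ replaced by an independent fresh copy $z_i'$. Because the samples are i.i.d., the expected generalization gap rewrites as $\E[F(\bar w^{(T)})-\hat F(\bar w^{(T)})]=\tfrac1n\sum_{i=1}^n\E[f_{z_i}(\bar w^{(T)}(S^{(i)}))-f_{z_i}(\bar w^{(T)}(S))]$, reducing the problem to controlling, for each $i$, how much the averaged iterate moves when a single training point is swapped. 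Writing $\Delta_i^{(t)}:=\bar w^{(t)}(S)-\bar w^{(t)}(S^{(i)})$, I would bound each loss difference with $L$-smoothness (Assumption \ref{ass:smooth}) as $f_{z_i}(\bar w(S^{(i)}))-f_{z_i}(\bar w(S))\le \|\nabla f_{z_i}(\bar w(S))\|\,\|\Delta_i^{(T)}\|+\tfrac L2\|\Delta_i^{(T)}\|^2$, and then bound the gradient via self-boundedness (Assumption \ref{ass:self-bounded}), $\|\nabla f_{z_i}\|\le c\,f_{z_i}^{\alpha}$. Realizability ($f^\star=0$) enters through an AM--GM split of the resulting first-order term, sending one part into the training-loss term $\hat F(\bar w^{(T)})$ with a controlled coefficient and the other into the second-order stability terms that appear on the right-hand side.

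The core of the argument is a recursion for $\Delta_i^{(t)}$. Since $A$ is doubly stochastic, $\tfrac1N\mathbf 1^\top A=\tfrac1N\mathbf 1^\top$, so the average obeys the inexact-gradient recursion $\bar w^{(t+1)}=\bar w^{(t)}-\tfrac\eta N\sum_{\ell}\nabla\hat F_\ell(w_\ell^{(t)})$. I would decompose the per-agent gradient difference driving $\Delta_i^{(t+1)}-\Delta_i^{(t)}$ into three pieces: (i) the full-gradient difference $\nabla\hat F_S(\bar w(S))-\nabla\hat F_S(\bar w(S^{(i)}))$ at the two averages; (ii) the single-sample perturbation $\nabla\hat F_S-\nabla\hat F_{S^{(i)}}=\tfrac1n(\nabla f_{z_i}-\nabla f_{z_i'})$; and (iii) the consensus gaps $\nabla\hat F_\ell(w_\ell)-\nabla\hat F_\ell(\bar w)$ for each run. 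For (i), convexity plus $L$-smoothness give co-coercivity, so the map $I-\tfrac\eta N\sum_\ell\nabla\hat F_\ell(\cdot)$ restricted to the averages is non-expansive precisely when $\eta\le 2/L$, and this piece does not inflate $\|\Delta_i^{(t)}\|$. Piece (ii) contributes $\tfrac{\eta}{n}\|\nabla f_{z_i}-\nabla f_{z_i'}\|\le\tfrac{\eta c}{n}\big(f_{z_i}^{\alpha}+f_{z_i'}^{\alpha}\big)$ per step, and piece (iii) contributes $\tfrac{\eta}{N}\sum_\ell\|\nabla\hat F_\ell(w_\ell)-\nabla\hat F_\ell(\bar w)\|\le \tfrac{\eta L}{\sqrt N}\|W^{(t)}-\bar W^{(t)}\|_F$ (using $L$-smoothness and Cauchy--Schwarz over agents) for each run. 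Unrolling from the common zero initialization $\Delta_i^{(1)}=0$ gives $\|\Delta_i^{(T)}\|\le \tfrac{\eta c}{n}\sum_{t}\big(f_{z_i}(\bar w^{(t)})^{\alpha}+f_{z_i'}(\bar w^{(t)})^{\alpha}\big)+\tfrac{2\eta L}{\sqrt N}\sum_t\|W^{(t)}-\bar W^{(t)}\|_F$.

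With the stability bound in hand, I would square and split $\|\Delta_i^{(T)}\|^2$ into a single-sample part and a consensus part. For the consensus part the factor $\tfrac1N(\sum_t\|W^{(t)}-\bar W^{(t)}\|_F)^2$ emerges immediately, and the explicit powers of $L$ collect from the smoothness steps used to pass from parameter gaps to loss gaps, producing the third term. For the single-sample part I would (a) apply Hölder in $t$, namely $\sum_t (\hat F^{(t)})^{\alpha}\le T^{1-\alpha}(\sum_t\hat F^{(t)})^{\alpha}$, so that $(\sum_t(\hat F^{(t)})^{\alpha})^2\le T^2(\tfrac1T\sum_t\hat F^{(t)})^{2\alpha}$, and (b) average over $i$: Jensen's inequality (concavity of $x^{\alpha}$, $\alpha\le1$) controls the $\tfrac1n\sum_i f_{z_i}^{\alpha}$ coming from the leading gradient factor, while the superadditivity inequality $\sum_i a_i^{2\alpha}\le(\sum_i a_i)^{2\alpha}$, valid for $2\alpha\ge1$, turns $\tfrac1n\sum_i f_{z_i}^{2\alpha}$ into $n^{2\alpha-1}\hat F^{2\alpha}$; combined with the two explicit $1/n$ factors (one from stability, one from the sample average) this yields exactly the $n^{3-2\alpha}$ denominator. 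Collecting constants produces the second term, and taking expectation over $S$ and $z_i'$ completes the bound.

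The step I expect to be the real obstacle is closing the recursion for the averaged iterate in the decentralized setting: unlike centralized GD, the average is driven by gradients evaluated at the \emph{local} parameters $w_\ell^{(t)}$, not at $\bar w^{(t)}$, so the non-expansiveness argument applies only after the three-way decomposition isolates a genuine full-gradient difference at the averages. Performing that decomposition while simultaneously tracking the two independent runs (on $S$ and $S^{(i)}$) and keeping the consensus contribution in the clean form $\|W^{(t)}-\bar W^{(t)}\|_F$ is the delicate part; the subsequent extraction of the precise $n$- and $T$-exponents via Hölder and the superadditivity inequality is then routine bookkeeping but must be carried out carefully to land on $n^{3-2\alpha}$ and $T^2(\tfrac1T\sum_t\hat F(\bar w^{(t)}))^{2\alpha}$.
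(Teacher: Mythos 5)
Your proposal follows essentially the same architecture as the paper's proof: a stability analysis of the averaged iterate $\bar w^{(t)}$ (which obeys the inexact-gradient recursion because $A$ is doubly stochastic), the same three-way decomposition into a non-expansive full-gradient step at the averages for $\eta\le 2/L$, an $O(\eta c f^\alpha/n)$ single-sample perturbation, and consensus corrections $\tfrac{\eta L}{\sqrt N}\|W^{(t)}-\bar W^{(t)}\|_F$ for each run; then unrolling from zero initialization, squaring, and the same Jensen/superadditivity bookkeeping over $i$ and $t$ to land on $n^{3-2\alpha}$ and $T^2\bigl(\tfrac1T\sum_t\hat F(\bar w^{(t)})\bigr)^{2\alpha}$. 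You deviate in two places: the paper uses \emph{leave-one-out} perturbations and invokes the stability-to-generalization reduction of Lei--Ying (quoted via Schliserman--Koren), namely $\E[F(w)]\le 4\E[\hat F(w)]+\tfrac{3L^2}{n}\sum_i\E\|w-w_{\neg i}\|^2$, as a black box, whereas you use \emph{replace-one} perturbations and reprove that reduction inline.

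The replace-one choice creates a genuine gap in your accounting. After your decomposition, the operator difference $\nabla\hat F_S-\nabla\hat F_{S^{(i)}}=\tfrac1n(\nabla f_{z_i}-\nabla f_{z_i'})$ is evaluated at a \emph{single} run's iterate, so your unrolled stability bound contains terms of the form $f_{z_i'}(\bar w^{(t)}(S))^\alpha$ (or symmetrically $f_{z_i}(\bar w^{(t)}(S^{(i)}))^\alpha$): the loss of a sample on the run that was \emph{not} trained on it. In expectation these are population-loss terms at intermediate iterates --- instances of the very quantity you are trying to bound --- and they do not average into $\hat F(\bar w^{(t)})$. Your step (b) (Jensen over $t$, superadditivity over $i$) only processes the $f_{z_i}$ terms and silently drops the $f_{z_i'}$ terms, so the argument as written does not close. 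The paper's leave-one-out construction avoids this issue entirely: there the perturbation per step is $\tfrac{\eta}{n}\|\nabla f(\bar w^{(t)},x_i)\|\le\tfrac{c\eta}{n}f(\bar w^{(t)},x_i)^\alpha$, evaluated at the full run's iterate on a sample that run trains on, so every surviving term is a training quantity. Your route is repairable: contract with the \emph{shared} operator $I-\eta\nabla\hat F_{S\setminus i}$, where $\hat F_{S\setminus i}:=\tfrac1n\sum_{i'\neq i}f_{z_{i'}}$ has smoothness parameter at most $L$ (so $\eta\le 2/L$ still suffices), so that each swapped sample's gradient is evaluated at the run that trains on it; then exchangeability of $(z_i,z_i')$ equates the expectation of the $S^{(i)}$-run terms with the corresponding $S$-run terms, after which superadditivity over $i$ applies to the latter. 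But this extra step is exactly where the delicacy lies, and it is absent from your writeup. A secondary, smaller issue: in your inline generalization reduction you bound the first-order term via Assumption \ref{ass:self-bounded} and AM--GM, which for $\alpha>1/2$ produces $f_{z_i}^{2\alpha}$ terms that do not absorb into $\hat F(\bar w^{(T)})$; use instead the smoothness-implied bound $\|\nabla f_{z_i}\|\le\sqrt{2Lf_{z_i}}$ for that step (as the lemma the paper cites effectively does), reserving Assumption \ref{ass:self-bounded} for inside the stability recursion where it determines the $n$- and $T$-exponents.
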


The precise statement and the proof of Lemma \ref{lem:main} are deferred to Appendix \ref{sec:lemma1}. Lemma \ref{lem:main} bounds the test loss with respect to the train loss and the consensus error. In the following sections, we show how Lemma \ref{lem:main} yields test loss bounds on DGD by establishing bounds on the train loss and consensus errors under different assumptions on the loss function.

It is worth remarking that Eq. \eqref{eq:lem} is in fact valid not only for DGD, but also for Decentralized Gradient Tracking (DGT). DGT is another popular algorithm for distributed learning that can accelerate train error convergence over DGD by modifying the update in Eq. \eqref{eq:dec_main} such that each agent keeps a running estimate of the global gradient \cite{nedic2017achieving}. The reason why \eqref{eq:lem} continues to hold for DGD is that
%This is merely because 
the proof of Lemma \ref{lem:main} only relies on the updates of the ``averaged'' parameter $\bar w^{(t)}:=\frac{1}{N}\sum_{\ell=1}^N w_\ell$ and that the update rule of $\bar w^{(t)}$ for both DGD and DGT is derived as $\bar w^{(t)} = \bar w^{(t-1)} -  \frac{\eta}{N}\sum_{\ell=1}^N \nabla \hat F_\ell(w_\ell^{(t-1)})$. Thus, starting with  Eq.\eqref{eq:lem} one can also obtain test loss bounds of DGT after replacing appropriate bounds of DGT for the training loss and consensus error. We leave this to future work.
%\ct{This is very interesting to me. You are saying that the stability bound is the same for DGT. Can't you then use some plug-in value for train/consensus error from previous works to get DGT test loss bound? That would be a great addition and something that they asked already at neurips.}
\par

%%%%%%%%%%%%%%%%%%%%%%%%%%%%%%%%%%%%%%%%%%%%%%%%%%%%%%%%%%%
%%%%%%%%%%%%%%%% Training Error/ DECENTRALIZED%%%%%%%%%%%%%%%%%%%%%%%%%%

%%%%%%%%%%%%%%%%%%%%%%%%%%%%% Lemma%%%%%%%%%%%%%%%%%%%%%%%%%%%

%\begin{lemma}
%for L-smooth, $\mu$-convex losses in a decentralized setting we have
%\bea
%\|\bar{W}_{t+1} - W^ *\|^2 \le (1-\eta \mu/2) \| \bar W^{(t)} -W^*\|^2 - \eta (\hat F(\bar W^{(t)}) - \hat F(W^*)) + 3\eta L/n  \| \bar W^{(t)} - W_{t}\|_F^2
%\eea

%\end{lemma}
%%%%%%%%%%%%%%%%%%%%%%%%%%%%% Lemma%%%%%%%%%%%%%%%%%%%%%%%%%%%

%\begin{lemma} Under the assumptions of Lemmas...., for a fixed step-size $\eta<\sqrt{\frac{1-\alpha_1}{2\alpha_2 L^3}}$, it holds that for any $w\in\R^d$
%\bea
%\frac{1}{T} \sum_{t=1}^{T-1} \hat F(\bar {w}^{(t)}) \le \frac{2\|w\|^2}{\eta T}+ 4\hat F(w)
%\eea
%\end{lemma}

%
%\red{Case 1:} $\eta=\eta = O(1)$
%
%In this case $\hat F(\bar w^{(T)})=O(1/T)$ and $\|W_T-\bar W_T\|_F = O(1/\sqrt{T})$. Thus the upper bound on generalization error becomes vacuous. 
%\vspace{.5in}
\subsection{Convergence with general convex losses}\label{sec:convex}
The upper-bound in Eq.\eqref{eq:lem} shows how the consensus error and train loss of DGD affect the test loss. 

The next lemma bounds the training loss and consensus error of DGD for general convex losses. The proof is deferred to Appendix \ref{sec:pf-lem2}
\begin{lemma}[Training bounds for convex losses] \label{lem:trainloss-convex} Under Assumptions \ref{ass:mixing}-\ref{ass:smooth},
%\ref{ass:self-bounded}, 
for any $w\in\R^d$ and for a fixed step-size $$\eta<\frac{1}{L}\min\left\{1-\alpha_1,\sqrt{\frac{1-\alpha_1}{2\alpha_2 }}\right\},$$ where $\alpha_1\in (3/4,1),\alpha_2>4$ are parameters that depend only on the mixing matrix, the train loss and consensus error of DGD \eqref{eq:dec_main} satisfy:
\bea\label{eq:trainloss_convex}
\frac{1}{T} \sum_{t=1}^{T} \hat F(\bar {w}^{(t)}) &\le \frac{2\|w\|^2}{\eta T}+ 4\hat F(w),\\
\frac{1}{NT}\sum_{t=1}^T \|W^{(t)}- \bar W^{(t)}\|_F^2 &\le \frac{\alpha_2\eta^2L^2}{1-\alpha_1}\big(\frac{2\|w\|^2}{\eta T}+ 4\hat F(w)\big).\nn
\eea
\end{lemma}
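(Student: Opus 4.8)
The plan is to track the two target quantities through the dynamics of the \emph{averaged} iterate $\bar w^{(t)}$ and to treat the average training loss and the average consensus error as a coupled pair of recursions that the step-size condition is designed to decouple. First I would record the averaged update: left-multiplying \eqref{eq:dec_main} by $\tfrac1N\oneb^\top$ and using that $A$ is doubly stochastic (Assumption \ref{ass:mixing}) gives
$\bar w^{(t+1)} = \bar w^{(t)} - \tfrac{\eta}{N}\sum_{\ell=1}^N \nabla \hat F_\ell(w_\ell^{(t)})$, with $\bar w^{(1)}=0$. This is the only update needed for the loss bound and is exactly the reason the argument is insensitive to DGT versus DGD.

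Next I would derive a \emph{descent inequality} for $\|\bar w^{(t)}-w\|^2$. Expanding the square along the averaged update produces a cross term $-\tfrac{2\eta}{N}\sum_\ell\langle\nabla\hat F_\ell(w_\ell^{(t)}),\bar w^{(t)}-w\rangle$ and a squared-gradient term. Splitting $\bar w^{(t)}-w=(w_\ell^{(t)}-w)+(\bar w^{(t)}-w_\ell^{(t)})$ and invoking convexity (Assumption \ref{ass:convex}) bounds the first piece below by $\hat F_\ell(w_\ell^{(t)})-\hat F_\ell(w)$; smoothness (Assumption \ref{ass:smooth}) applied at $\bar w^{(t)}$ lets me trade the per-agent loss $\tfrac1N\sum_\ell\hat F_\ell(w_\ell^{(t)})$ for $\hat F(\bar w^{(t)})$ at the price of a consensus penalty $\tfrac{L}{2N}\|W^{(t)}-\bar W^{(t)}\|_F^2$. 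For the squared-gradient term I would use $\|\nabla\hat F_\ell(w_\ell^{(t)})\|^2\le 2L\hat F_\ell(w_\ell^{(t)})$ (smoothness plus $f_x\ge0$), again converting to $\hat F(\bar w^{(t)})$ plus a consensus penalty. Collecting terms and using $\eta\le 2/L$ yields a recursion of the schematic form $\|\bar w^{(t+1)}-w\|^2 \le \|\bar w^{(t)}-w\|^2 - c_1\eta\,\hat F(\bar w^{(t)}) + c_2\eta\,\hat F(w) + \tfrac{c_3 L}{N}\|W^{(t)}-\bar W^{(t)}\|_F^2$ with explicit constants.

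In parallel I would set up the \emph{consensus recursion}. Writing $J:=\tfrac1N\oneb\oneb^\top$, using $JA=J$ (symmetry and double stochasticity) gives $W^{(t+1)}-\bar W^{(t+1)} = (A-J)(W^{(t)}-\bar W^{(t)}) - \eta(I-J)G^{(t)}$, where $G^{(t)}$ stacks the local gradients. Taking Frobenius norms with $\rho:=\|A-J\|=\max_{i\ge2}|\lambda_i(A)|<1$ (Assumption \ref{ass:mixing}) yields $\|W^{(t+1)}-\bar W^{(t+1)}\|_F\le \rho\|W^{(t)}-\bar W^{(t)}\|_F+\eta\|G^{(t)}\|_F$. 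Squaring via Young's inequality, summing over $t$ with $W^{(1)}=\bar W^{(1)}=0$, and summing the resulting geometric factor gives $\sum_t\|W^{(t)}-\bar W^{(t)}\|_F^2 \lesssim \tfrac{\eta^2}{(1-\rho)^2}\sum_t\|G^{(t)}\|_F^2$. The driving term is then controlled by $\|G^{(t)}\|_F^2\le 2L\sum_\ell\hat F_\ell(w_\ell^{(t)})$, which — expanding each $\hat F_\ell$ about $\bar w^{(t)}$ by smoothness and applying Young's inequality — is bounded by a multiple of $N\hat F(\bar w^{(t)})$ plus a multiple of $L\|W^{(t)}-\bar W^{(t)}\|_F^2$.

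Finally I would \emph{combine}. Summing the descent inequality over $t=1,\dots,T$ telescopes the $\|\bar w^{(t)}-w\|^2$ terms (leaving $\|w\|^2$ since $\bar w^{(1)}=0$), and substituting the consensus bound feeds a term proportional to $\tfrac{\eta^2L^2}{(1-\rho)^2}\sum_t\hat F(\bar w^{(t)})$, together with a self-consensus term, back into the right-hand side. The main obstacle is precisely closing this feedback loop: both target quantities are driven by the gradient norms, which self-boundedness/smoothness tie to the local losses, which in turn depend on \emph{both} $\hat F(\bar w^{(t)})$ and the consensus error. Identifying $\alpha_1$ and $\alpha_2$ with explicit functions of $\rho$, the stated condition $\eta<\tfrac1L\min\{1-\alpha_1,\sqrt{(1-\alpha_1)/2\alpha_2}\}$ (with $\alpha_1\in(3/4,1)$, $\alpha_2>4$) is exactly what keeps the induced coefficient on $\tfrac1T\sum_t\hat F(\bar w^{(t)})$ strictly positive after all induced loss terms are moved to the left, and makes the consensus self-term absorbable. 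Dividing through then yields $\tfrac1T\sum_t\hat F(\bar w^{(t)})\le \tfrac{2\|w\|^2}{\eta T}+4\hat F(w)$, and re-inserting this into the consensus sum gives the second claimed bound.
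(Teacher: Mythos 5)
Your proposal is correct and follows essentially the same route as the paper's proof: a descent inequality for $\|\bar w^{(t)}-w\|^2$ combining convexity, smoothness, and a consensus penalty (the paper's Lemma~\ref{lem:conv-bnd}), coupled with a spectral-contraction recursion for $\|W^{(t)}-\bar W^{(t)}\|_F^2$ driven by loss-bounded gradient norms (Lemmas~\ref{lem:con}--\ref{lem:con_ave}), then closed by telescoping from zero initialization and absorbing the fed-back loss term via the step-size condition. The only differences are bookkeeping: the paper absorbs the consensus self-term per step into the contraction factor $\alpha_1$ via a Young parameter and bounds the gradient driver by adding/subtracting $\nabla \hat F(\bar W^{(t)})$, whereas you absorb the summed self-term at the end and pass through the local losses $\hat F_\ell(w_\ell^{(t)})$ first--both yield the same constants up to reparameterizing $\lambda=\rho^2$.
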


To bound the training loss for functions $f(\cdot)$ where the optimum is attained at infinity we need a realizability assumption. In particular, we choose $w\in\R^d$ (in Lemma \ref{lem:trainloss-convex}) using the following.

\begin{ass}[Realizability]\label{ass:realizable}
The loss functions $f_x:\R^d\rightarrow\R$ satisfy the realizability condition, i.e. $\exists$ decreasing function $\rho:\R_+\rightarrow\R_+$ such that for every $\eps>0$ there exists $\hat w\in \R^d$ with $\|\hat w\|\le \rho(\eps)$ that satisfies $f_x(\hat w)\le\eps$.
\end{ass}

The set of Assumptions \ref{ass:convex}-\ref{ass:realizable} covers classification over linearly separable data with logistic loss, in addition to losses with other exponential-type tails $\exp(-w^r)$ and polynomial tail $w^{-r}$, for $r>0$. 

\begin{remark}[Training loss of DGD on separable data]\label{rem:trainloss}The realizability assumption as stated appeared recently in \cite{schliserman2022stability} (and was implicitly used in \cite{ji2018risk,shamir2021gradient}). 
%as the required assumption for bounding the training error under separable data. 
It can be checked that for linearly separable training data with margin $\gamma$, loss functions with an exponential tail such as logistic loss satisfy this assumption with $\rho(\eps)= \frac{1}{\gamma}\log(\frac{1}{\eps})$ (e.g., see Proposition \ref{prop:realizable} and \cite[Lemma 4]{schliserman2022stability}). Based on Lemma \ref{lem:trainloss-convex}, this leads to the following bound for DGD training loss for all $\eps>0$,
\bea\label{eq:trainlosscvx}
\frac{1}{T} \sum_{t=1}^{T} \hat F(\bar {w}^{(t)}) \le \frac{2\log(1/\eps)^2}{\gamma^2\eta T}+ 4\eps.
\eea
In particular, choosing $\eps=1/T$, gives a rate of $O(\frac{(\log\,T)^2}{\eta T})$, surprisingly matching up to logarithmic factors the corresponding rate for centralized GD in \cite[Theorem 1.1]{ji2018risk}. 
\end{remark}
%\begin{defn}
%The global loss $\hat F$ (Eq. \eqref{eq:logreg}) is in the interpolating regime if every stationary point of $\hat F$ is a stationary point of $f_i$ for all $i\le n$.
%\end{defn}

\begin{remark}\label{rem:distribution}
The bounds of Lemma \ref{lem:trainloss-convex} are true for any dataset $\{x_i\}_{i\in[n]}$ provided that Assumptions \ref{ass:convex} and \ref{ass:smooth} hold for all $f_x=f_{x_i}=f(w,x_i):=f_i(w), i\in[n]$. Similarly, \eqref{eq:trainlosscvx} holds provided Assumption \ref{ass:realizable} is true over the training set (i.e. provided the training dataset is separable). However, bounding the test loss in Lemma \ref{lem:main}, requires bounding the \emph{expectation over all datasets} of the train/consensus errors. This is guaranteed by  Assumptions \ref{ass:convex}-\ref{ass:realizable} as they hold for any point $x$ in the distribution. 
\end{remark}

\begin{theorem}[Test loss with convex losses]\label{thm:testloss_cvx}
Under Assumptions \ref{ass:mixing}-\ref{ass:realizable}, by choosing $$\eta<\frac{1}{L\sqrt{T}}\min\left\{{1-\alpha_1},\sqrt{\frac{1-\alpha_1}{2\alpha_2}}\right\}$$ where $\alpha_1\in (3/4,1),\alpha_2>4$ are parameters that depend only on the mixing matrix and assuming $\eps\le\frac{\rho(\eps)^2}{\eta T}$, the test error of DGD for iteration $T\geq 1$ satisfies:
 \bea 
 \frac{1}{T}\sum_{t=1}^T\;&\E\left[F(\bar w^{(t)})\right]  =\nn\\ &O \Big( \frac{\rho(\eps)^2}{\sqrt{T}} +  \frac{L^2 c^2 \rho(\eps)^{4\alpha} }{n^{3-2\alpha}} T^{1-\alpha}
 + \frac{L^4\rho(\eps)^2}{\sqrt{T}}
 \Big),\label{eq:testlosscvx}
\eea
where the expectation is over the training set of $n$ i.i.d samples.
\end{theorem}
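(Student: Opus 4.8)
The plan is to combine the three ingredients already assembled: the stability bound of Lemma~\ref{lem:main}, the training and consensus bounds of Lemma~\ref{lem:trainloss-convex}, and the realizability-based train-loss estimate \eqref{eq:trainlosscvx}. First I would invoke Lemma~\ref{lem:main} not only at $t=T$ but at every iteration $t \le T$ and average the resulting inequalities over $t$, so that the left-hand side becomes exactly $\frac1T\sum_{t=1}^T\E[F(\bar w^{(t)})]$. Before doing so I must check that the chosen step-size is admissible: since $\eta < \frac{1}{L\sqrt T}\min\{1-\alpha_1,\sqrt{(1-\alpha_1)/(2\alpha_2)}\}$ and $T\ge 1$, it is strictly smaller than the thresholds required by both Lemma~\ref{lem:main} ($\eta\le 2/L$) and Lemma~\ref{lem:trainloss-convex}, so all three results apply. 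Throughout I would write $\eta = \Theta(1/(L\sqrt T))$ and treat the mixing constants $\alpha_1,\alpha_2$ as fixed.

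For the first (train-loss) term, averaging gives $\frac1T\sum_{t=1}^T\E[\hat F(\bar w^{(t)})] = \E[\frac1T\sum_{t=1}^T\hat F(\bar w^{(t)})]$. I would apply Lemma~\ref{lem:trainloss-convex} with $w=\hat w$, the realizing vector of Assumption~\ref{ass:realizable} taken over the data distribution (Remark~\ref{rem:distribution}), so that $\|\hat w\|\le\rho(\eps)$ and $\hat F(\hat w)\le\eps$ deterministically; this yields $\frac1T\sum_{t=1}^T\hat F(\bar w^{(t)}) \le \frac{2\rho(\eps)^2}{\eta T}+4\eps$. Using the hypothesis $\eps\le\rho(\eps)^2/(\eta T)$ and $\eta=\Theta(1/(L\sqrt T))$, this term is $O(\rho(\eps)^2/\sqrt T)$, matching the first term of \eqref{eq:testlosscvx}. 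For the second term, the quantity $\frac1t\sum_{s=1}^t\hat F(\bar w^{(s)})$ is bounded by $\frac{2\rho(\eps)^2}{\eta t}+4\eps$; substituting this into $(\cdot)^{2\alpha}$ and using $\alpha\in[\tfrac12,1]$, the summand $\frac{\eta^2L^2c^2t^2}{n^{3-2\alpha}}(\cdots)^{2\alpha}$ is increasing in $t$, so its $t$-average is at most its value at $t=T$. The decisive algebra is then $\eta^2 T^2\big(\rho(\eps)^2/(\eta T)\big)^{2\alpha}=\eta^{2-2\alpha}T^{2-2\alpha}\rho(\eps)^{4\alpha}$, and since $\eta^{2-2\alpha}=\Theta\big(L^{-(2-2\alpha)}T^{-(1-\alpha)}\big)$ the powers of $T$ collapse to $T^{1-\alpha}$, giving $O\big(\tfrac{L^2c^2\rho(\eps)^{4\alpha}}{n^{3-2\alpha}}T^{1-\alpha}\big)$ after bounding $L^{2\alpha}\le L^2$.

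For the third (consensus) term I would first pass from the squared-sum controlled by Lemma~\ref{lem:trainloss-convex} to the sum-squared appearing in \eqref{eq:lem} via Cauchy--Schwarz, $\big(\sum_{s=1}^t\|W^{(s)}-\bar W^{(s)}\|_F\big)^2\le t\sum_{s=1}^t\|W^{(s)}-\bar W^{(s)}\|_F^2$, and then insert $\sum_{s=1}^t\|W^{(s)}-\bar W^{(s)}\|_F^2\le Nt\,\frac{\alpha_2\eta^2L^2}{1-\alpha_1}\big(\frac{2\rho(\eps)^2}{\eta t}+4\eps\big)$. After cancelling the factor $N$, taking the $t=T$ value (again by monotonicity), and plugging in $\eta=\Theta(1/(L\sqrt T))$, this collapses to $O(L^4\rho(\eps)^2/\sqrt T)$, matching the third term; summing the three contributions yields \eqref{eq:testlosscvx}. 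The main obstacle is the bookkeeping of the coupled scalings: one has to verify that the deliberately small step-size $\eta\sim 1/\sqrt T$ is precisely what turns the naively growing factors $\eta^2T^2$ in \eqref{eq:lem} into the sublinear rates $T^{1-\alpha}$ and $T^{-1/2}$, and that the realizability vector can be chosen uniformly over the distribution so that the train/consensus bounds survive the outer expectation over the i.i.d. sample.
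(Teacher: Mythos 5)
Your proposal is correct and follows essentially the same route as the paper's own proof: average the key stability lemma (Lemma~\ref{lem:main}) over $t\le T$, substitute the train-loss and consensus bounds of Lemma~\ref{lem:trainloss-convex} evaluated at the realizability vector $\hat w$ with $\|\hat w\|\le\rho(\eps)$, $\hat F(\hat w)\le\eps$, pass from sum-squared to squared-sum by Cauchy--Schwarz, and collapse the exponents via $\eps\le\rho(\eps)^2/(\eta T)$ and $\eta=\Theta(1/(L\sqrt T))$. In fact you supply more bookkeeping than the paper's terse appendix argument (e.g.\ the monotonicity-in-$t$ step justifying replacement of the time-average by its $t=T$ value), and your exponent accounting for the consensus term is consistent with the theorem as stated in the main text.
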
 
\begin{remark}[DGD with logistic regression never overfits]\label{rem:logr}
The proof of Theorem \ref{thm:testloss_cvx} is delayed to Appendix \ref{sec:pf-thm3}. As in Remark \ref{rem:trainloss}, we take logistic regression on separable data with margin $\gamma>0$ as our case study. For logistic regression (as well as other loss functions with an exponential tail), it can be verified that the self-boundedness assumption holds with $\alpha=1$. Similar to Remark \ref{rem:trainloss} it holds that $\rho(\eps)=\frac{1}{\gamma}\log(\frac{1}{\eps})$, thus choosing $\eps=1/\sqrt{T}$ results in a test loss rate $\tilde{O}({\frac{1}{\sqrt{T}}  + \frac{1}{n}})$ by Eq.\eqref{eq:testlosscvx}. This indicates that the upper-bound decreases at a  rate of $\tilde{O}(\frac{1}{\sqrt{T}})$ until after $T=n^2 \cdot(\max({\frac{1}{L c}},\frac{L}{c}))^4$ iterations where the upper bound essentially reduces to $\tilde{\mathcal O} (\frac{L^2 c^2}{n})$. Additionally, the fact that the upper-bound is decreasing proves that with appropriate choice of step-size, overfitting never happens along the path of DGD at any iteration.
\end{remark}
\par
\begin{remark}[Log factors]\label{rem:Log} The attentive reader will have recognized in Remarks \ref{rem:trainloss} and \ref{rem:logr} that due to the ``$\rho(\eps)=\Oc(\log(T))$'' factor, the upper bound on the test loss in Eq. \eqref{eq:testlosscvx} increases (very) slowly with $\log^4(T)$. Note that this term becomes dominant only  when $T$ is exponentially large with respect to the sample size $n$ and the margin $\gamma$. Our experiments in Sec. \ref{sec:furtherex} confirm this slow logarithmic increase late in the training phase. Analogous behavior, but  for centralized GD training, are discussed in \cite{soudry2018implicit,schliserman2022stability}. 
\end{remark}

%\begin{remark}[Early stopping]
%When $\alpha=1/2$ i.e, the case where the boundedness and smoothness assumption are equivalent, assuming $T$ is large enough such that $\eps\le\frac{\rho(\eps)^2}{\sqrt{T}}$ the test loss reduces to $\tilde{O}({\frac{1}{\sqrt{T}} + \frac{L^4}{N\sqrt{T}} + \frac{L^2 c^2}{n^2}\sqrt{T}})$
%\end{remark}

%%%%%%%%%%%%%%%%%%%%%%%%%%%%%%%%%%%%%%%%%%%%%%%%%%%%%%%%
 \subsection{On the convergence of DGD with exponentially-tailed losses}\label{sec:LIC}
 In this section, we show that our guarantees can be improved for exponentially tailed losses.  First, we note that the bounds in Lemma \ref{lem:trainloss-convex} and Theorem \ref{thm:testloss_cvx} hold for the average loss across iterations $t\le T$. 
 %By convexity of $f$ the bounds can be equivalently written for $\hat F(\frac{1}{T}\sum_{t=1}^T \bar w^{(t)})$ and $F(\frac{1}{T}\sum_{t=1}^T \bar w^{(t)})$, making the bounds valid only for the time average of $\bar w^{(t)}$. 
 It is straight-forward to see that if DGD is a descent algorithm i.e., $\hat F(\bar w^{(t+1)}) \le \hat F(\bar w^{(t)}) $ for all $t\le T$, then $\hat F(\bar w^{(T)})\le \frac{1}{T} \sum_{t=1}^{T} \hat F(\bar {w}^{(t)})$; thus implying that the upper-bounds on training and test loss hold for the last iterate of DGD. We will prove that DGD is indeed a ``descent algorithm'' for a class of convex losses which include popular choices such as the logistic loss and even non-smooth choices including the exponential loss. Moreover, we show that the consensus error of Lemma \ref{lem:trainloss-convex} as well as the test loss bounds of Theorem \ref{thm:testloss_cvx} can be improved compared to the results of the previous section. 
 
 In particular, we use the following assumptions together with the self-boundedness gradient assumption (Assumption \ref{ass:self-bounded}) with $\alpha=1$ as well as the convexity assumption. 
 \begin{ass}[Self-bounded Hessian]\label{ass:laplace}
The local losses $\hat F_\ell:\R^d\rightarrow\R$ satisfy the following for the Hessian matrices $\nabla ^2 \hat F_\ell$ and a positive constant $h$,
\bea\nn
\| \nabla^2 \hat F_\ell(w)\| \le h\, \hat F_\ell(w).
\eea
\end{ass}

\begin{ass}[Self-lowerbounded gradient]\label{ass:8}
The global loss satisfies for a constant $\tau$ that $$\|\nabla \hat F (w)\|\ge\tau \hat F(w).$$
\end{ass}

 Assumptions \ref{ass:convex}, \ref{ass:self-bounded}, \ref{ass:laplace} and \ref{ass:8} include linear classification with non-smooth losses such as the exponential loss, losses with super-exponential tails ($\exp(-x^r), r>1$) and the logistic loss; e.g., see Proposition \ref{propo:exp} in the appendix. 

\begin{theorem}[Last iterate convergence of DGD]\label{lem:exp_dsc}
Consider DGD with the loss functions and mixing matrix satisfying Assumptions \ref{ass:mixing},\ref{ass:convex},\ref{ass:laplace},\ref{ass:8}  and Assumption \ref{ass:self-bounded} with $\alpha=1$ and $c=h$. Assume that the step-size satisfies $\eta<\frac{\delta}{\hat F(1)}$, for a constant $\delta$ depending only on the mixing matrix and on $\tau, h$, then DGD is a descent algorithm i.e, for all $t\ge1$ it holds that $\hat F(\bar w^{(t+1)})\le \hat F(\bar w^{(t)})$. Moreover, the train loss and the consensus error of DGD at iteration $T$ satisfy the following for all $w\in \R^d$, \bea
\hat F(\bar w^{(T)}) &\le 4\hat F(w) + \frac{2\|w\|^2}{\eta T},\nn\\
\left\|W^{(T)} - \bar W^{(T)}\right\|_F^2 &= O \left(  h^2\eta^2 \hat F^2(w) + \frac{h^2\|w\|^4}{T^2}\right).\label{eq:consensus_improved}
\eea
\end{theorem}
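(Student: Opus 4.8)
The plan is to establish all three claims through a single induction on $t$ that \emph{simultaneously} maintains (a) the descent inequality $\hat F(\bar w^{(t+1)})\le \hat F(\bar w^{(t)})$ and (b) a consensus estimate of the form $\|W^{(t)}-\bar W^{(t)}\|_F \lesssim \eta h\sqrt{N}\,\hat F(\bar w^{(t)})$. The base case is immediate: since $w_\ell^{(1)}=0$ for all $\ell$, we have $\bar w^{(1)}=0$, the consensus error vanishes, and every later loss will be controlled by the initial value $F_1:=\hat F(\bar w^{(1)})$ once descent is in force. Throughout I would work with the averaged recursion $\bar w^{(t+1)}=\bar w^{(t)}-\eta g^{(t)}$, where $g^{(t)}:=\tfrac1N\sum_\ell \nabla\hat F_\ell(w_\ell^{(t)})$, and split $g^{(t)}=\nabla\hat F(\bar w^{(t)})+e^{(t)}$ with $e^{(t)}:=\tfrac1N\sum_\ell\big(\nabla\hat F_\ell(w_\ell^{(t)})-\nabla\hat F_\ell(\bar w^{(t)})\big)$ the consensus-induced gradient error.

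For the descent step, assuming (a)--(b) up to time $t$, the iterates keep the loss below $F_1$, so Assumption~\ref{ass:laplace} yields the effective curvature bound $\|\nabla^2\hat F(\xi)\|\le h\hat F(\xi)\le 2h\hat F(\bar w^{(t)})$ on the segment $[\bar w^{(t)},\bar w^{(t+1)}]$; the factor $2$ comes from a log-Lipschitz (Grönwall-type) estimate using $\|\nabla\hat F\|\le h\hat F$ from Assumption~\ref{ass:self-bounded} with $\alpha=1,c=h$, valid since $\eta\lesssim 1/(hF_1)$. A second-order Taylor expansion then gives $\hat F(\bar w^{(t+1)})\le \hat F(\bar w^{(t)})-\eta\|\nabla\hat F(\bar w^{(t)})\|^2+\eta\|\nabla\hat F(\bar w^{(t)})\|\,\|e^{(t)}\|+2\eta^2 h\,\hat F(\bar w^{(t)})\big(\|\nabla\hat F(\bar w^{(t)})\|^2+\|e^{(t)}\|^2\big)$. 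Bounding $\|e^{(t)}\|$ via the Hessian bound and the inductive consensus estimate gives $\|e^{(t)}\|\lesssim \eta h^2\hat F(\bar w^{(t)})^2$, so every error term is $O(\eta^2)$ times a higher power of the loss, while the main term satisfies $\eta\|\nabla\hat F(\bar w^{(t)})\|^2\ge \eta\tau^2\hat F(\bar w^{(t)})^2$ by Assumption~\ref{ass:8}. Choosing $\eta<\delta/F_1$ with $\delta$ a constant in $\tau,h$ and the spectral gap then makes the negative term dominate, closing the induction. I expect this balancing to be the \textbf{main obstacle}: reconciling the quadratic curvature and consensus-error terms against a gradient that Assumption~\ref{ass:8} only lower-bounds by $\tau\hat F$, and doing so in a genuinely circular setting where the consensus bound feeds the descent argument and vice versa, which is exactly why both must be propagated jointly.

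For the train-loss bound, once descent holds the trajectory stays in the region where the losses are effectively $L$-smooth with $L=hF_1$ (again by Assumption~\ref{ass:laplace}), so the averaged estimate of Lemma~\ref{lem:trainloss-convex} applies with this $L$, giving $\tfrac1T\sum_{t\le T}\hat F(\bar w^{(t)})\le 4\hat F(w)+2\|w\|^2/(\eta T)$. Because the sequence is non-increasing, the last iterate is bounded by the running average, which is precisely the claimed $\hat F(\bar w^{(T)})\le 4\hat F(w)+2\|w\|^2/(\eta T)$.

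For the consensus bound, I would iterate the mean-subtracted update $W^{(t+1)}-\bar W^{(t+1)}=(A-\tfrac1N\oneb\oneb^\top)(W^{(t)}-\bar W^{(t)})-\eta(\Id-\tfrac1N\oneb\oneb^\top)G^{(t)}$, where $G^{(t)}$ stacks the local gradients $\nabla\hat F_\ell(w_\ell^{(t)})$. With contraction factor $\lambda:=\|A-\tfrac1N\oneb\oneb^\top\|<1$ from Assumption~\ref{ass:mixing}, this unrolls (from $W^{(1)}=\bar W^{(1)}=0$) to $\|W^{(T)}-\bar W^{(T)}\|_F\le \eta\sum_{s<T}\lambda^{T-1-s}\|G^{(s)}\|_F$. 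Using Assumption~\ref{ass:self-bounded} together with the consensus and descent control to compare the local losses $\hat F_\ell(w_\ell^{(s)})$ against $\hat F(\bar w^{(s)})$ gives $\|G^{(s)}\|_F\lesssim h\sqrt{N}\,\hat F(\bar w^{(s)})$, and substituting the last-iterate bound $\hat F(\bar w^{(s)})\le 4\hat F(w)+2\|w\|^2/(\eta s)$ leaves a geometric sum. The constant part contributes $O(\eta h\hat F(w))$ and the $1/s$ part contributes $O(h\|w\|^2/T)$, since the weights $\lambda^{T-1-s}$ concentrate near $s=T$ where $1/s\approx 1/T$. Squaring and applying AM--GM then yields $\|W^{(T)}-\bar W^{(T)}\|_F^2=O\big(h^2\eta^2\hat F^2(w)+h^2\|w\|^4/T^2\big)$, with the $N$ and $1/(1-\lambda)$ dependence absorbed into the $O(\cdot)$.
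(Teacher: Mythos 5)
Your architecture tracks the paper's quite closely: descent plus consensus controlled jointly by induction, a regret-type bound for the averaged iterate, monotonicity converting the average bound into the last-iterate bound, and the final consensus estimate obtained by unrolling the mean-subtracted recursion with geometric weights and substituting the train-loss bound (this last computation is exactly the paper's Lemma~\ref{lem:con_acc_lem15} followed by the substitution in Eq.~\eqref{eq:thm18-con}). Your Gr\"onwall device --- using $\|\nabla \hat F\|\le h\hat F$ to bound the multiplicative change of the loss along short segments --- is a legitimate and arguably cleaner substitute for two of the paper's ingredients: the sandwich Lemma~\ref{lem:mt} ($\tfrac12 \hat F(\bar w^{(T)})\le \hat F(W^{(T)})\le 2\hat F(\bar w^{(T)})$, proved there by Taylor expansion plus a self-consistent absorption) and the contradiction structure of the descent Lemma~\ref{lem:descent_log}, which handles the term $\max\{\hat F(\bar w^{(T)}),\hat F(\bar w^{(T+1)})\}$ via the convexity endpoint-max trick. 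Your balancing of the error terms against $\eta\|\nabla\hat F(\bar w^{(t)})\|^2\ge \eta\tau^2 \hat F(\bar w^{(t)})^2$ (Assumption~\ref{ass:8}) also reproduces the paper's step-size condition $\delta\lesssim \tau^2/h^3$ up to network factors.

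There is, however, a concrete gap: your inductive invariant (b), $\|W^{(t)}-\bar W^{(t)}\|_F \lesssim \eta h\sqrt{N}\,\hat F(\bar w^{(t)})$, does not close as stated, because descent works \emph{against} it. The recursion gives $\|W^{(t+1)}-\bar W^{(t+1)}\|_F \le \lambda \|W^{(t)}-\bar W^{(t)}\|_F + \eta\|G^{(t)}\|_F$ with every term on the right proportional to $\hat F(\bar w^{(t)})$, whereas the target on the left is proportional to $\hat F(\bar w^{(t+1)})\le \hat F(\bar w^{(t)})$. Writing $r_t:=\hat F(\bar w^{(t+1)})/\hat F(\bar w^{(t)})$, closing the induction with constant $C$ requires $C(r_t-\lambda)\gtrsim 1$, which is impossible whenever $r_t\le\lambda$; you would need a \emph{reverse}-descent estimate $r_t\ge e^{-O(\delta h^2)}$ (available from the same Gr\"onwall argument, but nowhere in your plan) together with a spectral-gap-dependent restriction $\delta < \ln(1/\lambda)/O(h^2)$. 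The paper sidesteps this entirely by never asserting a per-step multiplicative consensus bound: it keeps the unrolled form $\beta_2 h^2 N^2\sum_{t}\beta_1^{t-1}\eta^2\hat F^2(\bar w^{(T-t)})$ and only ever needs the \emph{uniform} smallness $\eta\,\hat F(\bar w^{(s)})\le\delta$ for all $s$, which descent supplies for free (cf.\ Remark~\ref{rem:Mt}); note your descent step also survives with this weaker control, since $\|e^{(T)}\|\lesssim h\,M_{(T)}\sqrt{N}\,\|W^{(T)}-\bar W^{(T)}\|_F \lesssim \delta h^2 N^{3/2}\hat F(\bar w^{(T)})$, so invariant (b) is not actually needed anywhere --- your final consensus bound uses the unrolled sum, not (b). A secondary soft spot: you cannot cite Lemma~\ref{lem:trainloss-convex} as a black box with ``effective $L=hF_1$,'' since that lemma is proved under \emph{global} smoothness while the theorem explicitly covers the non-smooth exponential loss; the paper instead re-derives the regret inequality from scratch using $\|\bar\nabla\hat F(W^{(t)})\|\le h\hat F(W^{(t)})$ and the consensus-corrected inner-product bound, absorbing errors via $\|W^{(t)}-\bar W^{(t)}\|_F\le 1/(4h)$ and $\eta\hat F(W^{(t)})\le 1/(2h^2)$ (Eqs.~\eqref{eq:thm18}--\eqref{eq:thm18-trW}). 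Your region-restricted bounds can reconstitute that proof, but the reconstitution is the actual content of the argument, not a citation.
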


The proof of Theorem \ref{lem:exp_dsc} is included in Appendix \ref{sec:pf-thm4}. In the following remark, we discuss the implications of this result.

\begin{remark}[Improved rates]\label{rem:improved}
While similar to Lemma \ref{lem:trainloss-convex}, for logistic regression we have $\hat F(\bar w^{(T)})=\tilde{O}(\frac{1}{\eta T} + \frac{1}{T})$, for the consensus error rate we have by applying Theorem \ref{lem:exp_dsc} and noting that $\rho(\eps)=\log(1/\eps)/\gamma$ , 
\bea
\left\|W^{(T)} - \bar W^{(T)}\right\|_F^2 &= O \left(  h^2 \eta^2 \eps^2 + \frac{h^2 (\log (1/\eps))^4}{\gamma^4 T^2}\right).\nn
\eea
After choosing $\eps=1/T$, we have the improved rate $\|W^{(T)} - \bar W^{(T)}\|_F^2=\tilde{O}(\frac{1}{T^2})$, which is superior over the rate $\tilde{O}(\frac{1}{T})$ for general convex losses with constant $\eta$ (Lemma \ref{lem:trainloss-convex}).
For the test loss, employing Lemma \ref{lem:main} with the new rates for the consensus error leads to the following rate for DGD with logistic regression,
 \bea \label{eq:test_improved}
\E\left[F(\bar w^{(T)})\right]  = \tilde{O} \Big(  \frac{1}{\eta T} +  \frac{1 }{n}
+ \eta^2\Big).
\eea
%\ct{I don't see the above. What I see is:
%\bea 
%\E[F(\bar w^{(T)})]  = \tilde{O} \Big(  \frac{1}{\eta T} +  \frac{1 }{n}
%+ \frac{\eta^4}{NT}\Big).
%\eea
%}
In accordance to Remark \ref{rem:distribution}, we can conclude the above from Lemma \ref{lem:main} provided Assumptions \ref{ass:self-bounded} and \ref{ass:8}. Thus, the bounds of Theorem \ref{lem:exp_dsc} remain true for all training sets within  the data distribution.
We note that the resulting bound in \eqref{eq:test_improved} is a superior rate for the test loss of logistic regression, compared to the rate of Remark \ref{rem:logr}. Concretely, setting $\eta=1/T^{1/3}$ gives a rate of $\tilde{O}(1/T^{2/3}+1/n)$, faster than the $\tilde{O}(1/\sqrt{T}+1/n)$ rate in Remark \ref{rem:logr}. On the other hand, it is slightly slower compared to its centralized counterpart $\tilde{O}(1/T +  1/n)$ in \cite{shamir2021gradient,schliserman2022stability}. As revealed by Lemma \ref{lem:main}, the additional $\eta^2$ factor in \eqref{eq:test_improved} captures impact of the consensus term, which is unavoidable in decentralized learning. 
% Notably, this is comparable to the test loss rate for ``centralized'' logistic regression on linearly separable data, derived as $\E[F(\bar w^{(T)})]  = \tilde{O}( \frac{1}{\eta T} +  \frac{1 }{n})$ in \cite{shamir2021gradient,schliserman2022stability}. 
 \end{remark}

%%%%%%%%%%%%%%%%%%%%%%%%% PL %%%%%%%%%%%%%%%%%%%%%%%%%%%%%%
\subsection{Convergence under the PL condition}\label{sec:polyak}
Next, we show how our previous results change when the global loss satisfies the $\mu$-PL condition. Formally, the PL condition \cite{Polyak1963GradientMF,Lojasiewicz} is defined as follows.  
\begin{ass}[PL condition]\label{ass:polyak}
The loss function $\hat F:\R^d\rightarrow\R$ satisfies the Polyak-Lojasiewic(PL) condition with parameter $\mu>0$:
$
\| \nabla \hat F(w)\|^2 \ge 2\mu (\hat F(w)-\hat F^\star).
$
\end{ass}
The next lemma shows that DGD enjoys an exponential rate under the PL condition and  smoothness.
and data separability (i.e., $\hat F^\star = 0$). 
See Appendix \ref{sec:pf-lem5} for a proof. 
 \begin{lemma}[Train loss under the PL condition]\label{lem:pl-tr}
Let Assumptions \ref{ass:mixing},\ref{ass:smooth} and \ref{ass:polyak} hold and let the step-size $\eta\le\min\{\frac{1-\alpha_1}{\mu},\frac{1}{2L^2}\sqrt{\frac{(1-\alpha_1)\mu}{\alpha_2}},\frac{1}{L}\}$, where the constants $\alpha_1\in(3/4,1)$ and $\alpha_2>4$ depend only on the mixing matrix. Define  $\zeta:=1-\frac{\eta\mu}{2}$, then under the data separability assumption, the iterates of DGD satisfy for all $t\ge1$,
\bea
\hat F(\bar w^{(t)})  &\le \zeta^{t-1} \hat F(\bar w^{(1)}),\nn\\
\frac{1}{N}\left \|W^{(t)} - \bar W^{(t)}\right\|_F^2  &\le   \frac{2\alpha_2\eta^2L^2\hat F(\bar w^{(1)})}{1-\alpha_1}\zeta^{t-1}.\nn
\eea
 \end{lemma}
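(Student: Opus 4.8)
The plan is to treat this as a standard smoothness-plus-PL linear-convergence argument for the averaged iterate, coupled to a spectral-gap contraction for the consensus error, and to close the two coupled recursions \emph{simultaneously} by induction. I would first write DGD in matrix form as $W^{(t+1)} = A W^{(t)} - \eta\, G^{(t)}$, where $G^{(t)}\in\R^{N\times d}$ stacks the local gradients $\nabla\hat F_\ell(w_\ell^{(t)})$, and set $J:=\frac1N\oneb\oneb^\top$ and $\lambda:=\|A-J\|$, which is strictly less than one by Assumption \ref{ass:mixing}. Since $A$ is symmetric and doubly stochastic, $JA=AJ=J$, which yields the averaged update $\bar w^{(t+1)}=\bar w^{(t)}-\frac{\eta}{N}\sum_{\ell}\nabla\hat F_\ell(w_\ell^{(t)})$ and, using $(\I-J)A=A-J$ and $(A-J)\bar W^{(t)}=0$, the consensus update $W^{(t+1)}-\bar W^{(t+1)}=(A-J)(W^{(t)}-\bar W^{(t)})-\eta(\I-J)G^{(t)}$. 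Throughout I write $\Omega_t:=\frac1N\|W^{(t)}-\bar W^{(t)}\|_F^2$ and note $\Omega_1=0$ by the zero initialization. Note that convexity is not available here, so every step must rely only on smoothness, PL, and the mixing property.

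Next I would derive the descent recursion for the averaged loss. Writing $\bar g^{(t)}:=\frac1N\sum_\ell\nabla\hat F_\ell(w_\ell^{(t)})$ and using $L$-smoothness of each $\hat F_\ell$ (Assumption \ref{ass:smooth}), the gap between the DGD search direction and the true gradient at the average is controlled by the consensus error, $\|\bar g^{(t)}-\nabla\hat F(\bar w^{(t)})\|\le \frac{L}{N}\sum_\ell\|w_\ell^{(t)}-\bar w^{(t)}\|\le L\sqrt{\Omega_t}$. Substituting $\bar w^{(t+1)}-\bar w^{(t)}=-\eta\bar g^{(t)}$ into the smoothness upper bound for $\hat F$, expanding $\langle\nabla\hat F(\bar w^{(t)}),\bar g^{(t)}\rangle$ and applying Young's inequality to the cross term, then using the PL inequality $\|\nabla\hat F(\bar w^{(t)})\|^2\ge 2\mu\hat F(\bar w^{(t)})$ (Assumption \ref{ass:polyak}, with $\hat F^\star=0$ by separability) together with $\eta\le 1/L$, I expect a recursion of the form $\hat F(\bar w^{(t+1)})\le(1-c_1\eta\mu)\hat F(\bar w^{(t)})+c_2\,\eta L^2\Omega_t$ for absolute constants $c_1\in(\tfrac12,1]$, $c_2>0$.

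For the consensus recursion I would apply $\|X+Y\|_F^2\le(1+\beta)\|X\|_F^2+(1+\beta^{-1})\|Y\|_F^2$ to the consensus update, use $\|A-J\|=\lambda$ and $\|\I-J\|\le1$, and choose $\beta$ so that $(1+\beta)\lambda^2=:\alpha_1\in(3/4,1)$ and $\alpha_2:=1+\beta^{-1}>4$; this is exactly how the mixing-matrix constants $\alpha_1,\alpha_2$ enter, and it gives $\Omega_{t+1}\le\alpha_1\Omega_t+\frac{\alpha_2\eta^2}{N}\|G^{(t)}\|_F^2$. For the source term I would use smoothness twice, $\|\nabla\hat F_\ell(w_\ell^{(t)})\|^2\le 2\|\nabla\hat F_\ell(\bar w^{(t)})\|^2+2L^2\|w_\ell^{(t)}-\bar w^{(t)}\|^2$, together with the self-bound $\|\nabla\hat F_\ell(\bar w^{(t)})\|^2\le 2L\hat F_\ell(\bar w^{(t)})$ valid since $\hat F_\ell^\star=0$, summing to $\frac1N\|G^{(t)}\|_F^2\le 4L\hat F(\bar w^{(t)})+2L^2\Omega_t$. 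This produces $\Omega_{t+1}\le(\alpha_1+2\alpha_2\eta^2L^2)\Omega_t+4\alpha_2\eta^2L\,\hat F(\bar w^{(t)})$.

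Finally I would close the induction on the pair of claims $\hat F(\bar w^{(t)})\le\zeta^{t-1}\hat F(\bar w^{(1)})$ and $\Omega_t\le C\zeta^{t-1}\hat F(\bar w^{(1)})$ with $C=\frac{2\alpha_2\eta^2L^2}{1-\alpha_1}$ and $\zeta=1-\frac{\eta\mu}{2}$. The base case $t=1$ is immediate since $\Omega_1=0$. For the inductive step, substituting both hypotheses into the two recursions reduces each claim to a purely numerical inequality, and the three step-size constraints are precisely what make them hold: $\eta\le\frac{1-\alpha_1}{\mu}$ keeps $(1-\alpha_1)-\frac{\eta\mu}{2}$ bounded below by a fixed fraction of $1-\alpha_1$, the bound $\eta\le\frac{1}{2L^2}\sqrt{(1-\alpha_1)\mu/\alpha_2}$ forces the consensus source $4\alpha_2\eta^2L\,\hat F$ to be absorbed into $C\zeta$, and $\eta\le\frac1L$ keeps the quadratic smoothness terms subdominant so the descent coefficient does not exceed $\zeta$. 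I expect the main obstacle to be exactly this bidirectional coupling — the loss-descent coefficient depends on $\Omega_t$ while the consensus source is driven by $\hat F(\bar w^{(t)})$, so neither recursion contracts in isolation; the joint induction with the tuned constant $C$ is what decouples them, and the delicate point is checking that both numerical inequalities close within a \emph{single} step-size window and that the stated prefactor $\frac{2\alpha_2\eta^2L^2}{1-\alpha_1}$ and exponent $\zeta$ emerge with the correct absolute constants.
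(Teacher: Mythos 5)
Your proposal is correct, and its two building blocks are exactly the paper's: a smoothness-plus-PL descent recursion $\hat F(\bar w^{(t+1)})\le(1-\eta\mu)\hat F(\bar w^{(t)})+\frac{\eta L^2}{N}\|W^{(t)}-\bar W^{(t)}\|_F^2$ (paper, Lemma~\ref{Lem:app_pl}) and a spectral-gap consensus contraction with source term $\alpha_2\eta^2 L^2 N\hat F(\bar w^{(t)})$ (paper, Lemma~\ref{lem:con}, which is likewise convexity-free). The only genuine difference is how you close the coupled system: the paper unrolls the consensus recursion into a geometric sum (Lemma~\ref{lem:con_acc}), runs an induction on the \emph{loss alone} with rate $\bar\zeta=\max\{\frac{1+\alpha_1}{2},\,1-\frac{\eta\mu}{2}\}$ --- the condition $\eta\le\frac{1-\alpha_1}{\mu}$ then collapses $\bar\zeta$ to $\zeta$ --- and recovers the consensus bound \emph{a posteriori} by resumming the geometric series, using $\zeta-\alpha_1\ge\frac{1-\alpha_1}{2}$; you instead run a simultaneous two-variable induction with $\Omega_t\le C\zeta^{t-1}\hat F(\bar w^{(1)})$, $C=\frac{2\alpha_2\eta^2L^2}{1-\alpha_1}$. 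Your version does close with exactly the stated constants: the loss step needs $\eta L^2 C\le\frac{\eta\mu}{2}$, which is precisely $\eta\le\frac{1}{2L^2}\sqrt{(1-\alpha_1)\mu/\alpha_2}$, and the consensus step needs $\alpha_2\eta^2L^2\le C(\zeta-\alpha_1)$, which holds with equality margin since $C\cdot\frac{1-\alpha_1}{2}=\alpha_2\eta^2L^2$ --- so the tuned $C$ is the same object the paper obtains from the resummed series, and the joint induction is arguably tidier since it makes the role of each step-size constraint transparent. Two small cautions on constants. First, to land inside the \emph{stated} window $\eta\le 1/L$ you should derive the descent step via the exact identity $2\langle a,b\rangle=\|a\|^2+\|b\|^2-\|a-b\|^2$ applied to $a=\nabla\hat F(\bar w^{(t)})$, $b=\bar\nabla\hat F(W^{(t)})$, as the paper does: this yields the nonpositive term $-\frac{\eta-\eta^2L}{2}\|\bar\nabla\hat F(W^{(t)})\|^2$ that absorbs the smoothness quadratic for free and gives $c_1=1$, $c_2=1$; a generic Young split as you sketch it can leave a residual $\frac12-\eta L$ coefficient that forces $\eta\le\frac{1}{2L}$ or a degraded $c_1$, shrinking the absorption budget $(c_1-\frac12)\eta\mu$. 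Second, the paper pre-absorbs the $2(1+\beta^{-1})\eta^2L^2\Omega_t$ term into the contraction factor $\alpha_1=\frac{3+\lambda}{4}$ via the step-size condition in Lemma~\ref{lem:con}, whereas you carry it as $\alpha_1+2\alpha_2\eta^2L^2$; since $C\propto\eta^2$ this contributes an $O(\eta^4)$ term that is absorbable, but it consumes part of the $(\zeta-\alpha_1)$ budget that your consensus step otherwise uses with zero slack, so you must either pre-absorb it as the paper does or slightly re-split the budget.
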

 
 We use this lemma combined with our key lemma \ref{lem:main} to obtain the test loss bound in the next theorem. The proof is provided in Appendix \ref{sec:pf-thm6}.

\begin{theorem}[Test loss under the PL condition]\label{thm:PL}
Let Assumptions \ref{ass:mixing}-\ref{ass:self-bounded} hold. Further assume \ref{ass:polyak} holds for all training sets in the distribution. 
%and \ref{ass:polyak} hold, 
%and let 
Let $\eta$ and $\zeta$ be as in Lemma \ref{lem:pl-tr}. Then the iterates of DGD satisfy for all $T\ge1$,
 \bea%\label{eq:pl_test_bound}
 \E\left[F(\bar w^{(T)})\right] =O \Big(\zeta^T  + \frac{ L^2 c^2}{n^{3-2\alpha} \mu^{2\alpha}}  (\eta T)^{2-2\alpha} +\frac{\eta^2 L^4}{\mu^2}\Big).\nn
\eea
\end{theorem}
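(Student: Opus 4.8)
The plan is to feed the pathwise train-loss and consensus-error estimates of Lemma~\ref{lem:pl-tr} directly into the three terms of the key inequality of Lemma~\ref{lem:main}. Since Assumption~\ref{ass:polyak} is assumed to hold for \emph{every} training set drawn from the distribution, the two bounds of Lemma~\ref{lem:pl-tr} are valid deterministically for each realized sample, and the only sample-dependent quantity on their right-hand sides is $\hat F(\bar w^{(1)})=\hat F(0)$ (recall all agents start at $w_\ell^{(1)}=0$, so $\bar w^{(1)}=0$). Because $f(0,x)$ is bounded, $\E[\hat F(0)]$ and $\E[(\hat F(0))^{2\alpha}]$ are finite constants, so after substitution I may take the expectation over the training set at the very end and treat these moments as $O(1)$.

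First I would dispatch the first two terms of \eqref{eq:lem} using the geometric decay $\hat F(\bar w^{(t)})\le \zeta^{t-1}\hat F(0)$. The first term $\E[\hat F(\bar w^{(T)})]$ is immediately $O(\zeta^T)$. For the second term I sum the geometric series,
\[
\sum_{t=1}^{T}\hat F(\bar w^{(t)})\le \hat F(0)\sum_{t=1}^{T}\zeta^{t-1}\le \frac{\hat F(0)}{1-\zeta}=\frac{2\hat F(0)}{\eta\mu},
\]
using $1-\zeta=\tfrac{\eta\mu}{2}$, so that $\tfrac1T\sum_{t}\hat F(\bar w^{(t)})\le \tfrac{2\hat F(0)}{\eta\mu T}$. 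Raising this to the power $2\alpha$ and multiplying by the prefactor $\tfrac{\eta^2L^2c^2T^2}{n^{3-2\alpha}}$ gives exactly $O\!\big(\tfrac{L^2c^2}{n^{3-2\alpha}\mu^{2\alpha}}(\eta T)^{2-2\alpha}\big)$, the second claimed term, after absorbing $(\hat F(0))^{2\alpha}$ into the constant.

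The third (consensus) term is where the real work is, and the step I expect to be the main obstacle, because it requires summing the \emph{square roots} of the consensus errors rather than the errors themselves. Taking square roots in the second bound of Lemma~\ref{lem:pl-tr} gives $\|W^{(t)}-\bar W^{(t)}\|_F\le C\,\zeta^{(t-1)/2}$ with $C=\eta L\sqrt{2\alpha_2 N\hat F(0)/(1-\alpha_1)}$. The crucial observation is that $\sum_{t=1}^{T}\zeta^{(t-1)/2}$ is bounded \emph{uniformly in} $T$: using $1-\sqrt\zeta=\tfrac{1-\zeta}{1+\sqrt\zeta}\ge\tfrac{1-\zeta}{2}=\tfrac{\eta\mu}{4}$, I obtain $\sum_{t=1}^{T}\zeta^{(t-1)/2}\le \tfrac{1}{1-\sqrt\zeta}\le \tfrac{4}{\eta\mu}$, whence $\sum_t\|W^{(t)}-\bar W^{(t)}\|_F\le \tfrac{4C}{\eta\mu}=\tfrac{4L}{\mu}\sqrt{2\alpha_2 N\hat F(0)/(1-\alpha_1)}$. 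It is precisely this $T$-independence that turns the third term of \eqref{eq:lem} into a pure $\eta^2$ contribution with no $T$-growth. Squaring, multiplying by $\tfrac{\eta^2L^4}{N}$, and cancelling the factor $N$ yields the third claimed term $O\!\big(\tfrac{\eta^2L^4}{\mu^2}\big)$, where the surplus powers of $L$, the mixing-matrix constants $\alpha_1,\alpha_2$, and $\hat F(0)$ are absorbed into the implied constant. Adding the three contributions and taking expectations over the i.i.d.\ training set yields the stated bound; the only remaining check is that every sample-dependent factor is a bounded moment of $\hat F(0)$, so the expectation does not alter the orders.
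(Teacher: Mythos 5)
Your proposal is correct and follows essentially the same route as the paper's proof: you substitute the geometric train-loss and consensus bounds of Lemma~\ref{lem:pl-tr} into the three terms of Lemma~\ref{lem:main}, sum the geometric series, and use $1-\zeta=\eta\mu/2$ together with $1-\sqrt{\zeta}\ge(1-\zeta)/2$ to convert the $T$-uniform sums into powers of $1/\mu$ — precisely the paper's two closing identities $(\eta/(1-\zeta))^{2\alpha}=(2/\mu)^{2\alpha}$ and $\eta^2/(1-\sqrt{\zeta})^2=O(1/\mu^2)$. The only cosmetic difference is that you track the resulting $\eta^2L^6$ constant in the consensus term explicitly before absorbing the surplus $L^2$ (and $\alpha_1,\alpha_2,\hat F(0)$) into the implied constant, which the paper's terser proof does implicitly.
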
 
\begin{remark}
The bound above involves $(\eta T)^{2-2\alpha}$. When $\alpha<1$, as in the case of smooth functions such as highly over-parameterized Least-squares $f(w,x)=(1-w^\top x)^2$ where $d\gg n$, the bound becomes vacuous as it is increasing with $T$. This suggests the existence of overfitting in DGD under such scenarios; with the optimal value of $T$ achieved at the very early steps of training.
%gradient descent
See Appendix \ref{sec:plexp} for experiments that confirm this behavior. 
%On the other hand, when $\alpha=1$, test loss decreases exponentially fast to reach the quantity $O(\frac{ L^2 c^2}{n \mu^{2}}+\frac{\eta^2 L^4}{\mu^2 N} )$.

\end{remark}

 %%%%%%%%%%%%%%%%%%%%%%%%%%%%%%%%%%%%%%%%%%%%%%%%%%%%%
% \begin{remark}[Over-parameterized Least-squares]
% \end{remark}
 \subsection{Improved algorithms for decentralized learning with separable data}\label{sec:alg}
In this section, we consider decentralized learning with exponentially tail losses on separable data and propose modifications to the DGD algorithm for improving the convergence rates based on the normalized GD mechanism. 

Our first algorithm --Fast Distributed Logistic Regression($\mathsf{FDLR}$)-- is summarized in Algorithm \ref{alg:alg1}. Each agent keeps two local variables ${w}_\ell,{v}_\ell\in \R^d$ which are also communicated to neighbor agents at each round. In matrix notation, Algorithm \ref{alg:alg1} has the following updates:
\bea
&W^{(t+1)} = A(W^{(t)} - \eta \widetilde{V}^{(t)}),\nn\\
&V^{(t+1)}= A\,V^{(t)} + \nabla{\hat F}(W^{(t+1)})-\nabla{\hat F}(W^{(t)}).\nn
\eea

\SetKwInput{KwInput}{Input}               
\SetKwInput{KwOutput}{Output}   
\begin{algorithm}[t]
\caption{$\mathsf{FDLR}$}\label{alg:alg1}
 \KwInput{Initial values $w_\ell^{(1)},{v}_\ell^{(1)}\in \R^d$ for all agents $\ell\in[N]$, step size $\eta_t$ and mixing matrix $A=[A_{\ell k}]_{N\times N}$ }
% \KwOutput{Model {w}_i}

 \For{$t=1,\dots,T$ all agents $\ell\in[N]$ in parallel }
 {
 $w_\ell^{(t+\frac{1}{2})} = w_\ell^{(t)}-\eta_t \frac{{v}_\ell^{(t)}}{\|{v}_\ell^{(t)}\|}$\\[1pt]
send and receive local variables ${w}_\ell^{(t+\frac{1}{2})}$ and ${v}_\ell^{(t)}$\\[3pt]
$w_\ell^{(t+1)} = \sum_{k\in\Nn_\ell} A_{\ell k} {w}_k^{(t+\frac{1}{2})}$\\[4pt]
${\small {v}_\ell^{(t+1)}=\sum_{k\in\Nn_\ell} A_{\ell k}{v}_k^{(t)} + \nabla{\hat F}_\ell(w_\ell^{(t+1)})-\nabla{\hat F}_\ell(w_\ell^{(t)})}$
 }
%  \KwOutput{Your output}
\end{algorithm}

\begin{algorithm}[t]
\caption{$\mathsf{FDLR}$ with Nesterov momentum}\label{alg:alg2}
 \KwInput{Initial values ${w}_\ell^{(1)},{v}_\ell^{(1)},z^{(1)}\in \R^d$ for all agents $\ell\in[N]$, hyper-parameters $\eta_t,\gamma_t$ and mixing matrix $A=[A_{\ell k}]_{N\times N}$ }
 
 \For{$t=1,\dots,T$ all agents $i\in[N]$ in parallel }
 {
${z}_\ell^{(t+1)} = \gamma_t ({z}_\ell^{(t)} +\frac{{v}_\ell^{(t)}}{\|{v}_\ell^{(t)}\|})$\\[4pt]
${w}_\ell^{(t+\frac{1}{2})} = {w}_\ell^{(t)} - \eta_t({z}_\ell^{(t+1)}+\frac{{v}_\ell^{(t)}}{\|{v}_\ell^{(t)}\|})$\\[2pt]
send and receive local variables ${w}_\ell^{(t+\frac{1}{2})}$ and ${v}_\ell^{(t)}$\\[4pt]
${w}_\ell^{(t+1)} = \sum_{k\in\Nn_\ell} A_{\ell k}{w}_k^{(t+\frac{1}{2})}$\\[4pt]
${\small {v}_\ell^{(t+1)} =  \sum_{k\in\Nn_\ell} A_{\ell k}{v}_k^{(t)} +  \nabla{\hat F}_\ell({w}_\ell^{(t+1)})-\nabla{\hat F}_\ell({w}_\ell^{(t)})}$
 }
%  \KwOutput{Your output}
\end{algorithm}

As in \eqref{eq:dec_main}, $A\in \R^{N\times N}$ is the mixing matrix of the undirected network of agents, which satisfies the regularity conditions in Assumption \ref{ass:mixing}. Furthermore $W^{(t)}, V^{(t)},\nabla{\hat F}(W^{(t)})\in \R^{N\times d}$ are formed by stacking ${w}_\ell^{(t)}, {v}_\ell^{(t)}$ and local gradients $\nabla{\hat F}_\ell({w}_\ell^{(t)})$ for all $\ell\in[N]$ as their rows. The matrix $\widetilde{V}^{(t)}\in\R^{N\times d}$ is formed by concatenation of the vectors ${v}_\ell^{(t)}/\|{v}_\ell^{(t)}\|$ as its rows.
In step (2) of Algorithm \ref{alg:alg1}, every agent $\ell$ runs in parallel an update rule which resembles the distributed gradient descent update rule (aka Eq. \eqref{eq:dec_main}), with the difference that the local gradient $\nabla{\hat F}_\ell({w}_\ell^{(t)})$ is replaced by ${v}_\ell^{(t)}/\|{v}_\ell^{(t)}\|$. In the next step, agents send their local parameters ${w}_\ell^{(t)},{v}_\ell^{(t)}$ to their neighbors. Step (4) is the consensus step at which agent $\ell$ computes a weighted average of ${w}_k^{(t+1/2)}$ sent from neighbor agents $k$, in order to update ${w}_\ell^{(t)}$. 
Step (5) uses the newly computed local gradient $\nabla{\hat F}_\ell({w}_\ell^{(t+1)})$ and the gradient computed in the previous step to updates the local parameter ${v}_\ell^{(t)}$.  The purpose behind introducing the variable ${v}_\ell^{(t)}$ is to estimate the global gradient. This idea is previously used in the gradient tracking algorithm (e.g. see \cite{nedic2017achieving}) and the idea also relates to stochastic variance reduced gradient (SVRG) \cite{johnson2013accelerating}.
The following theorem proves that for exponentially decaying loss functions and separable data, $\mathrm{FDRL}$ with time-decaying step-size $\eta_t=1/\sqrt{t}$ converges successfully in direction to the solution of centralized gradient descent. The proof is provided in Appendix \ref{sec:alg_proofs}.

%While we do not prove convergence rates for the proposed algorithms, we study their asymptotic convergence. Notably, the next theorem shows that for a time-decaying step-size $\eta=1/\sqrt{t}$, the solution of $\mathrm{FDRL}$ converges in direction to the solution of centralized gradient descent. The proof is provided in Appendix \ref{sec:alg_proofs}. We defer numerical performance of both Algorithms to the next section.  

\begin{theorem}[Asymptotic convergence of $\mathsf{FDLR}$]\label{thm:alg}
Let the sequence $\{{w}_\ell^{(t)}\}$ be generated by $\mathsf{FDRL}$(Algorithm \ref{alg:alg1}) trained with logistic or exponential loss on a separable dataset with $\eta_t=O (1/\sqrt{t})$. Then, for all $\ell\in [N]$,
$
\lim_{t\rightarrow\infty} {{w}_\ell^{(t)}}/{\|{w}_\ell^{(t)}\|} =  {{w}_{_{\rm MM}}}/{\|{w}_{_{\rm MM}}\|},
$
where ${w}_{_{\rm MM}}$ is the solution to max-margin problem. 
\end{theorem}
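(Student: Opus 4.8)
The plan is to reduce everything to the averaged iterate $\bar w^{(t)} = \frac{1}{N}\sum_{\ell}w_\ell^{(t)}$ and to show that it obeys a (vanishingly) perturbed version of the centralized normalized-gradient-descent recursion, whose directional convergence to $w_{_{\rm MM}}$ is already known \cite{nacson2019convergence}. Because $A$ is doubly stochastic (Assumption \ref{ass:mixing}), left-multiplying the matrix updates of $\mathsf{FDLR}$ by $\frac{1}{N}\mathbf{1}^\top$ annihilates the mixing step and yields $\bar w^{(t+1)} = \bar w^{(t)} - \eta_t\,\frac{1}{N}\sum_{\ell}v_\ell^{(t)}/\|v_\ell^{(t)}\|$, together with the gradient-tracking invariant $\bar v^{(t)} = \frac{1}{N}\sum_{\ell}\nabla\hat{F}_\ell(w_\ell^{(t)})$, which holds for all $t$ under the standard initialization $v_\ell^{(1)}=\nabla\hat{F}_\ell(w_\ell^{(1)})$. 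Thus the averaged trajectory is exactly normalized GD driven by $\frac{1}{N}\sum_\ell v_\ell^{(t)}/\|v_\ell^{(t)}\|$ in place of $\nabla\hat{F}(\bar w^{(t)})/\|\nabla\hat{F}(\bar w^{(t)})\|$, and the entire proof reduces to controlling the discrepancy between these two directions.

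Second, I would establish consensus of both the parameters and the tracking variables. Writing $\sigma := \|A - \tfrac{1}{N}\mathbf{1}\mathbf{1}^\top\| < 1$ (guaranteed by Assumption \ref{ass:mixing}), subtracting the averaged updates gives the contraction
\[
\|W^{(t+1)} - \bar W^{(t+1)}\|_F \le \sigma\big(\|W^{(t)} - \bar W^{(t)}\|_F + \eta_t\|\widetilde{V}^{(t)} - \bar{\widetilde{V}}^{(t)}\|_F\big),
\]
together with an analogous coupled recursion for $\|V^{(t)} - \bar V^{(t)}\|_F$ driven by the gradient increments $\nabla\hat{F}(W^{(t+1)}) - \nabla\hat{F}(W^{(t)})$, which are controlled via $L$-smoothness (Assumption \ref{ass:smooth}) and the fact that $\eta_t\to 0$. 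Since the rows of $\widetilde{V}^{(t)}$ are unit vectors, the parameter-consensus error is forced only by the \emph{directional} disagreement among the $v_\ell^{(t)}$, and the decay $\eta_t=O(1/\sqrt t)$ is what allows the coupled recursion to be closed, giving a vanishing relative consensus error.

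Third, I would convert the consensus bounds into a bound on the direction discrepancy. By $L$-smoothness, $\|\bar v^{(t)} - \nabla\hat{F}(\bar w^{(t)})\| \le \frac{L}{\sqrt N}\|W^{(t)} - \bar W^{(t)}\|_F$, so gradient tracking makes $\bar v^{(t)}$ an accurate surrogate for the global gradient; combined with the directional $v$-consensus this yields $\frac{1}{N}\sum_\ell v_\ell^{(t)}/\|v_\ell^{(t)}\| = \nabla\hat{F}(\bar w^{(t)})/\|\nabla\hat{F}(\bar w^{(t)})\| + e^{(t)}$ with $\|e^{(t)}\|\to 0$. The averaged iterate is therefore exact normalized GD perturbed by $\eta_t e^{(t)}$; since $\|e^{(t)}\|\to 0$, a Cesàro/Kronecker argument gives $\sum_{s\le t}\eta_s e^{(s)} = o(\sum_{s\le t}\eta_s)=o(\sqrt t)$, which is lower order than $\|\bar w^{(t)}\|\asymp\sqrt t$. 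Hence the limiting direction is unaffected and $\bar w^{(t)}/\|\bar w^{(t)}\| \to w_{_{\rm MM}}/\|w_{_{\rm MM}}\|$ by the centralized result \cite{nacson2019convergence}; parameter consensus together with $\|\bar w^{(t)}\|\to\infty$ then transfers the limit to each agent, giving $w_\ell^{(t)}/\|w_\ell^{(t)}\|\to w_{_{\rm MM}}/\|w_{_{\rm MM}}\|$ for all $\ell$.

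The main obstacle is the second and third steps: closing the \emph{coupled} consensus recursion in this nonstandard regime where the iterates grow unboundedly while the gradients, hence the tracking variables $v_\ell^{(t)}$, vanish. The subtlety is that directional consensus of the normalized $v_\ell^{(t)}/\|v_\ell^{(t)}\|$ requires the absolute error $\|V^{(t)} - \bar V^{(t)}\|_F$ to decay \emph{faster} than the shrinking magnitude $\|\bar v^{(t)}\|$, so that normalization does not amplify the tracking error; managing this race is exactly where the separability structure — all local gradients asymptotically aligning so that gradient tracking keeps every $v_\ell^{(t)}$ directionally pinned to the global gradient — together with the choice $\eta_t=O(1/\sqrt t)$ must be exploited.
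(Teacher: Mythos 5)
Your proposal follows essentially the same route as the paper's own proof: pass to the averaged iterate via double stochasticity, use the gradient-tracking invariant $\bar v^{(t)}=\frac{1}{N}\sum_{\ell}\nabla\hat F_\ell(w_\ell^{(t)})$, establish consensus of both $W^{(t)}$ and $V^{(t)}$ (the paper obtains both by invoking \cite[Lemma 1]{nedic2014distributed} on perturbed consensus with vanishing perturbations --- unit-norm directions scaled by $\eta_t\to 0$ for $W$, and gradient increments vanishing by smoothness for $V$ --- where you re-derive the same contractions explicitly via $\sigma=\|A-\tfrac{1}{N}\mathbf{1}\mathbf{1}^\top\|<1$), and then conclude by the centralized normalized-GD result \cite[Theorem 5]{nacson2019convergence}. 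The normalization race you flag as the main obstacle (absolute tracking error versus the vanishing magnitude $\|\bar v^{(t)}\|$) is precisely the step the paper's own proof treats informally when it passes from $\|v_\ell^{(t)}-\bar v^{(t)}\|\to 0$ to the directional statement, so your plan is, if anything, more explicit about the one delicate point than the published argument.
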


 %loss.
 Based on the above result, we anticipate that $\mathsf{FDRL}$ has good test performance. In fact, we will show in Section \ref{sec:num} that $\mathsf{FDRL}$ achieves good test performance orders of magnitude faster than DGD. To get some insight on this and also on the nature of the $\mathsf{FDLR}$ updates  consider the infinite time limit. In this limit, when the matrix $A$ satisfies the mixing Assumption \ref{ass:mixing}, it can be checked that
%we can deduce that, 
$
V^{(\infty)} = \frac{1}{n} \mathbf{1 1}^{\top} \nabla{\hat F}(W^{(\infty)}).
$
Hence, as $t\rightarrow\infty$ the variables ${v}_\ell^{(t)}$ for all agents converge to the same global gradient $\sum_{\ell=1}^N \nabla{\hat F}_\ell ({w}_\ell^{(t)})$.
%While each ${v}_\ell^{(t)}$ is distinct among agents, each of them is estimating the global loss of distributed optimization i.e., $\sum_{\ell=1}^N \nabla{\hat F}_\ell ({w}_\ell^{(t)})$. 
Realizing this, we can see that 
%With this, 
Step (2) of $\mathsf{FDLR}$ is asymptotically approximating a normalized GD update, i.e., for large $t$, each agent performs an update $w_{\ell}^{(t+1/2)}\approx w_{\ell}^{(t)}-\eta_t\frac{\sum_{\ell=1}^N \nabla{\hat F}_\ell ({w}_\ell^{(t)})}{\|\sum_{\ell=1}^N \nabla{\hat F}_\ell ({w}_\ell^{(t)})\|_2}$. Previously, normalized gradient descent has been used to speed 
%which provably speeds 
up convergence in centralized logistic regression over separable data\cite{nacson2019convergence}. Here, we essentially extend this idea to a decentralized setting and argue that $\mathsf{FDLR}$ is the canonical way to do so. In particular, the idea of introducing additional variables $v_\ell$ that keep track of the global gradient is critical for the algorithm's success. That is, a naive implementation with updates $w_{\ell}^{(t+1/2)}= w_{\ell}^{(t)}-\eta_t{\nabla{\hat F_\ell} ({w}_\ell^{(t)})}/{\|\nabla{\hat F_\ell}({w}_\ell^{(t)})\|_2}$ based only on the local gradients would fail. At the other end, just introducing variables $v_\ell$ without performing a normalized gradient update (i.e. implementing gradient tracking) also fails to give significant speed ups over DGD. See Section \ref{sec:num} for experiments in support of this claim.

%We can formalize the intuition presented above 

\par
We also present a yet improved Algorithm \ref{alg:alg2}, which combines $\mathsf{FDLR}$ with Nesterov Momentum.  The key innovation of  Algorithm \ref{alg:alg2} compared to $\mathsf{FDLR}$ is its step (3), where now the local parameter $w_\ell^{(t)}$ is updated by a weighted average ($z_\ell^{(t+1)}$) of normalized gradients from previous iterations. Similar to our previous remarks regarding $\mathsf{FDLR}$, extending the Nesterov accelarated variant of normalized GD for centralized logistic regression \cite{ji2021fast} to the distributed setting is more subtle as now each agent has access only to local gradients. Our experiments in Section \ref{sec:num} verify the correctness of the proposed implementation of Algorithm \ref{alg:alg2} as it achieves significant speed ups over both DGD and $\mathsf{FDLR}$.

%A Nesterov accelerated variant of normalized GD has already been studied by \cite{ji2021fast} for centralized logistic regression. In distributed case, the problem is more subtle as each agent has access only to local gradients. 
%Algorithm \ref{alg:alg2} unifies $\mathsf{FDLR}$ with a Nesterov Momentum. 
%in step (3) of Algorithm \ref{alg:alg2} the local parameter $w_\ell^{(t)}$ is updated by a weighted average ($z_\ell^{(t+1)}$) of normalized gradients from previous iterations.  

% While we do not prove convergence rates for the proposed algorithms, we study their asymptotic convergence. Notably, the next theorem shows that for a time-decaying step-size $\eta=1/\sqrt{t}$, the solution of $\mathrm{FDRL}$ converges in direction to the solution of centralized gradient descent. The proof is provided in Appendix \ref{sec:alg_proofs}. We defer numerical performance of both Algorithms to the next section.  

% \par
% \begin{theorem}[Asymptotic convergence of $\mathsf{FDLR}$]\label{thm:alg}
% Let the sequence $\{{w}_\ell^{(t)}\}$ be generated by $\mathsf{FDRL}$(Algorithm \ref{alg:alg1}) for a separable dataset and with $\eta_t=O (1/\sqrt{t})$. Then for all $\ell\in [N]$,
% $
% \lim_{t\rightarrow\infty} {{w}_\ell^{(t)}}/{\|{w}_\ell^{(t)}\|} =  {{w}_{_{\rm MM}}}/{\|{w}_{_{\rm MM}}\|},
% $
% where ${w}_{_{\rm MM}}$ is the solution to max-margin problem. 
% \end{theorem}

\section{NUMERICAL EXPERIMENTS}\label{sec:num}
In this section, we present numerical experiments to verify our theoretical results and demonstrate the benefits of our proposed algorithms. We begin with a numerical study of the performance of $\mathsf{FDLR}$. 
\subsection{Experiments on $\mathsf{FDLR}$}

In Fig. \ref{fig:fig1}(Left), we compare the performance of $\mathsf{FDLR}$ and its momentum variant  to DGD and gradient tracking (GT) for exponential loss with signed measurements (i.e., $y = \sign (a^\top {w}^\star)$ for samples $a$, labels $y$ and the true vector of regressors ${w}^\star$) with $n = 100$, $d = 25$. 
The underlying graph is selected as an Erdos-Rènyi graph with $N=50$ agents and connectivity probability $p_c= 0.3$. On the $y-$axis, directional convergece represents the distance of normalized ${w}_\ell^{(t)}$ to the normalized final solution for agent $\ell=1$, i.e., $\|\frac{{w}_1^{(t)}}{\| {w}_1^{(t)}\|} - \frac{{w}_{_{\rm MM}}}{\|{w}_{_{\rm MM}}\|}\|$ (see Theorem \ref{thm:alg}). The hyper-parameters $\eta_t,\gamma_t$ are fine-tuned for each algorithm to represent the best of each algorithm and the final values are $\eta_t = 0.1, 0.05, 0.5$ and $0.2$ for Distributed GD, GT, Alg. \ref{alg:alg1} and Alg. \ref{alg:alg2}, respectively and $\gamma_t = 0.8$ for Alg. \ref{alg:alg2}. 
Our algorithms significantly outperform the well-known distributed learning algorithms in directional convergence to the final solution. Regardless, in this case we noticed that the gain obtained by including the momentum is small. 
In Fig. \ref{fig:fig1}(Right), we consider a binary classification task on a real-world dataset (two classes of the UCI WINE dataset \cite{winedata}) where $d=13$ and $n=107$. We compare the performance of $\mathsf{FDLR}$ (blue line) and its momentum variant (red line) with DGD and DGT on an Erdos-Rènyi graph with $N=10$ and $p_c=0.4$. The hyper-parameters are fine-tuned to $\eta_t=12,1,0.9,2$ for DGD, DGT, Alg. \ref{alg:alg1} and Alg. \ref{alg:alg2} respectively, and $\gamma_t = 0.88$ for Alg. \ref{alg:alg2}. Notably, while Alg. \ref{alg:alg1} significantly outperforms both DGD and DGT, the benefits of adding the momentum are also significant in this case as Alg. \ref{alg:alg2} demonstrates a faster rate of convergence than Alg. \ref{alg:alg1}. 

The two plots in Fig. \ref{fig:fig2} (Left) illustrate the train and test errors of DGD/DGT and our proposed algorithms for the same setting as Fig. \ref{fig:fig1}(Left) with $n=800$ and $d=50$. Here the hyper-parameters are fine-tuned to be $\eta_t = 0.01, 0.01, 0.4, 0.5$ for DGD,DGT, Alg. \ref{alg:alg1} and Alg. \ref{alg:alg2}, respectively and $\gamma_t = 0.5$ for Alg. \ref{alg:alg2}. Fig. \ref{fig:fig2}(Right) shows the training and test losses. Here, we use the same dataset with $N=10,p_c=0.4$ and an exponential loss. The hyper-parameters are fine-tuned to $\eta_t = 0.01, 0.012, 0.4, 0.6$ and $\gamma_t=0.9$. 
Both of our algorithms outperform the commonly used DGD and DT, in both training error and test error performance. Also, the gains of adding the momentum are significant, since $\mathsf{FDLR}$ with Nesterov momentum (Algorithm \ref{alg:alg2}) reaches an approximation of its final test accuracy in $~50$ iterations, while the same happens for $\mathsf{FDLR}$ with approximately $300$ iterations. 
\begin{figure*}[t!]
\includegraphics[width=5.8cm, height=4cm]{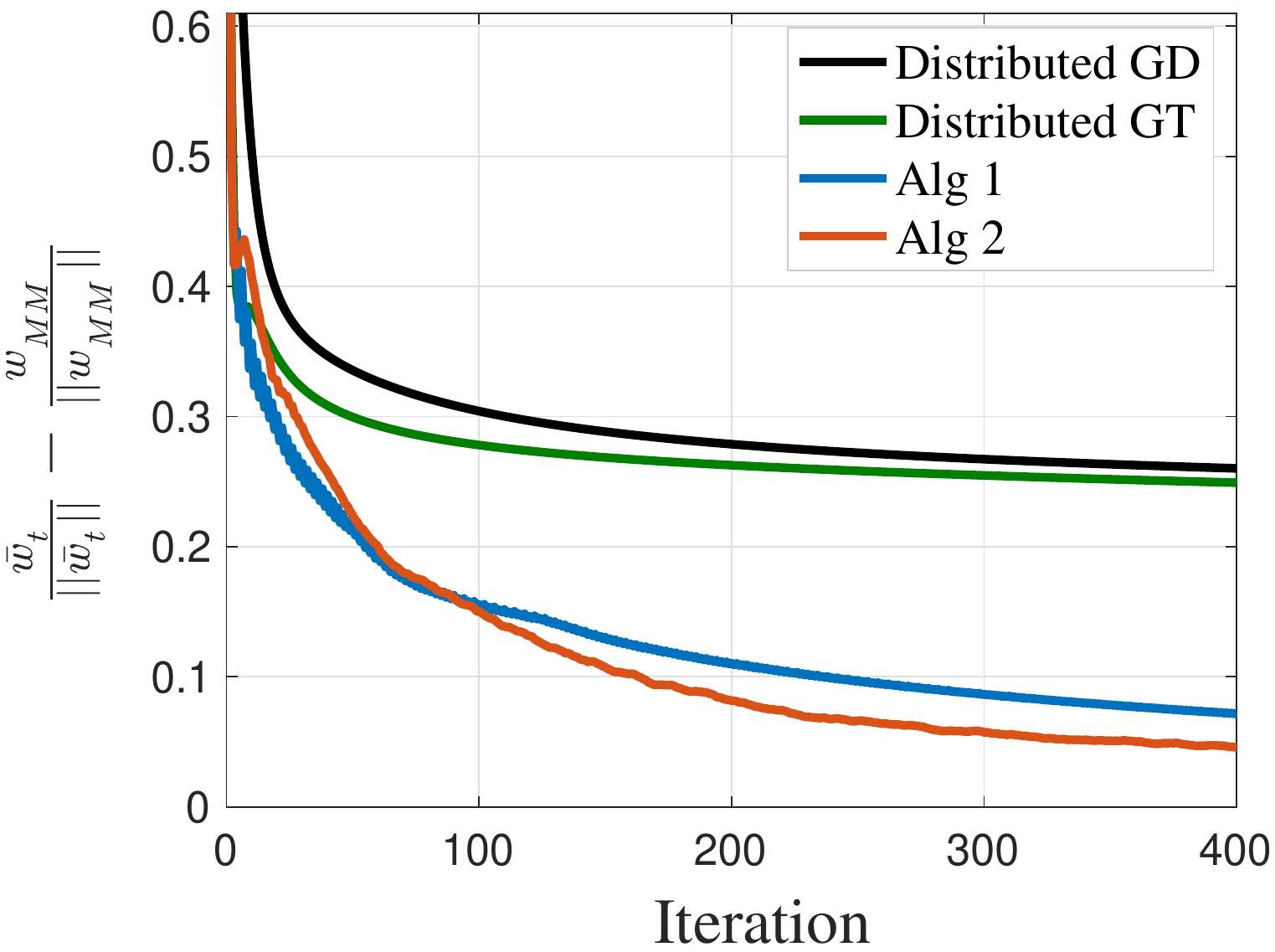}
\centering
\;\;\;\;\;\;\;~~~~\includegraphics[width=5.8cm, height=4cm]{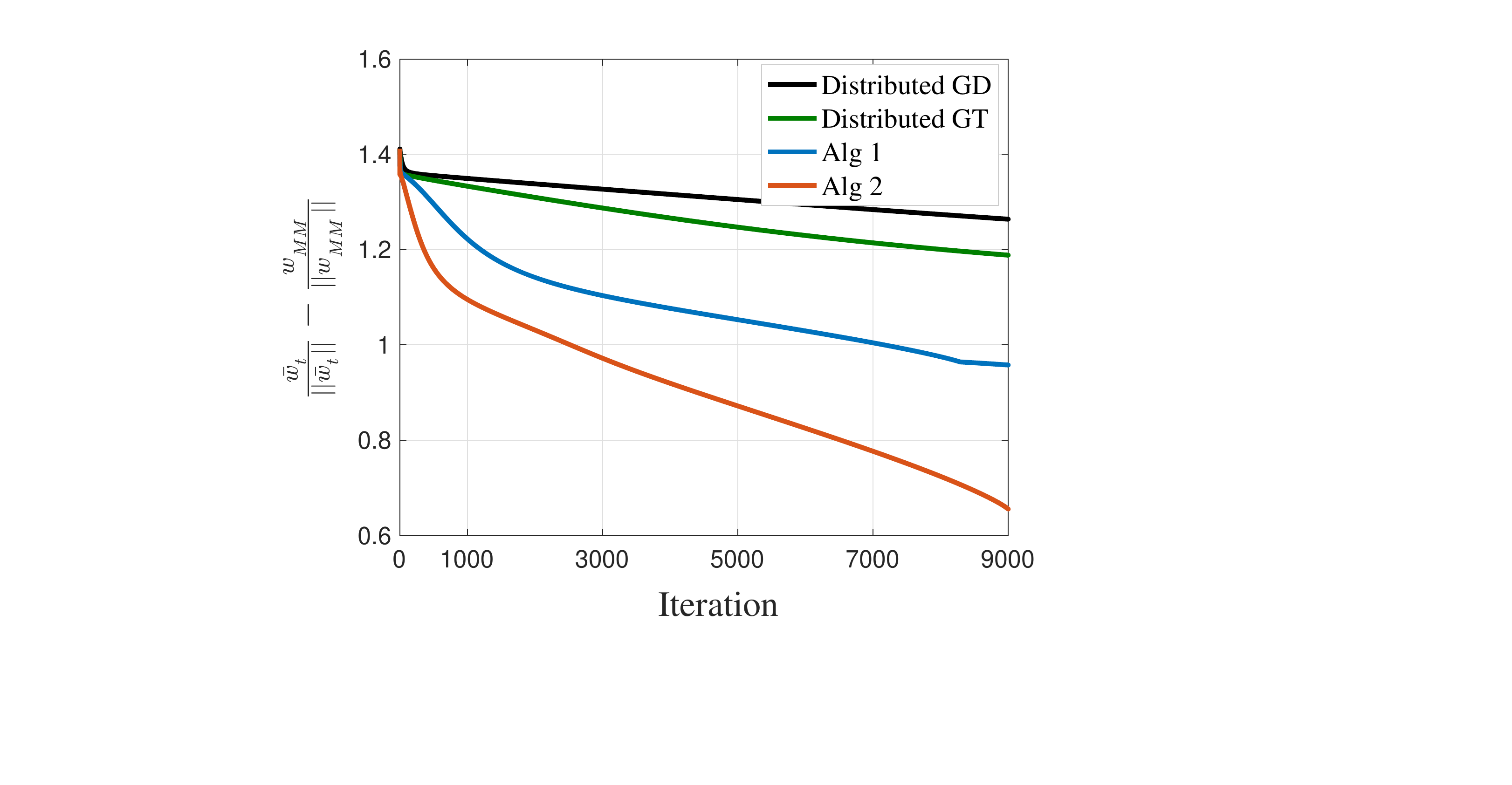}
\centering
\caption{Directional parameter convergence of our proposed Algorithms \ref{alg:alg1}-\ref{alg:alg2} compared to the vanilla distributed gradient descent and gradient tracking algorithms on (Left) synthetic data $y=\sign(a^\top w^\star)$  and (Right) on two classes from the UCI WINE dataset.}
\label{fig:fig1}
\vspace{.1in}
\end{figure*}
\begin{figure*}[t!]
\includegraphics[width=4.1cm,height=3.3cm]{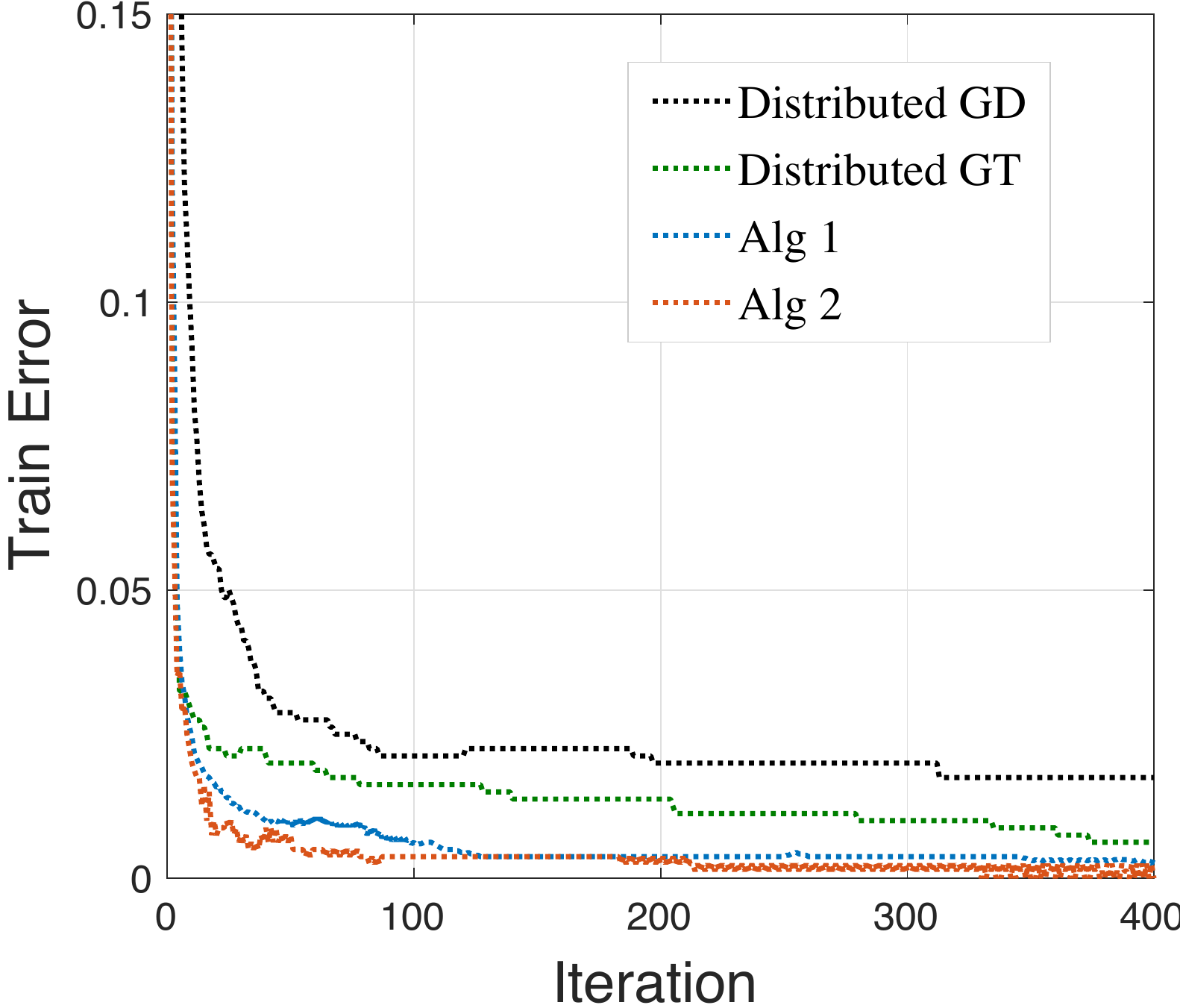}
\centering
\includegraphics[width=4.1cm,height=3.3cm]{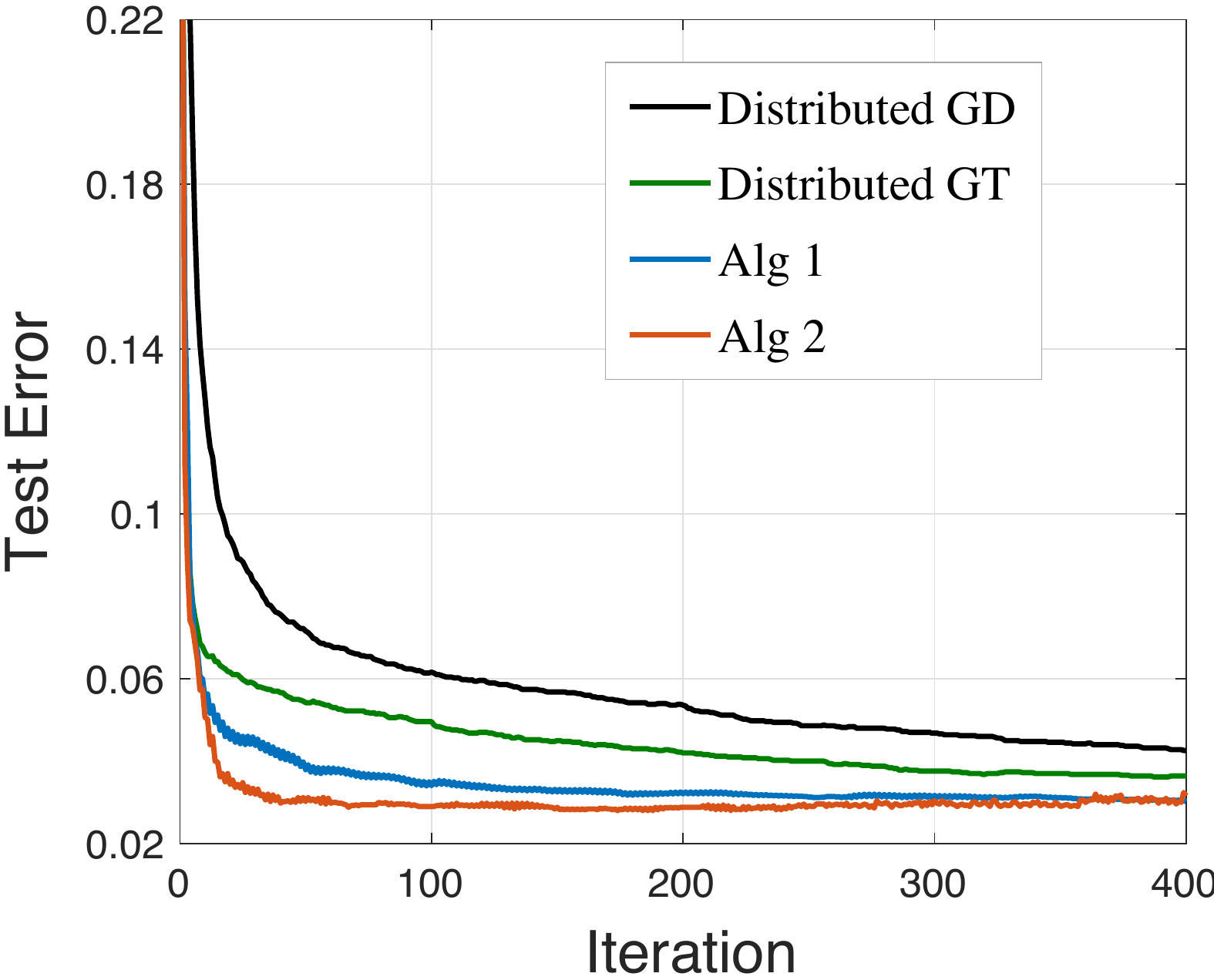}
\;\;\centering
\includegraphics[width=4.1cm,height=3.3cm]{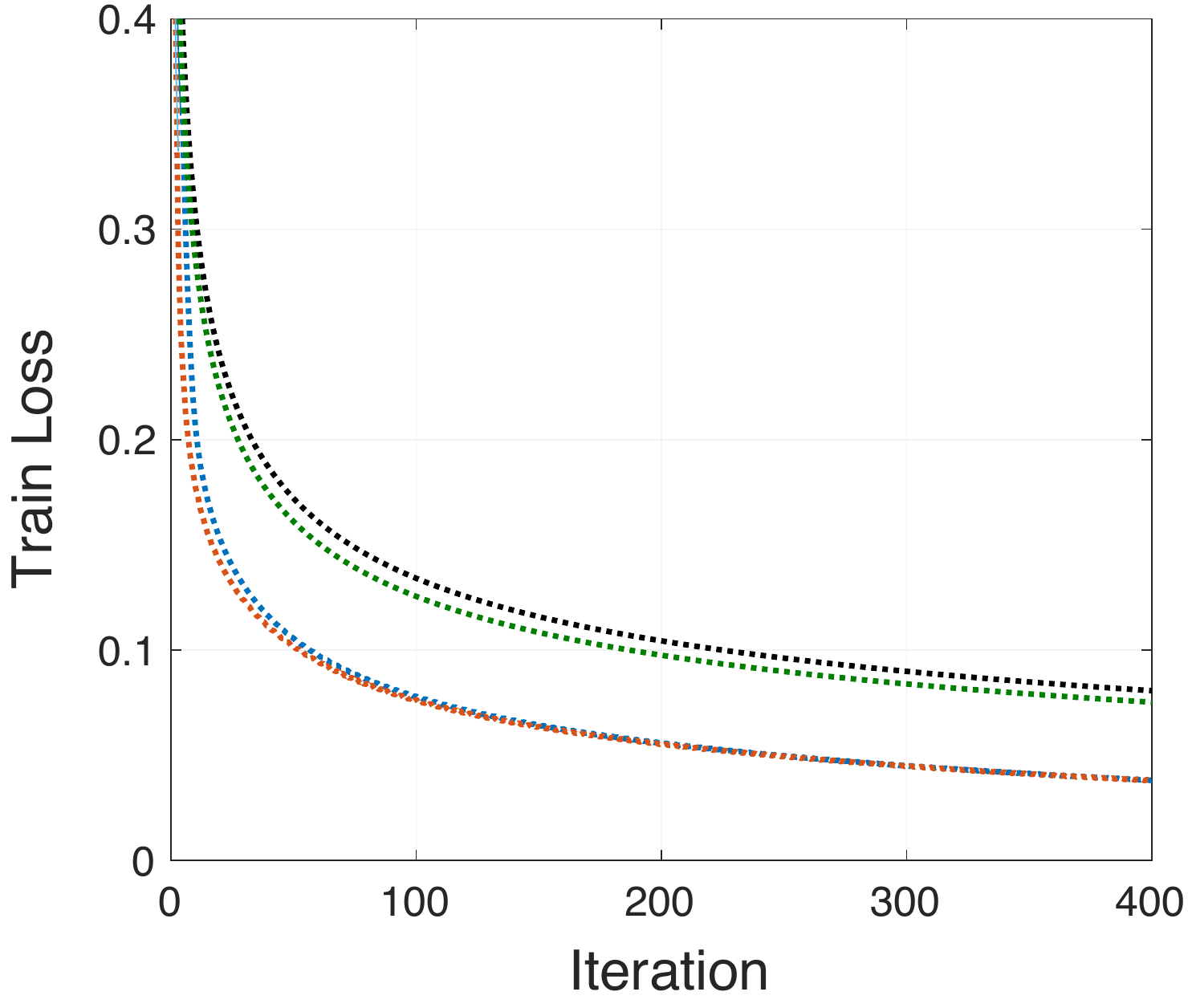}
\includegraphics[width=4.1cm,height=3.3cm]{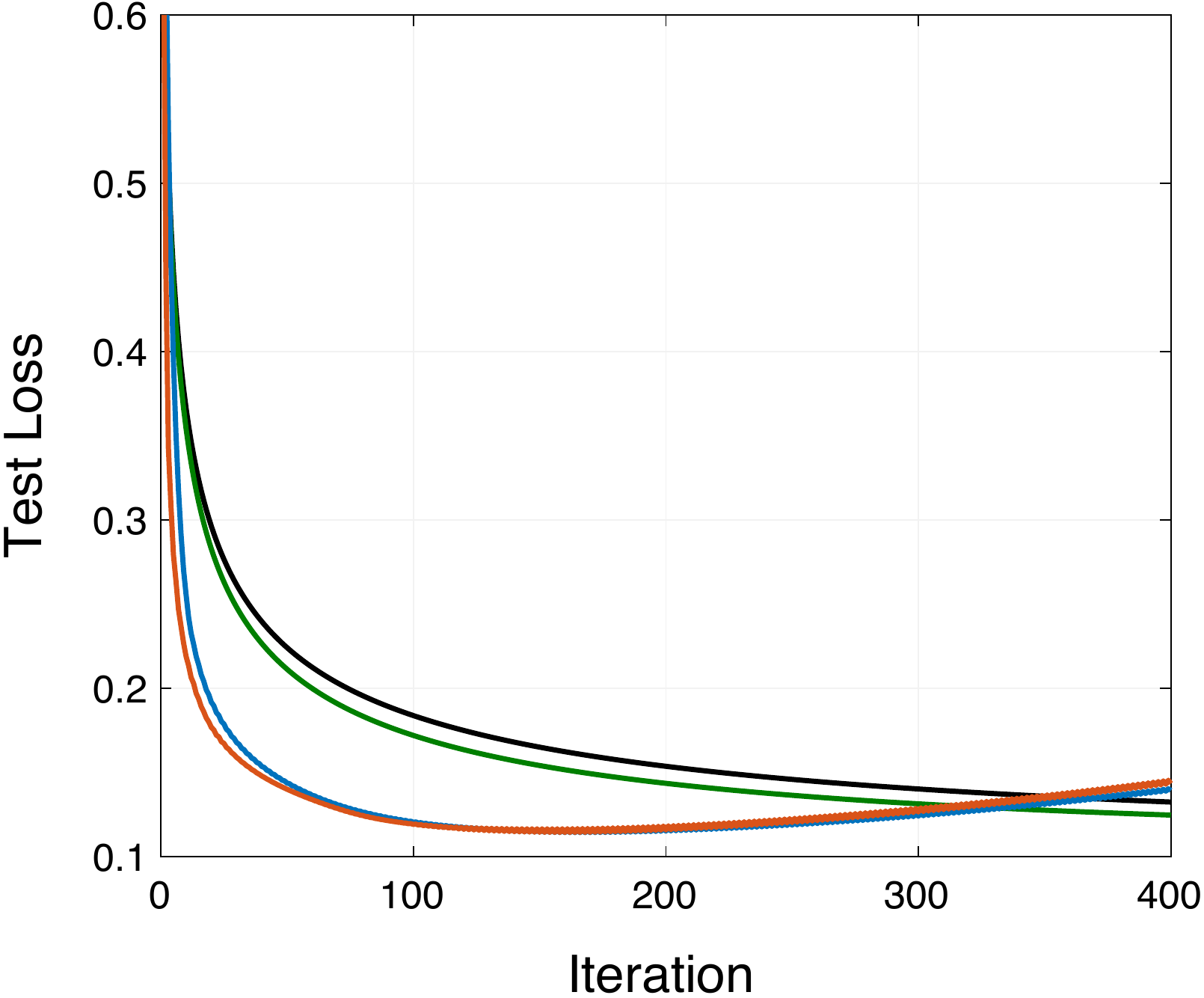}
\centering
\caption{Training/test misclassification errors and train/test losses for our proposed algorithms compared to the decentralized gradient descent and gradient tracking algorithms on synthetic data $y=\sign(a^\top w^\star)$ with $n=800,d=50$.}
\label{fig:fig2}
\vspace{0.2in}
\end{figure*}
An interesting phenomenon in Fig. \ref{fig:fig2} (see also Fig. \ref{fig:3}(Right)) is the behavior of test loss: while during the starting phase the test loss is monotonically decreasing, after sufficient iterations the test loss starts increasing. This behavior of test loss is indeed captured by the bounds on the test loss of DGD in Theorems \ref{thm:testloss_cvx}-\ref{lem:exp_dsc} and Remarks \ref{rem:logr}-\ref{rem:improved}. In particular, the increase in test loss is observed in the bound for the test loss $O(\frac{(\log T )^2}{\sqrt{T}}+\frac{(\log T)^2}{n})$ in Remark \ref{rem:logr}, where the presence of the term $\frac{(\log T)^2}{n}$ suggests that the bound after sufficient iterations starts to slowly increase. See also Remark \ref{rem:Log}. 

\subsection{Experiments on convergence of DGD}\label{sec:furtherex}
\begin{figure*}[t!]
\includegraphics[width=5cm,height=3.7cm]{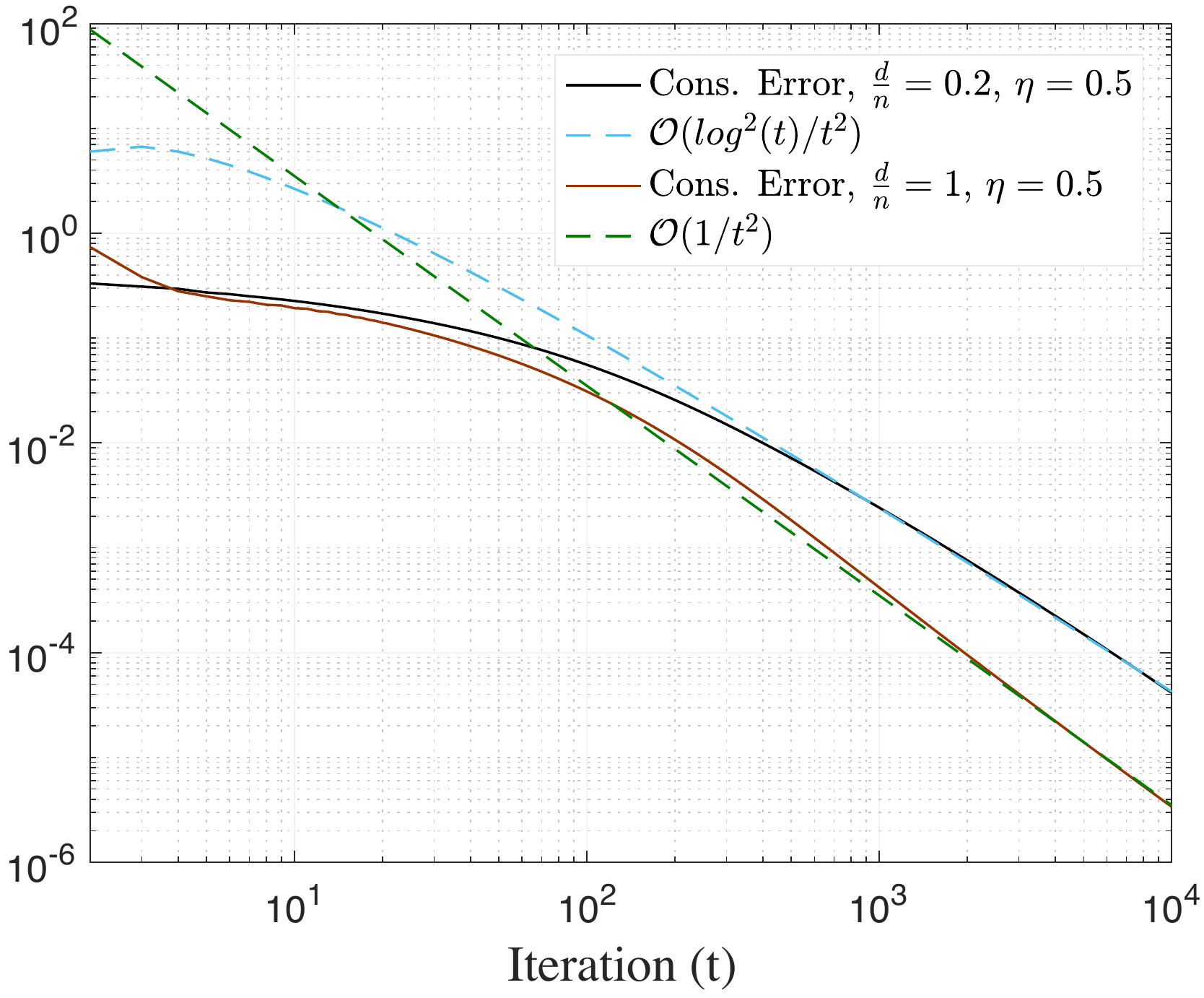}
\;\centering
%\caption{Misclassification Train Errors(\%) for our proposed algorithms compared to the vanilla decentralized Gradient-Descent algorithm in logistic regression with Signed measurements.}
\; \includegraphics[width=4.8cm,height=3.7cm]{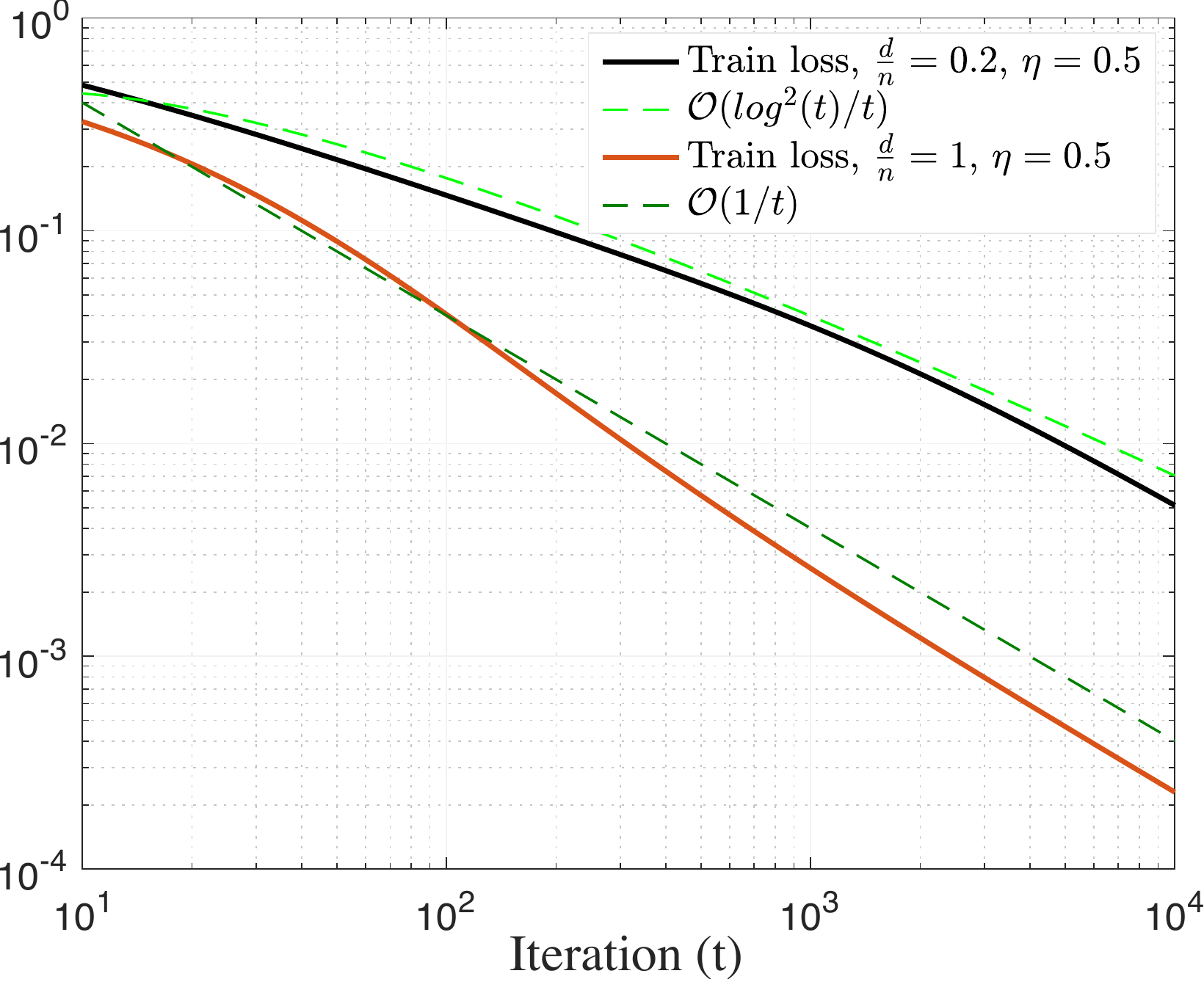}
\centering
\;\;\includegraphics[width=5.2cm,height=3.7cm]{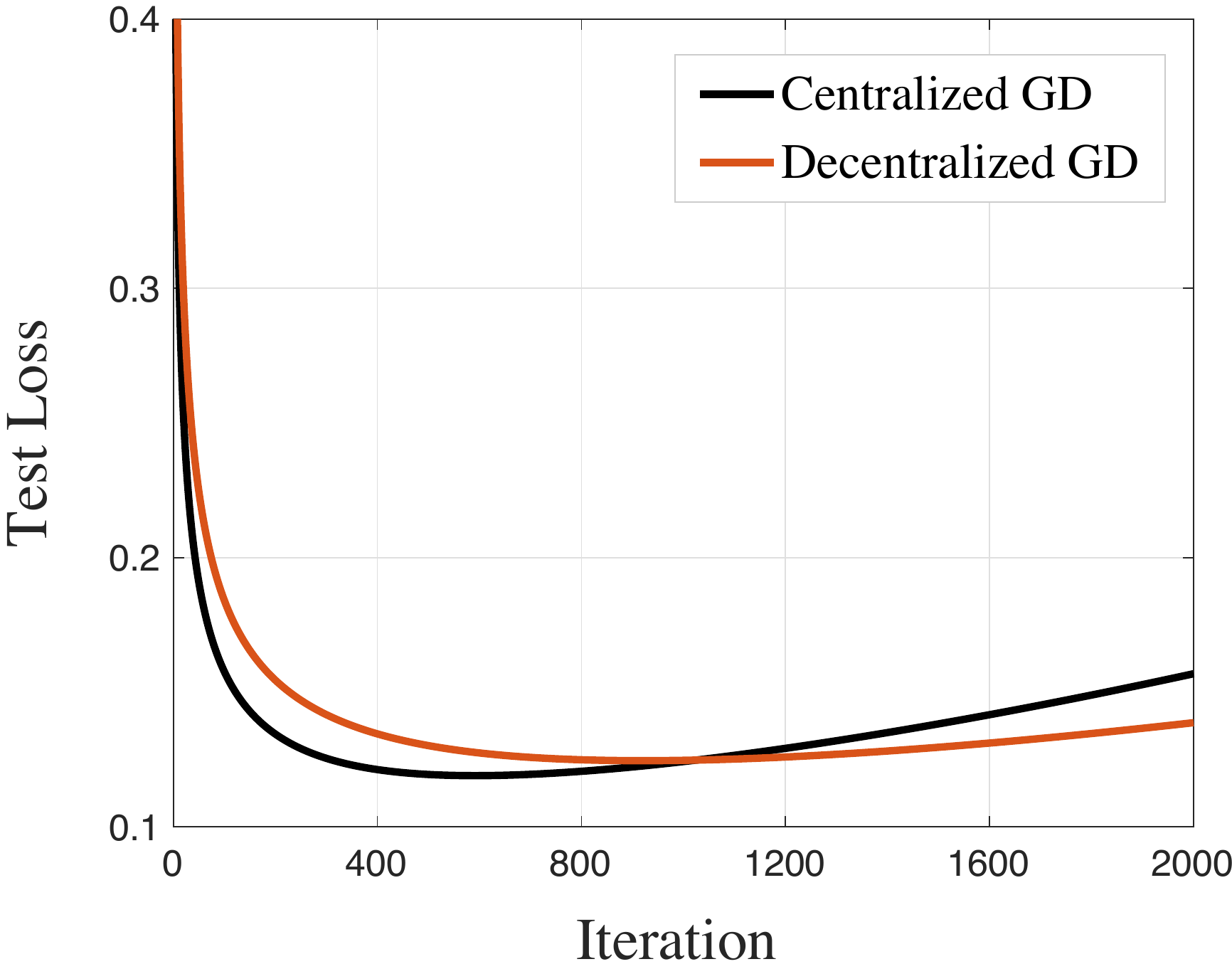}
\caption{Consensus error, train loss and test loss for DGD with exponential loss and linearly separable data. Left and middle plots verify the rates $\tilde{O}(1/t^2)$ and $\tilde{O}(1/t)$ (Theorem \ref{lem:exp_dsc} and Remark \ref{rem:improved}) for \emph{consensus error} and \emph{training loss} of DGD. 
%with logistic regression, respectively. 
Right plot shows test loss for DGD and GD show approximately similar convergence behavior under separable data.}
\label{fig:3}
\end{figure*}

Next, we investigate the convergence behavior of DGD for the train loss and the consensus error. We consider the same network topology, mixing matrix and data setup as in the last figure. Left and middle plots in Fig. \ref{fig:3} show the consensus error $\frac{1}{N}\|W^{(t)}-\bar W^{(t)}\|_F^2$ and the train loss $\hat F(\bar w^{(t)})$ in solid lines, for two over-parameterization ratios $d/n$. We recall that $d$ and $n$ represent the dimension of the ambient space and the dataset size, respectively.
 The dashed lines help to show for each solid line, the approximate rate of convergence after sufficient number of iterations. Notably, we observe that the convergence rates on the consensus error ($\tilde{O}(1/t^2)$) and on the train error ($\tilde{O}(1/t)$) stated in Remark \ref{rem:improved} are attained in both cases (recall that $\tilde{O}(\cdot)$ hides logarithmic factors). 
Fig. \ref{fig:3} (Right) depicts the Test loss of DGD for $d/n=0.05$. For comparison, the corresponding curve for centralized GD is also shown. Here step-sizes are fine-tuned to represent the best of each algorithm. In agreement with our findings in Remark \ref{rem:improved}, we observe approximately similar behavior between the convergence behavior of two algorithms. 
%As a final remark, 
As before, the slight increase in the curves of test loss are due to the logarithmic factors in test loss upperbouds. 
%%%%%%%%%%%%%%%%%%%%%%%%%%%%%%%%%%%%%%%%%%%%%%%%%%%%%%%%%%%%%%%%%%%%%%%%%%%%%%%%%%%%%%%%%%%%%%%%%%%%%%%%%%%%%%%%%%%%%%%%%%%%%%%%%%%%%%%%%%%%%%%%%%%%%%%%%%%%%%%%%%%%%%%%%%%%%%%%%%%%%%%%
\section{CONCLUSIONS}\label{sec:conc}
We studied the behavior of train loss and test loss of decentralized gradient descent (DGD) methods when training dataset is separable. To the best of our knowledge, this yields the first rigorous guarantees for the generalization error of DGD in such a setting. For the same setting, we also proposed fast algorithms and empirically verified that they accelarate both training and test accuracy. We believe our work opens several directions, with perhaps the 
%Finally, we note that this line of work can be extended in several directions, with the 
most exciting one being the analysis of non-convex objectives. We are also interested in extending our results to other distributed settings such as federated learning \cite{kairouz2021advances} and Gradient Tracking e.g., \cite{nedic2017achieving}.
\section*{Acknowledgements}
This work is partially supported by NSF Grant CCF-2009030.
%%%%%%%%%%%%%%%%%%%%%%%%%%%%%%%%%%%%%%%%%%%%%%%%%%%%%%%%
\bibliographystyle{apalike}
\bibliography{main_adv}
\clearpage

\clearpage
\appendix
\onecolumn
\section*{APPENDIX}
In this section, we present the proofs of all theorems and lemmas stated in the main body. We organize the appendix as follows, 
\par
The formal statement and proof of Lemma \ref{lem:main} are included in Appendix \ref{sec:lemma1}. 
\par
The proofs for Section \ref{sec:convex} are included in Appendix \ref{sec:convex_proof}. 
\par
The proofs for Section \ref{sec:LIC} are included in Appendix \ref{sec:LIC_proofs}. 
\par
The proofs for Section \ref{sec:polyak} are included in Appendix \ref{sec:polyak_proofs}. 
\par
The proof of Theorem \ref{thm:alg} is included in Appendix \ref{sec:alg_proofs}.
\par
Auxiliary results on our assumptions are included in Appendix \ref{sec:aux}.
\par
Finally, we conduct complementary experiments in Appendix \ref{sec:appG}. 
\subsection*{Notation}
Throughout the appendix we use the following notations:
\begin{align*}
&\bar w:= \frac{1}{N} \sum_{\ell=1}^N w_\ell,~~~\bar W := [\bar w,\bar w,\cdots,\bar w]^\top\in\R^{N\times d},\\[3pt]
&W:= [w_1,\cdots,w_N]^\top\in \R^{N\times d},\\[3pt]
&\hat F(W):=\frac{1}{N}\sum_{\ell=1}^N \hat F_\ell(w_\ell),\\[3pt]
&\nabla\hat F(w):= \frac{1}{n}\sum_{i=1}^n \nabla f(w,x_i),\\[3pt]
&\nabla \hat F(W) := [\nabla \hat F_1 (w_1),\nabla\hat F_2(w_2),\cdots,\nabla\hat F_N(w_N)]^\top\in \R^{N\times d},\\[3pt]
&\bar\nabla \hat F(W):= \frac{1}{N} \sum_{\ell=1}^N \nabla \hat F_\ell(w_\ell),\\[3pt]
&\nabla\hat F_\ell({w_\ell}):= \frac{N}{n} \sum_{x_j\in\mathcal{S}_\ell} \nabla f(w_\ell,x_j).
\end{align*}
where recall that $d$ is the dimension of ambient space, $n$ is the total sample size, $N$ is the number of agents and each agent has access to $n/N$ samples. 

\section{Proof of Lemma \ref{lem:main}}\label{sec:lemma1}
%\begin{lemma}
%\bea
%F(\bar w^{(T)})&\le  \hat F( \bar w^{(T)}) +2G T^{1-\alpha} c/n (\sum_{t=1}^T\eta \hat F(\bar {w}^{(t)}))^{\alpha} +4GL\sum_{t=1}^T \eta\|W^{(t)}-\bar W^{(t)}\|_F
%\eea
%\end{lemma}
\begin{lemma}[Formal statement of Lemma \ref{lem:main}]\label{lem:main-formal}
Consider the iterates of decentralized gradient descent in Eq.\eqref{eq:dec_main} with a fixed positive step-size $\eta\le \frac{2}{L}$. Let Assumptions \ref{ass:mixing}-\ref{ass:self-bounded} hold. Then for the test loss $F$ at iteration $T\ge1$, it holds that
\bea\label{eq:lem-formal}
\E\left[F(\bar w^{(T)})\right] \le &\;\; 4 \E\left[\hat F( \bar w^{(T)})\right] +   \frac{9L^2 c^2\eta^2T^{2}}{n^{3-2\alpha}}\E\left[\left(\frac{1}{T}\sum_{t=1}^{T}\hat F(\bar {w}^{(t)})\right)^{2\alpha}\right] \\
&+\frac{9L^4\eta^2}{N} \E\left[\left(\sum_{t=1}^{T}\|W^{(t)}-\bar W^{(t)}\|_F\right)^2\right]+\frac{9L^4\eta^2}{N} \frac{1}{n}\sum_{i=1}^n \E\left[\left(\sum_{t=1}^{T}\|W_{\neg i}^{(t)}-\bar W_{\neg i}^{(t)}\|_F\right)^2\right].\nn
\eea

where the expectation is over training samples and $W_{\neg i}^{(t)},\bar{W}_{\neg i}^{(t)}$ denote the parameter matrix and averaged parameter matrix at iteration $t$ for the DGD algorithm when the $i$-th data sample is left out. 
\end{lemma}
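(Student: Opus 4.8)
The plan is to establish \eqref{eq:lem-formal} by an algorithmic-stability argument tailored to convex self-bounded losses, following the on-average / leave-one-out stability philosophy of \cite{bousquet2002stability,hardt2016train,lei2020fine,schliserman2022stability}. First I would decompose $\E[F(\bar w^{(T)})]$ into the training loss plus a generalization gap and rewrite the gap via the standard symmetrization: since the $n$ samples are i.i.d., a leave-one-out argument gives $\E[F(\bar w^{(T)})]-\E[\hat F(\bar w^{(T)})]=\frac1n\sum_{i=1}^n\E[f(\bar w^{(T)},x_i)-f(\bar w_{\neg i}^{(T)},x_i)]$, where $\bar w_{\neg i}^{(T)}$ is the averaged DGD iterate run on the dataset with the $i$-th sample removed from its agent. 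This reduces the whole estimate to controlling, on average over $i$, the single-sample perturbation $\|\bar w^{(T)}-\bar w_{\neg i}^{(T)}\|$.

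Next I would bound each per-sample difference using convexity (Assumption \ref{ass:convex}), $f(\bar w^{(T)},x_i)-f(\bar w_{\neg i}^{(T)},x_i)\le\langle\nabla f(\bar w^{(T)},x_i),\bar w^{(T)}-\bar w_{\neg i}^{(T)}\rangle\le\|\nabla f(\bar w^{(T)},x_i)\|\,\|\bar w^{(T)}-\bar w_{\neg i}^{(T)}\|$, and then replace the gradient norm by $c\,f(\bar w^{(T)},x_i)^\alpha$ via self-boundedness (Assumption \ref{ass:self-bounded}). This is the step that injects the $f^{2\alpha}$-dependence: after summing over $i$, the norm inequality $\sum_i a_i^{2\alpha}\le(\sum_i a_i)^{2\alpha}$ (valid since $2\alpha\ge1$) turns $\sum_i f(\bar w^{(T)},x_i)^{2\alpha}$ into $n^{2\alpha}\hat F(\bar w^{(T)})^{2\alpha}$, which is where the powers of $n$ in the denominator begin to emerge.

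The crux is the perturbation recursion for $\delta^{(t)}:=\bar w^{(t)}-\bar w_{\neg i}^{(t)}$. The key observation is that the averaged iterate obeys $\bar w^{(t+1)}=\bar w^{(t)}-\eta\nabla\hat F(\bar w^{(t)})+\eta e^{(t)}$, i.e. a \emph{centralized} gradient step on $\hat F$ plus an error $e^{(t)}$ whose norm is controlled, through smoothness (Assumption \ref{ass:smooth}), by the consensus quantity $\tfrac{L}{N}\sum_\ell\|w_\ell^{(t)}-\bar w^{(t)}\|$. Since $w\mapsto w-\eta\nabla\hat F(w)$ is non-expansive for convex $L$-smooth $\hat F$ whenever $\eta\le2/L$, subtracting the two recursions makes the leading centralized parts contract and leaves exactly three driving terms per step: the consensus error of the full run, the consensus error of the leave-one-out run, and the single-sample gradient discrepancy $\tfrac1n\nabla f(\cdot,x_i)$ from the removed point (again bounded by $\tfrac cn f^\alpha$). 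Unrolling from $\delta^{(1)}=0$ then writes $\|\delta^{(T)}\|$ as a sum over $t$ of these three contributions; converting the per-agent norm sums into $\|W^{(t)}-\bar W^{(t)}\|_F$ via \Cauchy explains both the $1/N$ prefactors and why two distinct consensus terms (for $W^{(t)}$ and $W_{\neg i}^{(t)}$) appear in \eqref{eq:lem-formal}.

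Finally I would substitute this bound back, collect the three families of products, and apply Young's inequality to decouple each product of a loss factor and a perturbation factor; this converts the optimistic, square-root-of-loss stability estimate into the additive form claimed, absorbing one loss contribution into the leading $4\E[\hat F(\bar w^{(T)})]$ term while the remaining pieces become the $\eta^2 T^2/n^{3-2\alpha}$ loss term and the squared consensus terms (the extra factors of $T$ coming from accumulating the step-wise perturbations and a \Cauchy over the time index). The main obstacle is precisely this last bookkeeping: correctly separating the genuine centralized non-expansion from the decentralization (consensus) correction through the unrolled recursion, and chaining the self-bounded gradient estimates via \Cauchy and Young so as to land on the exact powers of $n$, $T$, and $\eta$ in \eqref{eq:lem-formal} rather than looser ones.
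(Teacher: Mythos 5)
Your core argument matches the paper's proof almost step for step: viewing the averaged DGD iterate as a centralized gradient step plus a consensus correction bounded via smoothness by $\frac{L}{\sqrt{N}}\|W^{(t)}-\bar W^{(t)}\|_F$, using non-expansiveness of the (leave-one-out) gradient map for $\eta\le 2/L$, bounding the removed sample's contribution by $\frac{c\eta}{n}\,f(\bar w^{(t)},x_i)^{\alpha}$, unrolling from zero initialization, squaring, and averaging over $i$ with concavity of $x^{\alpha}$ and superadditivity of $x^{2\alpha}$ — this is exactly how the paper obtains the stability estimate $\frac1n\sum_i\E\big[\|\bar w^{(T)}-\bar w^{(T)}_{\neg i}\|^2\big]\lesssim \frac{c^2\eta^2T^2}{n^{3-2\alpha}}\big(\frac1T\sum_t\hat F(\bar w^{(t)})\big)^{2\alpha}+\frac{\eta^2L^2}{N}\big(e^{(T)}+\frac1n\sum_i e^{(T)}_{\neg i}\big)$, with the right powers of $n$, $T$, $\eta$ and both consensus terms.

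The genuine gap is in your first and second paragraphs, i.e.\ the conversion from stability to generalization. The identity you invoke is false as stated: since $x_i$ is independent of $\bar w^{(T)}_{\neg i}$, the leave-one-out trick gives $\E[f(\bar w^{(T)}_{\neg i},x_i)]=\E[F(\bar w^{(T)}_{\neg i})]$, so symmetrization relates $\E[\hat F(\bar w^{(T)})]$ to $\frac1n\sum_i\E[F(\bar w^{(T)}_{\neg i})]$ — the population risk of the \emph{leave-one-out} models — and not to $\E[F(\bar w^{(T)})]$ as your decomposition claims. Moreover, your convexity step bounds $f(\bar w^{(T)},x_i)-f(\bar w^{(T)}_{\neg i},x_i)$ from above, which is the difference in the opposite (non-generalization) direction. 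For unbounded, non-Lipschitz losses, the passage from the on-average stability quantity to a bound on $\E[F(\bar w^{(T)})]$ is precisely the delicate point; the paper does not re-derive it but invokes \cite[Lemma 7]{schliserman2022stability} (equivalently \cite[Theorem 2]{lei2020fine}), which states $\E[F(w)]\le 4\,\E[\hat F(w)]+\frac{3L^2}{n}\sum_{i}\E[\|w-w_{\neg i}\|^2]$ — this is where the multiplicative factor $4$ and the $L^2$ prefactors in \eqref{eq:lem-formal} come from, and note these are smoothness-based (via $\|\nabla f_x(w)\|\le\sqrt{2Lf_x(w)}$), not based on the general $(c,\alpha)$ self-boundedness you use at this stage. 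Your planned Young-inequality absorption gestures at the right mechanics, but as sketched it would both start from the incorrect identity and inject terminal-iterate terms of the form $\hat F(\bar w^{(T)})^{2\alpha}$ into the bound, whereas in \eqref{eq:lem-formal} the loss enters this channel only through the time-averaged $\big(\frac1T\sum_t\hat F(\bar w^{(t)})\big)^{2\alpha}$ produced by the recursion. To repair the proof, either cite the lemma above or reprove it (smoothness at the held-out point plus a Young step to absorb the population-loss factor into the $4\,\E[\hat F]$ term); with that in hand, the rest of your derivation goes through essentially verbatim.
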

\begin{proof}
The proof relies on algorithmic stability\cite{bousquet2002stability,hardt2016train}. Specifically, we build on the framework introduced by \cite{lei2020fine} (and also used recently by \cite{schliserman2022stability}). Unlike these works, our analysis is for decentralized gradient descent.

We define $w_{\ell,\neg i}^{(t)}$ as the parameter of agent $\ell$ resulting from decentralized gradient descent at iteration $t$, when the $i^{\rm th}$ training sample $i\le n$ is left out during training. We emphasize that the $i^{th}$ sample may or may not belong to the dataset of agent $\ell$. 

We define $\bar w_{\neg i}^{(t)}\in \R^d$
\begin{align*}
\bar w_{\neg i}^{(t)}&:=\frac{1}{N} \sum_{j=1}^N w_{j,\neg i}^{(t)},
\end{align*}
as the average of all agents' parameters at iteration $t$, when the $i^{th}$ sample is left out of the algorithm.  
Thus, the parameter matrices $W_{\neg i}^{(t)},\bar W_{\neg i}^{(t)}\in \R^{N\times d}$ are defined as follows,
\begin{align*}
W_{\neg i}^{(t)}&:= [w_{1,\neg i}^{(t)},w_{2.\neg i}^{(t)},\cdots,w_{N,\neg i}^{(t)}],\\
\bar W_{\neg i}^{(t)}&:= [\bar w_{\neg i} ^{(t)},\bar w_{\neg i} ^{(t)},\cdots,\bar w_{\neg i} ^{(t)}].
\end{align*}
The first step in the proof is to bound the term $\frac{1}{n}\sum_{i=1}^n \|\bar w^{(t)} - \bar w_{\neg i}^{(t)}\|^2$. By definition of DGD in Eq.\eqref{eq:dec_main}, we have the following update rule for the averaged parameter,
\bea\nn
\bar w^{(t+1)}=\bar w^{(t)} - \eta  \bar \nabla \hat F (W^{(t)}).
\eea
Analogously,
\bea\nn
\bar w_{\neg i}^{(t+1)}=\bar w_{\neg i}^{(t)} - \eta  \bar \nabla \hat F (W_{\neg i}^{(t)})=\bar w_{\neg i}^{(t)} - \frac{\eta}{n}\sum_{\ell=1}^N\sum_{x_j\in S_\ell, x_j\neq x_i} \nabla f (w_{\ell,\neg i}^{(t)},x_j).
\eea

Thus by adding and subtracting $\bar \nabla \hat F (\bar W^{(t)})$ and $\bar \nabla \hat F (\bar W_{\neg i}^{(t)})$, we have
\begin{align*}
&\|\bar w^{(t+1)} - \bar w_{\neg i}^{(t+1)}\| \\
&~~~~= \Big\| \bar w^{(t)} - \eta  \bar \nabla \hat F (W^{(t)})- (\bar w_{\neg i}^{(t)} - \eta \bar \nabla \hat F(W_{\neg i}^{(t)}))\Big\|\\[3pt]
&~~~~ = \Big\| \bar w^{(t)} - \eta  \bar \nabla \hat F (\bar W^{(t)}) +  \eta (\bar \nabla \hat F ( W^{(t)}) - \bar \nabla \hat F (\bar W^{(t)}) ) \\
& \hspace{1.5in}- (\bar w_{\neg i}^{(t)} - \eta \nabla \hat F(\bar W_{\neg i}^{(t)})) + \eta \bar \nabla \hat F(W_{\neg i}^{(t)}) - \eta  \nabla \hat F(\bar W_{\neg i}^{(t)})\Big\| \\[3pt]
& ~~~~\le \Big\| \bar w^{(t)} - \eta  \bar \nabla \hat F (\bar W^{(t)})   - (\bar w_{\neg i}^{(t)} - \eta \bar \nabla \hat F(\bar W_{\neg i}^{(t)}))\Big\| \\
&\hspace{1.5in}+ \eta \Big\|\bar \nabla \hat F(W_{\neg i}^{(t)}) -  \nabla \hat F(\bar W_{\neg i}^{(t)})\Big\| + \eta \Big\|\bar \nabla \hat F ( W^{(t)}) -  \nabla \hat F (\bar W^{(t)}) \Big\|.
\end{align*}
For the last term, using smoothness, we can write
\begin{align*}
 \Big\|\bar \nabla \hat F ( W^{(t)}) -  \nabla \hat F (\bar W^{(t)}) \Big\| &= \frac{1}{N}\Big\|\sum_{\ell=1}^N \nabla \hat F_{\ell}(w_\ell^{(t)})-\nabla \hat F_\ell(\bar w^{(t)})\Big\| \\
 &\le \frac{1}{N}\sum_{\ell=1}^N \Big\| \nabla \hat F_\ell (w_{\ell}^{(t)})-\nabla \hat F_\ell(\bar w^{(t)})\Big\|\\
 &\le \frac{L}{N} \sum_{\ell=1}^N \| w_\ell^{(t)} - \bar w^{(t)}\|\\
 &\le  \frac{L}{\sqrt{N}}  (\sum_{\ell=1}^N \| w_\ell^{(t)} - \bar w^{(t)}\|^2)^{1/2}= \frac{L}{\sqrt{N}} \|W^{(t)}-\bar W^{(t)}\|_F.
\end{align*}
The second term is upper-bounded similarly. Using these bounds, splitting the gradient $\nabla \hat F (\bar W^{(t)})=\frac{1}{n}\sum_{i^\prime\ne i} f(\bar w^{(t)},x_{i^\prime}) + \frac{1}{n} f(\bar w^{(t)},x_i)$, using smoothness and convexity $\|w+\eta\nabla f(w)-v-\eta\nabla f(v)\|\le \|w-v\|$ for $\eta\le2/L$ \cite{nesterov2003introductory} and employing Assumption \ref{ass:self-bounded} we can write
\begin{align*}
\|\bar w^{(t+1)} - \bar w_{\neg i}^{(t+1)}\| 
& \le \| \bar w^{(t)} - \eta   \nabla \hat F (\bar W^{(t)})   - (\bar w_{\neg i}^{(t)} - \eta  \nabla \hat F(\bar W_{\neg i}^{(t)}))\| \\
&\hspace{1.5in}+ \frac{\eta L}{\sqrt{N}} \|W_{\neg i}^{(t)}-\bar W_{\neg i}^{(t)}\|_F  + \frac{\eta L}{\sqrt{N}}\|W^{(t)}-\bar W^{(t)}\|_F   \\
&\le \frac{1}{n} \sum_{i^\prime\neq i}\|\bar {w}^{(t)} - \eta \nabla f(\bar {w}^{(t)},x_{i^\prime}) - \bar w_{\neg i}^{(t)} + \eta \nabla f(\bar w_{\neg i}^{(t)},x_{i^\prime})\| + \frac{1}{n} \|\bar {w}^{(t)} - \eta \nabla f(\bar {w}^{(t)},x_i)-\bar w_{\neg i}^{(t)}\|\\
& \hspace{1.5in}+ \frac{\eta L}{\sqrt{N}} \|W_{\neg i}^{(t)}-\bar W_{\neg i}^{(t)}\|_F + \frac{\eta L}{\sqrt{N}}\|W^{(t)}-\bar W^{(t)}\|_F \\
&\le \| \bar {w}^{(t)} - \bar w_{\neg i}^{(t)}\| + \frac{\eta}{n} \|\nabla f(\bar {w}^{(t)},x_i)\|+\frac{\eta L}{\sqrt{N}}\|W_{\neg i}^{(t)}-\bar W_{\neg i}^{(t)}\|_F + \frac{\eta L}{\sqrt{N}}\|W^{(t)}-\bar W^{(t)}\|_F \\
&\le \| \bar {w}^{(t)} - \bar w_{\neg i}^{(t)}\| + \frac{c \eta}{n} \; (f(\bar {w}^{(t)},x_i))^{\alpha}+\frac{\eta L}{\sqrt{N}}\|W_{\neg i}^{(t)}-\bar W_{\neg i}^{(t)}\|_F + \frac{\eta L}{\sqrt{N}}\|W^{(t)}-\bar W^{(t)}\|_F.
\end{align*}

By summing over $t\in[T]$,
\begin{align*}
\|\bar w^{(T+1)} - \bar w_{\neg i}^{(T+1)}\| &\le \frac{c\eta}{n}\sum_{t=1}^{T} (f(\bar {w}^{(t)},x_i))^{\alpha} +\frac{\eta L}{\sqrt{N}} \sum_{t=1}^{T}\|W_{\neg i}^{(t)}-\bar W_{\neg i}^{(t)}\|_F 
+\frac{\eta L}{\sqrt{N}}\sum_{t=1}^{T}\|W^{(t)}-\bar W^{(t)}\|_F.
\end{align*}
We define for the ease of notation the following two consensus terms, 
\begin{align}\label{eq:consensus_terms}
e^{(T)} &:= \Big(\sum_{t=1}^{T}\|W^{(t)}-\bar W^{(t)}\|_F\Big)^2\qquad\text{and}\qquad
e_{\neg{i}}^{(T)} :=  \Big(\sum_{t=1}^{T}\|W_{\neg i}^{(t)}-\bar W_{\neg i}^{(t)}\|_F\Big)^2.
\end{align}
Thus the bound for the squared term can be written as follows
\begin{align*}
\|\bar w^{(T+1)} - \bar w_{\neg i}^{(T+1)}\|^2 &\le \frac{3c^2 \eta^2}{n^2}\left(\sum_{t=1}^{T} \left(f(\bar {w}^{(t)},x_i)\right)^{\alpha}\right)^2 +\frac{3\eta^2 L^2}{N} e^{(T)} +\frac{3\eta^2 L^2}{N} e_{\neg{i}}^{(T)}.
\end{align*}

By averaging over $i\in[n]$ and noting that $\alpha\in [1/2,1]$ so that $x^\alpha$ is concave, we conclude that 
\begin{align*}
\frac{1}{n}\sum_{i=1}^n \|\bar w^{(T+1)} - \bar  w^{(T+1)}_{\neg i}\|^2 
&\le\frac{3c^2\eta^2}{n^3}\sum_{i=1}^n\left(\sum_{t=1}^{T} \left(f(\bar {w}^{(t)},x_i)\right)^{\alpha}\right)^2+\frac{3\eta^2 L^2}{Nn} \sum_{i=1}^n e_{\neg{i}}^{(T)} +\frac{3\eta^2 L^2}{N}e^{(T)}
\\
&\le \frac{3c^2\eta^2T^2}{n^3}\sum_{i=1}^n\left(\frac{1}{T}\sum_{t=1}^{T}f(\bar {w}^{(t)},x_i)\right)^{2\alpha}+\frac{3\eta^2 L^2}{Nn} \sum_{i=1}^n e_{\neg{i}}^{(T)} +\frac{3\eta^2 L^2}{N}e^{(T)}\\
&\le \frac{3c^2\eta^2T^{2(1-\alpha)}}{n^{3-2\alpha}}\left(\sum_{t=1}^{T}\hat F(\bar {w}^{(t)})\right)^{2\alpha}+\frac{3\eta^2 L^2}{Nn} \sum_{i=1}^n e_{\neg{i}}^{(T)} +\frac{3\eta^2 L^2}{N}e^{(T)}.
\end{align*}
Thus we have for iteration $T$:
\begin{align}\label{eq:stabilityn}
\frac{1}{n} \sum_{i=1}^n \|\bar w^{(T)} - \bar w_{\neg i}^{(T)}\|^2 \le  \frac{3c^2\eta^2T^{2}}{n^{3-2\alpha}}\left(\frac{1}{T}\sum_{t=1}^{T}\hat F(\bar {w}^{(t)})\right)^{2\alpha}+\frac{3\eta^2 L^2}{Nn} \sum_{i=1}^n e_{\neg{i}}^{(T)} +\frac{3\eta^2 L^2}{N}e^{(T)}.
\end{align}
Next we use \cite[Lemma 7]{schliserman2022stability} (see also \cite[Theorem 2]{lei2020fine}), which states that for the $L$-smooth loss $f$, the test error of the output $w$ of an algorithm taking as input a dataset $(x_1,\ldots,x_n)$ size $n$, satisfies the following,
\begin{align*}
\E[F(w)]\le 4 \E[\hat F( w)] + \frac{3L^2}{n} \sum_{i=1}^n \E[\|w-w_{\neg i}\|^2],
\end{align*}
where expectations are taken over the training set $(x_1,x_2,\cdots,x_n)$. 
We replace $w$ with $\bar w^{(T)}$ and by using \eqref{eq:stabilityn} (which we can do because it holds true for all datasets since Assumptions \ref{ass:mixing}-\ref{ass:self-bounded} hold for every sample $x$ in the distribution), 
\begin{align*}
\E[F(\bar w^{(T)})]&\le 4 \E[\hat F( \bar w^{(T)})] + \frac{3L^2}{n} \sum_{i=1}^n \E[\| \bar w^{(T)} - \bar w_{\neg i}^{(T)}\|^2]\\
&\le 4 \E[\hat F( \bar w^{(T)})] +   \frac{9L^2 c^2\eta^2T^{2}}{n^{3-2\alpha}}\E[(\frac{1}{T}\sum_{t=1}^{T}\hat F(\bar {w}^{(t)}))^{2\alpha}]\\
&\hspace{1in}+\frac{9L^4\eta^2}{Nn} \sum_{i=1}^n \E[e_{\neg{i}}^{(T)}] +\frac{9L^4\eta^2}{N} \E[e^{(T)}].
\end{align*}
This leads to \eqref{eq:lem-formal} and completes the proof. 
\end{proof}

Finally, we explain the informal version of the lemma presented in the main body (Lemma \ref{lem:main}). Compared to the bound in Eq. \eqref{eq:lem-formal}, the informal Lemma \ref{lem:main} combines the consensus-error term $e^{(T)}$ with the average leave-one-out consensus-error term $\frac{1}{n}\sum_{i\in[n]}e_{\neg{i}}^{(T)}$ (recall the definitions in \eqref{eq:consensus_terms}). It is convenient doing that for the following reason. To apply Lemma \ref{lem:main-formal}, we need upper bounds on $e^{(T)}$ and $e_{\neg{i}}^{(T)}$ (for specific assumptions on the function class that is optimized). We do this in the section that follows. It turns out that the bounds we obtain for the consensus-error term $e^{(T)}$ also holds for the leave-one-out consensus error terms $e_{\neg{i}}^{(T)}, i\leq[n].$ The reason for that is that our bounds are not affected by the sample-size, but rather they depend crucially only on the smoothness parameter of the train loss. It is easy to see that the smoothness parameter of the leave-one-out train loss $\frac{1}{n}\sum_{i'\neq i}f(w,x_{i'})$ is upper-bounded by the smoothness parameter of  $\frac{1}{n}\sum_{i'}f(w,x_{i'})$. See Remark \ref{rem:loo_consensus} for more details.

% Finally, we note that we can combine the two terms for the the consensus error $e_{\neg{i}}^{(T)}, e^{(T)}$, as the consensus error is not affected (order-wise) by removing a sample from the dataset. This can be seen by noting that our bounds on the consensus are not affected by the sample-size\new{, e.g. see Eq. \ref{eq:con-lem12} in Lemma \ref{lem:restatement-trainloss-convex}}. Rather they depend crucially on the smoothness parameter $L$ which upper-bounds the smoothness parameter of the leave-one-out loss. This leads to the informal version of the lemma presented in the main body. 

\section{Proofs for Section \ref{sec:convex}}\label{sec:convex_proof}

\begin{lemma}[Recursions for the consensus error]\label{lem:con}
Let the step-size $\eta\le (1-\lambda)/4L$ where $\lambda:=\max((|\lambda_2 (A)|,|\lambda_N(A)|))^2$. The consensus error of DGD under Assumptions \ref{ass:mixing},\ref{ass:smooth} satisfies the following:
\bea\label{eq:lem8}
\|W^{(t)} - \bar W^{(t)}\|_F^2 < \alpha_1 \|W^{(t-1)} - \bar W^{(t-1)}\|_F^2 + \alpha_2 N \eta^2L^2  \hat F(\bar w^{(t-1)}),
\eea
where $\alpha_1 := \frac{3+\lambda}{4},\alpha_2:= 4(\frac{2}{1-\lambda}-1)$.
\end{lemma}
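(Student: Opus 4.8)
The plan is to turn \eqref{eq:dec_main} into a deterministic one-step contraction for the disagreement, by working in matrix form and projecting onto the consensus-orthogonal subspace. Let $J := \frac{1}{N}\mathbf{1}\mathbf{1}^\top$, so that $\bar W^{(t)} = J W^{(t)}$ and $\Delta^{(t)} := W^{(t)} - \bar W^{(t)} = (\mathbf{I}-J)W^{(t)}$. The matrix form of the update is $W^{(t)} = A W^{(t-1)} - \eta\,\nabla\hat F(W^{(t-1)})$. Using that $A$ is doubly stochastic (so $AJ = JA = J$) and that $J$ is an orthogonal projection, I would first derive the exact error-propagation identity $\Delta^{(t)} = (A-J)\Delta^{(t-1)} - \eta(\mathbf{I}-J)\nabla\hat F(W^{(t-1)})$; the key algebraic point is $(A-J)\bar W^{(t-1)} = 0$, which lets me replace $W^{(t-1)}$ by $\Delta^{(t-1)}$ in the first term.

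Next I would pass to Frobenius norms. By Assumption \ref{ass:mixing} the matrix $A$ is symmetric, so $A-J$ removes the top eigenpair and has eigenvalues $\{0\}\cup\{\lambda_i(A)\}_{i\ge 2}$; hence its operator norm is $\max(|\lambda_2(A)|,|\lambda_N(A)|) = \sqrt{\lambda}$. Combined with $\|\mathbf{I}-J\| = 1$, this yields the scalar recursion $\|\Delta^{(t)}\|_F \le \sqrt{\lambda}\,\|\Delta^{(t-1)}\|_F + \eta\,\|\nabla\hat F(W^{(t-1)})\|_F$. I would then square it via Young's inequality $(a+b)^2 \le (1+\theta)a^2 + (1+\tfrac{1}{\theta})b^2$ with the tuned choice $\theta = \frac{1-\lambda}{2\lambda}$, for which $(1+\theta)\lambda = \frac{1+\lambda}{2}$ and $1+\tfrac{1}{\theta} = \frac{1+\lambda}{1-\lambda}$.

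The gradient term is controlled by two uses of smoothness. Splitting $\nabla\hat F(W^{(t-1)})$ into the averaged gradient $\nabla\hat F(\bar W^{(t-1)})$ (rows $\nabla\hat F_\ell(\bar w^{(t-1)})$) plus the per-agent deviation, $L$-smoothness of each $\hat F_\ell$ gives $\|\nabla\hat F(W^{(t-1)})\|_F^2 \le 2\|\nabla\hat F(\bar W^{(t-1)})\|_F^2 + 2L^2\|\Delta^{(t-1)}\|_F^2$, and the self-bounding inequality for smooth nonnegative functions gives $\|\nabla\hat F(\bar W^{(t-1)})\|_F^2 = \sum_{\ell}\|\nabla\hat F_\ell(\bar w^{(t-1)})\|^2 \le 2L\sum_\ell \hat F_\ell(\bar w^{(t-1)}) = 2LN\,\hat F(\bar w^{(t-1)})$. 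Substituting these, the coefficient of $\|\Delta^{(t-1)}\|_F^2$ becomes $\frac{1+\lambda}{2} + 2(1+\tfrac{1}{\theta})\eta^2 L^2$, while the loss coefficient becomes $2(1+\tfrac{1}{\theta})\cdot 2LN\,\eta^2 = \frac{4(1+\lambda)}{1-\lambda}LN\eta^2 = \alpha_2 N\eta^2 L$, which is bounded by the stated $\alpha_2 N\eta^2 L^2$.

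The main obstacle—and where the step-size hypothesis is spent—is verifying that the $\|\Delta^{(t-1)}\|_F^2$ coefficient collapses to exactly $\alpha_1 = \frac{3+\lambda}{4}$. With $\eta \le (1-\lambda)/(4L)$ the correction term obeys $2\cdot\frac{1+\lambda}{1-\lambda}\eta^2 L^2 \le \frac{1-\lambda^2}{8}$, so the total coefficient is at most $\frac{1+\lambda}{2} + \frac{1-\lambda^2}{8}$, and this is $\le \frac{3+\lambda}{4}$ precisely because $1-\lambda^2 \le 2(1-\lambda)$ for $\lambda \in (0,1]$; this last inequality is the crux. I expect the delicate points to be (i) justifying $\|A-J\| = \sqrt{\lambda}$ from the spectral assumption on $A$, and (ii) coordinating the Young parameter $\theta$ with the step-size bound so that both $\alpha_1$ and $\alpha_2$ emerge in the exact stated form rather than as loose constants.
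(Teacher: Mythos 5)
Your proof is correct and takes essentially the same route as the paper's: the paper likewise contracts through the mixing step (via $A^\infty=\frac{1}{N}\mathbf{1}\mathbf{1}^\top$ and a column-wise spectral bound, rather than your exact identity $\Delta^{(t)}=(A-J)\Delta^{(t-1)}-\eta(\mathbf{I}-J)\nabla\hat F(W^{(t-1)})$), applies Young's inequality with the identical parameter $\beta=(1-\lambda)/(2\lambda)$, splits the gradient the same way, and uses the same self-bound $\|\nabla\hat F_\ell(\bar w)\|^2\le 2L\hat F_\ell(\bar w)$, arriving at exactly your coefficients $\frac{1+\lambda}{2}+\frac{2(1+\lambda)}{1-\lambda}\eta^2L^2\le\frac{3+\lambda}{4}$ and $\frac{4(1+\lambda)}{1-\lambda}\eta^2 LN$. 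The only caveat is your last step ``$\alpha_2 N\eta^2 L$ is bounded by the stated $\alpha_2 N\eta^2 L^2$,'' which silently needs $L\ge1$ --- but this reflects a typo in the lemma statement rather than a gap in your argument, since the paper's own proof likewise produces only the single factor $L$ and its downstream Lemmas \ref{lem:con_acc} and \ref{lem:con_ave} use $\alpha_2\eta^2 L$, so the $L^2$ in \eqref{eq:lem8} should read $L$.
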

\begin{proof}
Denoting $A^{\infty}:=\lim_{t\rightarrow\infty} A^t =\frac{1}{N}\one\one^T$, it holds by Assumption \ref{ass:mixing},
\begin{align}
\|AW-\bar W\|_F^2 &= \|(A - A^{\infty})(W-\bar W)\|_F^2
= \sum_{i=1}^N \|(A - A^{\infty}) (W_i-\bar W_i)\|^2\nn\\
&\le  \sum_{i=1}^N \|A - A^{\infty}\|^2 \|W_i-\bar W_i\|^2\le \max(\lambda_2^2 (A),\lambda_N^2(A)) \cdot \| W-\bar W\|_F^2\label{eq:mx},
\end{align}
where $W_i$ is the $i$ th column of $W$. By Assumption \ref{ass:mixing}, $\lambda=\max((|\lambda_2 (A)|,|\lambda_N(A)|))^2<1.$
For the consensus error, we can write,
\begin{align}
\|W^{(t)} - \bar W^{(t)}\|_F^2 &= \| W^{(t)} -\bar W^{(t-1)}-\bar W^{(t)} + \bar W^{(t-1)}\|_F^2\nn \\
&\le \| W^{(t)} - \bar W^{(t-1)}\|_F^2\nn\\
 &=\| A W^{(t-1)} - \eta \nabla\hat F(W^{(t-1)})- \bar W^{(t-1)}\|_F^2 \nn\\
 &\le (1+\beta)\| A W^{(t-1)}-\bar W^{(t-1)}\|_F^2 + (1+\beta^{-1})\eta^2\|\nabla \hat F(W^{(t-1)})\|_F^2,\nn
 \end{align}
 where the second step is due to $\|X-\bar X\|_F\le \|X\|_F$ \cite{koloskova2019decentralized,koloskova2020unified}. The last line holds for any $\beta>0$, due to $\|a+b\|^2\le(1+\beta^{-1})\|a\|^2+(1+\beta^{-1})\|b\|^2$.
 
 Based on this inequality and by noting \eqref{eq:mx} and using the $L-$smoothness assumption, we can deduce that,
 \begin{align}
\|W^{(t)} - &\bar W^{(t)}\|_F^2 \le (1+\beta)\lambda\|W^{(t-1)}-\bar W^{(t-1)}\|_F^2+ (1+\beta^{-1})\eta^2\|\nabla \hat F(W^{(t-1)})\|_F^2\nn\\[3pt]
 &\le (1+\beta)\lambda\|W^{(t-1)}-\bar W^{(t-1)}\|_F^2+ 2(1+\beta^{-1})\eta^2 \| \nabla \hat F(W^{(t-1)})-\nabla\hat F (\bar W^{(t-1)})\|_F^2 \nn\\&\hspace{1.5in}+ 2 (1+\beta^{-1})\eta^2\|\nabla \hat F(\bar W^{(t-1)})\|_F^2 \label{eq:con_lemma8} \\[3pt]
 &\le  (1+\beta)\lambda\|W^{(t-1)}-\bar W^{(t-1)}\|_F^2 + 2(1+\beta^{-1})\eta^2 L^2 \|W^{(t-1)}-\bar W^{(t-1)}\|_F^2
 \nn\\
 &\hspace{1.5in}+4 (1+\beta^{-1})\eta^2 LN  \hat F(\bar w _{t-1})\nn,
\end{align}
where the last step is due to $L-$smoothness and the non-negativity of $\hat F_\ell$, i.e.  
\begin{align*}
\|\nabla \hat F(\bar W^{(t-1)})\|_F^2 &= \sum_{\ell=1}^N \|\nabla \hat F_\ell (\bar w^{(t-1)})\|^2
\le {2 L} \sum_{\ell=1}^N (\hat F_\ell (\bar w^{(t-1)})-\hat F_\ell^\star)
\le 2L N\hat F(\bar w^{(t-1)}) . 
\end{align*}
%\ct{Why have such a loose bound in the last step here and not do $2L/N\hat{F}$}
Thus,
\begin{align*}
\|W^{(t)} - \bar W^{(t)}\|_F^2 < ((1+\beta)\lambda &+ 2(1+\beta^{-1})\eta^2 L^2) \|W^{(t-1)} - \bar W^{(t-1)}\|_F^2+ 4(1+\beta^{-1})\eta^2LN\,\hat F(\bar w^{(t-1)}).
\end{align*}
Next, choose $\beta=(1-\lambda)/(2\lambda)$. Then, it follows from the assumption  $\eta\le (1-\lambda)/4L$ that
\begin{align*}
&(1+\beta)\lambda + 2(1+\beta^{-1})\eta^2 L^2 <(3+\lambda)/4=\alpha_1,\\
&4(1+\beta^{-1})< 4(2/(1-\lambda)-1)=\alpha_2.
\end{align*}
This concludes the lemma. 
\end{proof}
By telescoping summation over the iterates $t=1,\cdots,T$ of the consensus error in Eq.\eqref{eq:lem8}, we end up with the consensus error at iteration $T$. The final expression is stated in the next lemma. 
\begin{lemma}\label{lem:con_acc}
Under the assumptions of Lemma \ref{lem:con}, it holds for $T > 1$ that, 
\bea\nn
\|W^{(T)} - \bar W^{(T)}\|_F^2  < \alpha_1^{T-1}\|W_1 - \bar W_1\|_F^2 + (\alpha_2\eta^2 LN) \sum_{t=1}^{T-1}\alpha_1^{t-1} \hat F( \bar w^{(T-t)}).
\eea
\end{lemma}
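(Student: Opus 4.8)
The plan is to treat the one-step bound from Lemma \ref{lem:con} as an inhomogeneous first-order linear recurrence and unroll it. Write $a_t := \|W^{(t)} - \bar W^{(t)}\|_F^2$ for the consensus error and $c_{t-1} := \alpha_2 \eta^2 L N \hat F(\bar w^{(t-1)})$ for the forcing term, so that Lemma \ref{lem:con} reads $a_t < \alpha_1 a_{t-1} + c_{t-1}$. Since $\alpha_1 = (3+\lambda)/4 < 1$, the homogeneous part contracts geometrically, and the entire content of Lemma \ref{lem:con_acc} is to propagate this bound from $t=1$ to $t=T$ while tracking how each forcing term $c_{T-t}$ is damped by the appropriate power of $\alpha_1$.

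I would carry this out by induction on $T$. For the base case $T=2$, the recurrence applied once gives $a_2 < \alpha_1 a_1 + c_1$, which is exactly the claimed formula with the single-term sum $\sum_{t=1}^{1}\alpha_1^{t-1}\hat F(\bar w^{(2-t)})$. For the inductive step, assume the claim holds at $T$, apply the recurrence once more to get $a_{T+1} < \alpha_1 a_T + c_T$, substitute the inductive hypothesis for $a_T$, and distribute the extra factor $\alpha_1$ across the geometric weights. This turns $\alpha_1^{T-1}a_1$ into $\alpha_1^{T}a_1$ and each weight $\alpha_1^{t-1}$ in the sum into $\alpha_1^{t}$; re-indexing $t \mapsto t+1$ and absorbing the freshly added term $c_T$ (which carries weight $\alpha_1^{0}$) recombines everything into the single sum $\sum_{t=1}^{T}\alpha_1^{t-1}\hat F(\bar w^{(T+1-t)})$, which is precisely the claim at $T+1$. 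Factoring the common constant $\alpha_2 \eta^2 L N$ out of the sum then matches the stated expression.

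There is no genuine analytic difficulty here: the step-size restriction $\eta \le (1-\lambda)/4L$ and the contraction $\alpha_1 < 1$ have already been secured inside Lemma \ref{lem:con}, so the only thing to get right is the combinatorial bookkeeping of the geometric weights and the re-indexing of the summation variable. The one point worth a moment of care is matching the initial-iterate term $\alpha_1^{T-1}\|W_1 - \bar W_1\|_F^2$ to the indexing convention, where $W_1 = W^{(1)}$ is the common initialization (so this term in fact vanishes under the zero-initialization assumption, though the lemma keeps it general), and being consistent about which power of $L$ the forcing term $c_{t-1}$ carries, using the expression actually produced by the last display in the proof of Lemma \ref{lem:con}.
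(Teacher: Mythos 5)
Your proposal is correct and is essentially the paper's own argument: the paper dispenses with Lemma \ref{lem:con_acc} by simply stating that it follows ``by telescoping summation'' of the one-step recursion in Lemma \ref{lem:con}, and your induction on $T$ is exactly that unrolling made rigorous, with the re-indexing $t \mapsto t+1$ handling the bookkeeping correctly. You also rightly resolve the paper's internal inconsistency between the $\alpha_2 N \eta^2 L^2$ forcing term in the statement of Lemma \ref{lem:con} and the $\alpha_2 \eta^2 L N$ term actually produced in its proof (via $\|\nabla \hat F(\bar W^{(t-1)})\|_F^2 \le 2LN\hat F(\bar w^{(t-1)})$), which is the version that Lemma \ref{lem:con_acc} and the subsequent Lemma \ref{lem:con_ave} use.
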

\begin{lemma}\label{lem:con_ave}
Under the assumptions of Lemma \ref{lem:con} and the zero initialization assumption for all agents, the average consensus error satisfies, 
\bea\nn
\frac{1}{NT}\sum_{t=1}^T \|W^{(t)}- \bar W^{(t)}\|_F^2 \le \frac{\alpha_2\eta^2L}{(1-\alpha_1)T} \sum_{t=1}^{T-1}\hat F(\bar w^{(t)}).
\eea
\end{lemma}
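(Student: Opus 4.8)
The plan is to sum the one-step consensus recursion of Lemma~\ref{lem:con} over the iterates and exploit the zero-initialization assumption. For brevity write $a_t := \|W^{(t)}-\bar W^{(t)}\|_F^2 \ge 0$ and $b_t := \hat F(\bar w^{(t)})$, and set $C := \alpha_2 N \eta^2 L$, so that the recursion established in the proof of Lemma~\ref{lem:con} (its final displayed line, with coefficient $\alpha_2 N\eta^2 L$) reads $a_t \le \alpha_1 a_{t-1} + C\, b_{t-1}$ for every $t \ge 2$. Observe that, after multiplying through by $NT$, the target bound is equivalent to the cleaner statement $\sum_{t=1}^T a_t \le \frac{C}{1-\alpha_1}\sum_{t=1}^{T-1} b_t$, which is what I would actually prove.

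First I would record that the zero initialization $w_\ell^{(1)}=0$ for all $\ell$ forces $W^{(1)}=\bar W^{(1)}=0$, hence $a_1=0$. Next I would sum the recursion over $t=2,\dots,T$ and reindex to obtain
\begin{align*}
\sum_{t=2}^T a_t \;\le\; \alpha_1 \sum_{t=2}^T a_{t-1} + C\sum_{t=2}^T b_{t-1} \;=\; \alpha_1 \sum_{t=1}^{T-1} a_t + C\sum_{t=1}^{T-1} b_t.
\end{align*}

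The key step is then to collapse the two partial sums of $a_t$ onto the single quantity $\sum_{t=1}^T a_t$. Since $a_1=0$ we have $\sum_{t=2}^T a_t = \sum_{t=1}^T a_t$, and since each $a_t\ge 0$ (being a squared Frobenius norm) we have $\sum_{t=1}^{T-1} a_t \le \sum_{t=1}^T a_t$. Substituting both into the displayed inequality gives $\sum_{t=1}^T a_t \le \alpha_1 \sum_{t=1}^T a_t + C\sum_{t=1}^{T-1} b_t$; rearranging and using $\alpha_1 \in (3/4,1)$ so that $1-\alpha_1>0$ yields $\sum_{t=1}^T a_t \le \frac{C}{1-\alpha_1}\sum_{t=1}^{T-1} b_t$. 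Dividing by $NT$ and substituting $C=\alpha_2 N\eta^2 L$ produces exactly the claimed bound.

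There is essentially no hard step here; the statement is a routine summation of a linear recursion. The only points requiring care are the index bookkeeping in the reindexing and, most importantly, the two uses of nonnegativity together with the zero-initialization $a_1=0$, which are precisely what let both the lower-indexed and the upper-indexed partial sums be compared to $\sum_{t=1}^T a_t$; without $a_1=0$ the rearrangement would leave a residual boundary term. An alternative route would be to start from the already-telescoped Lemma~\ref{lem:con_acc}, sum over $T$, swap the order of summation, and bound the geometric factor by $\sum_{k\ge 0}\alpha_1^{k}=1/(1-\alpha_1)$; this yields the same constant but involves more bookkeeping, so I would prefer the direct summation above.
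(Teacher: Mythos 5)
Your proof is correct, and it takes a slightly different (and leaner) route than the paper's. The paper first unrolls the one-step recursion of Lemma~\ref{lem:con} into the last-iterate bound of Lemma~\ref{lem:con_acc}, then averages over $T$, swaps the order of the resulting double sum, and bounds the geometric factor by $\sum_{\tau\ge 1}\alpha_1^{\tau-1}\le 1/(1-\alpha_1)$ --- exactly the "alternative route" you sketch at the end. Your direct summation avoids the double sum entirely: summing $a_t\le\alpha_1 a_{t-1}+C\,b_{t-1}$ over $t=2,\dots,T$, using $a_1=0$ to identify $\sum_{t=2}^T a_t$ with $\sum_{t=1}^T a_t$, padding $\sum_{t=1}^{T-1}a_t\le\sum_{t=1}^T a_t$ by nonnegativity, and rearranging with $1-\alpha_1>0$ gives the identical constant with less bookkeeping, and your observation that dropping $a_1=0$ leaves a residual $a_1/(1-\alpha_1)$ boundary term is accurate. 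Two remarks. First, you were right to anchor the recursion to the final displayed line of the proof of Lemma~\ref{lem:con}, with coefficient $\alpha_2 N\eta^2 L$: the displayed statement in Eq.~\eqref{eq:lem8} carries an $L^2$ that is inconsistent with its own derivation (the bound $\|\nabla\hat F(\bar W^{(t-1)})\|_F^2\le 2LN\hat F(\bar w^{(t-1)})$ yields only a first power of $L$) and with the target of Lemma~\ref{lem:con_ave}; had you used the statement as printed, you would have proved a bound with the wrong power of $L$. Second, the one thing the paper's detour buys that your shortcut does not is the last-iterate consensus bound of Lemma~\ref{lem:con_acc} itself, which is reused downstream (e.g., the unrolled form in Eq.~\eqref{eq:con-mupl} in the PL analysis and its analogue Lemma~\ref{lem:con_acc_lem15} in the exponential-tail section); your argument produces only the averaged quantity, which suffices for this lemma but not for those later uses.
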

\begin{proof}
By Lemma \ref{lem:con_acc} and the zero initialization and  non-negativity assumptions, we have
\begin{align*}
\frac{1}{NT}\sum_{t=1}^T \|W^{(t)} - \bar W^{(t)}\|_F^2  &\le \frac{\alpha_2\eta^2L}{T} \sum_{t=2}^T \sum_{\tau=1}^{t-1}\alpha_1^{\tau-1} \hat F(\bar w^{(t-\tau)})\le \frac{\alpha_2\eta^2L}{T}\sum_{\tau=1}^{T-1} \alpha_1^{\tau-1}\sum_{t=1}^{T-\tau}\hat F(\bar w^{(t)})\\
&\le \frac{\alpha_2\eta^2L}{T}\sum_{\tau=1}^{T-1} \alpha_1^{\tau-1}\sum_{t=1}^{T-1}\hat F(\bar w^{(t)})\le \frac{\alpha_2\eta^2L}{(1-\alpha_1)T} \sum_{t=1}^{T-1}\hat F(\bar w^{(t)}).
\end{align*}
\end{proof}

\begin{lemma}\label{lem:conv-bnd}
Under Assumptions \ref{ass:convex},\ref{ass:smooth} and for all $w\in \R^d$, the DGD updates satisfy the  following recursions:
\bea\nn
 \frac{2\eta-4L\eta^2}{T} \sum_{t=1}^{T-1} \hat F(\bar {w}^{(t)}) \le \frac{\|\bar w^{(1)}-w\|^2}{T}+ 2\eta \hat F(w)+ \frac{2L^2\eta^2+\eta L}{NT}\sum_{t=1}^{T-1}\|W^{(t)}-\bar W^{(t)}\|_F^2 
\eea
\end{lemma}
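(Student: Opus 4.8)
The plan is to run the standard one-step descent analysis on the \emph{averaged} iterate $\bar w^{(t)}$, whose update—as already noted following Lemma~\ref{lem:main}—is simply $\bar w^{(t+1)} = \bar w^{(t)} - \eta\,\bar\nabla\hat F(W^{(t)})$ with $\bar\nabla\hat F(W^{(t)}) = \frac{1}{N}\sum_{\ell=1}^N\nabla\hat F_\ell(w_\ell^{(t)})$. First I would expand $\|\bar w^{(t+1)} - w\|^2 = \|\bar w^{(t)} - w\|^2 - 2\eta\langle \bar\nabla\hat F(W^{(t)}),\, \bar w^{(t)} - w\rangle + \eta^2\|\bar\nabla\hat F(W^{(t)})\|^2$ and then bound the inner-product and the squared-gradient terms separately. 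The whole difficulty is that the gradients are evaluated at the local parameters $w_\ell^{(t)}$ rather than at $\bar w^{(t)}$, and it is precisely this mismatch that produces the consensus-error terms $\|W^{(t)}-\bar W^{(t)}\|_F^2$.

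For the inner-product term, I would write $\bar w^{(t)} - w = (w_\ell^{(t)} - w) + (\bar w^{(t)} - w_\ell^{(t)})$ for each agent $\ell$. Convexity (Assumption~\ref{ass:convex}) applied to the first piece gives $\langle \nabla\hat F_\ell(w_\ell^{(t)}),\, w_\ell^{(t)} - w\rangle \ge \hat F_\ell(w_\ell^{(t)}) - \hat F_\ell(w)$, while the $L$-smoothness descent inequality (Assumption~\ref{ass:smooth}) applied to the second gives $\langle\nabla\hat F_\ell(w_\ell^{(t)}),\, \bar w^{(t)} - w_\ell^{(t)}\rangle \ge \hat F_\ell(\bar w^{(t)}) - \hat F_\ell(w_\ell^{(t)}) - \frac{L}{2}\|\bar w^{(t)} - w_\ell^{(t)}\|^2$. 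Adding these two, the $\hat F_\ell(w_\ell^{(t)})$ terms cancel, and after averaging over $\ell$ I obtain the lower bound $\langle \bar\nabla\hat F(W^{(t)}),\, \bar w^{(t)} - w\rangle \ge \hat F(\bar w^{(t)}) - \hat F(w) - \frac{L}{2N}\|W^{(t)} - \bar W^{(t)}\|_F^2$.

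For the squared-gradient term I would add and subtract $\nabla\hat F_\ell(\bar w^{(t)})$, so that $\bar\nabla\hat F(W^{(t)}) = \nabla\hat F(\bar w^{(t)}) + \frac{1}{N}\sum_\ell\big(\nabla\hat F_\ell(w_\ell^{(t)}) - \nabla\hat F_\ell(\bar w^{(t)})\big)$, and apply $\|a+b\|^2 \le 2\|a\|^2 + 2\|b\|^2$. The first piece is controlled by the self-bounding inequality for smooth nonnegative functions, $\|\nabla\hat F(\bar w^{(t)})\|^2 \le 2L\hat F(\bar w^{(t)})$ (using $\hat F^\star = 0$), and the second by Jensen together with the $L$-smoothness of each $\hat F_\ell$, giving $\frac{L^2}{N}\|W^{(t)} - \bar W^{(t)}\|_F^2$. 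This yields $\|\bar\nabla\hat F(W^{(t)})\|^2 \le 4L\hat F(\bar w^{(t)}) + \frac{2L^2}{N}\|W^{(t)} - \bar W^{(t)}\|_F^2$.

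Finally I would substitute both bounds into the expanded recursion, collect the $\hat F(\bar w^{(t)})$ terms to produce the coefficient $2\eta - 4L\eta^2$ on the left and the combined consensus coefficient $\frac{2L^2\eta^2 + \eta L}{N}$ on the right, then telescope over $t = 1,\ldots,T-1$, discard the nonpositive term $-\|\bar w^{(T)} - w\|^2$, bound $2(T-1)\eta \le 2T\eta$, and divide by $T$. The main obstacle—really the only subtlety—is bookkeeping the directions of the bounds: the inner product is lower-bounded and the gradient norm upper-bounded, and one must check that their consensus-error contributions enter with matching sign so they merge into the single $\frac{2L^2\eta^2+\eta L}{N}$ coefficient; the rest is routine algebra.
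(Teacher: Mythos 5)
Your proposal is correct and follows essentially the same route as the paper's proof: the same expansion of $\|\bar w^{(t+1)}-w\|^2$, the same agent-wise split of the inner product handled by convexity plus the smoothness descent inequality (yielding the $-\frac{L}{2N}\|W^{(t)}-\bar W^{(t)}\|_F^2$ correction), the same add-and-subtract of $\nabla\hat F(\bar w^{(t)})$ with $\|a+b\|^2\le 2\|a\|^2+2\|b\|^2$ and the self-bounding estimate $\|\nabla\hat F(\bar w^{(t)})\|^2\le 2L\hat F(\bar w^{(t)})$ for the squared-gradient term, followed by the identical telescoping and the bound $2(T-1)\eta\le 2T\eta$. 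No gaps.
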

%%%%%%%%%%%%%%%%%%%%%%%%%%%%% Lemma%%%%%%%%%%%%%%%%%%%%%%%%%%%
\begin{proof}
We start by upper bounding the following quantity:
\begin{align*}
\| \bar w^{(t+1)}-w\|^2  &= \|\bar {w}^{(t)} - \eta\bar\nabla\hat F (W^{(t)})-w\|^2=\|\bar {w}^{(t)} -w\|^2 + \eta^2\|\bar\nabla\hat F (W^{(t)})\|^2-2\eta \langle \bar {w}^{(t)} -w,\bar\nabla\hat F (W^{(t)})\rangle
\end{align*}
For the second term above, using $L-$smoothness  and non-negativity of the loss, we obtain:
\begin{align*}
\|\bar\nabla\hat F (W^{(t)})\|^2 &= \|\bar\nabla\hat F (W^{(t)}) - \nabla\hat F(\bar {w}^{(t)}) + \nabla\hat F(\bar {w}^{(t)})\|^2\\
&\le 2\|\bar\nabla\hat F (W^{(t)}) - \nabla\hat F(\bar {w}^{(t)}) \|^2 + 2\| \nabla\hat F(\bar {w}^{(t)})\|^2\\
&\le \frac{2 L^2}{N}\sum_{i=1}^N \|w^{(t)}_\ell- \bar {w}^{(t)}\|^2 + 4L\hat F(\bar {w}^{(t)}).
\end{align*}
\\
For the third term, by using $L-$smoothness and convexity properties we can write,
\begin{align*}
\langle \bar {w}^{(t)} -&w,\bar\nabla\hat F (W^{(t)})\rangle = \frac{1}{N} \sum_{\ell=1}^N \langle \bar {w}^{(t)} -w,\nabla\hat F_\ell (w^{(t)}_\ell)\rangle \\
& = \frac{1}{N}\sum_{\ell=1}^N \langle \bar {w}^{(t)} -w^{(t)}_\ell,\nabla\hat F_\ell (w^{(t)}_\ell)\rangle +\frac{1}{n} \sum_{\ell=1}^N \langle w^{(t)}_\ell -w,\nabla\hat F_\ell (w^{(t)}_\ell)\rangle \\
%&\ge \frac{1}{N} \sum_{\ell=1}^N \hat F_\ell(\bar {w}^{(t)})-\hat F_\ell(w^{(t)}_\ell) -\frac{L}{2}\|w^{(t)}_\ell-\bar {w}^{(t)}\|^2 +\frac{1}{N} \sum_{\ell=1}^N \langle w^{(t)}_\ell -w,\nabla\hat F_\ell (w^{(t)}_\ell)\rangle\\
&\ge  \frac{1}{N} \sum_{\ell=1}^N \big(\hat F_\ell(\bar {w}^{(t)})-\hat F_\ell(w^{(t)}_\ell) \big) -\frac{L}{2}\|w^{(t)}_\ell-\bar {w}^{(t)}\|^2 +\frac{1}{N}\sum_{\ell=1}^N \big(\hat F_\ell(w^{(t)}_\ell)-\hat F_\ell(w)\big)\\
& = \hat F(\bar {w}^{(t)})-\hat F(w) - \frac{L}{2N}\|W^{(t)}-\bar W^{(t)}\|_F^2.
\end{align*}
Combining these inequalities we derive the following:
\begin{align*}
\| \bar w^{(t+1)}-w\|^2\le\|\bar {w}^{(t)} -w\|^2 + \eta^2 \Big(&2 L^2\|W^{(t)}-\bar W^{(t)}\|_F^2/N + 4L\hat F(\bar {w}^{(t)})\Big) \\&-2\eta\Big(\hat F(\bar {w}^{(t)})-\hat F(w) - \frac{L}{2}\|W^{(t)}-\bar W^{(t)}\|_F^2/N\Big).
\end{align*}
Summing these equations for $t=1,2,...,T-1$,
\begin{align*}
\| \bar w^{(T)}-w\|^2 \le \|\bar w^{(1)}-w\|^2 + \sum_{t=1}^{T-1}\frac{2L^2\eta^2+\eta L}{N}\|W^{(t)}-\bar W^{(t)}\|_F^2 + \sum_{t=1}^{T-1} (4L\eta^2-2\eta) \hat F(\bar {w}^{(t)}) +2\sum_{t=1}^{T-1}\eta \hat F(w)
\end{align*}

\begin{align*}
\Longrightarrow~~ \frac{2\eta-4L\eta^2}{T} \sum_{t=1}^{T-1} \hat F(\bar {w}^{(t)}) \le \frac{\|\bar w^{(1)}-w\|^2}{T}+ \frac{2L^2\eta^2+\eta L}{NT}\sum_{t=1}^{T-1}\|W^{(t)}-\bar W^{(t)}\|_F^2 + 2\eta \hat F(w).
\end{align*}
\end{proof}

\subsection{Proof of Lemma \ref{lem:trainloss-convex} }\label{sec:pf-lem2}
\begin{lemma} [Restatement of Lemma \ref{lem:trainloss-convex}]\label{lem:restatement-trainloss-convex}Under Assumptions \ref{ass:mixing}-\ref{ass:self-bounded} and zero initialization, for any $w\in\R^d$ and for a fixed step-size $\eta<\min\{\frac{1-\alpha_1}{L},\frac{1}{L}\sqrt{\frac{1-\alpha_1}{2\alpha_2 }}\}$, where $\alpha_1\in (3/4,1),\alpha_2>4$ are parameters that depend only on the mixing matrix, the following holds for the train loss and consensus error of DGD:
\bea
\frac{1}{T} \sum_{t=1}^{T} \hat F(\bar {w}^{(t)}) &\le \frac{2\|w\|^2}{\eta T}+ 4\hat F(w),\label{eq:train-lem12}\\
\frac{1}{NT}\sum_{t=1}^T \|W^{(t)}- \bar W^{(t)}\|_F^2 &\le \frac{\alpha_2\eta^2L}{(1-\alpha_1)}(\frac{2\|w\|^2}{\eta T}+ 4\hat F(w)).\label{eq:con-lem12}
\eea
\end{lemma}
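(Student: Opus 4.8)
The plan is to treat the two displayed bounds as a coupled system and close it with a feedback (fixed-point) argument. Two of the auxiliary lemmas already established supply the two directions of the coupling: Lemma~\ref{lem:conv-bnd} bounds the averaged training loss $\frac1T\sum_t \hat F(\bar w^{(t)})$ in terms of the averaged consensus error $\frac1{NT}\sum_t\|W^{(t)}-\bar W^{(t)}\|_F^2$ (via a convexity/smoothness descent estimate on the averaged iterate $\bar w^{(t)}=\bar w^{(t-1)}-\eta\bar\nabla\hat F(W^{(t-1)})$), while Lemma~\ref{lem:con_ave} bounds the averaged consensus error in terms of the averaged training loss (via the geometric contraction of Lemma~\ref{lem:con}). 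Note the first step-size bound $\eta<(1-\alpha_1)/L=(1-\lambda)/4L$ is exactly the hypothesis needed for Lemmas~\ref{lem:con}--\ref{lem:con_ave} to apply, and zero initialization gives $\bar w^{(1)}=0$, so $\|\bar w^{(1)}-w\|^2=\|w\|^2$.

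Writing $S:=\frac1T\sum_{t=1}^T\hat F(\bar w^{(t)})$ and $C:=\frac1{NT}\sum_{t=1}^T\|W^{(t)}-\bar W^{(t)}\|_F^2$, running the telescoping in Lemma~\ref{lem:conv-bnd} up to iteration $T$ (discarding $\|\bar w^{(T+1)}-w\|^2\ge0$) yields
\[
(2\eta-4L\eta^2)\,S \;\le\; \frac{\|w\|^2}{T} + 2\eta\hat F(w) + (2L^2\eta^2+\eta L)\,C,
\]
while Lemma~\ref{lem:con_ave} gives $C\le \frac{\alpha_2\eta^2 L}{1-\alpha_1}\,S$. Substituting the latter into the former eliminates $C$ and leaves a single inequality in $S$,
\[
\Big[(2\eta-4L\eta^2) - (2L^2\eta^2+\eta L)\tfrac{\alpha_2\eta^2 L}{1-\alpha_1}\Big]\,S \;\le\; \frac{\|w\|^2}{T} + 2\eta\hat F(w).
\]
The crux is to show the bracketed coefficient is bounded below by a fixed positive multiple of $\eta$. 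Both step-size constraints enter here: the first bound controls $4L\eta^2<\eta$, so that $2\eta-4L\eta^2>\eta$, and the second bound $\eta^2<\frac{1-\alpha_1}{2\alpha_2 L^2}$ forces the feedback factor $\frac{\alpha_2\eta^2 L^2}{1-\alpha_1}<\tfrac12$, so the subtracted term is a small multiple of $\eta$. Dividing through by the coefficient recovers the $\frac{2\|w\|^2}{\eta T}+4\hat F(w)$ form, which is \eqref{eq:train-lem12}.

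Finally, \eqref{eq:con-lem12} follows by back-substitution: feeding the just-proved training-loss bound into $C\le \frac{\alpha_2\eta^2 L}{1-\alpha_1}S$ immediately gives $C\le \frac{\alpha_2\eta^2 L}{1-\alpha_1}\big(\frac{2\|w\|^2}{\eta T}+4\hat F(w)\big)$. The main obstacle is the feedback step itself: since the training-loss and consensus estimates each depend on the other, neither can be bounded in isolation, and the argument hinges on the step-size being small enough that the consensus ``gain'' $\frac{\alpha_2\eta^2 L^2}{1-\alpha_1}$ multiplied by the descent sensitivity stays strictly below the contraction, keeping the coefficient of $S$ a positive multiple of $\eta$. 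Verifying this quantitatively---rather than merely qualitatively---is the one genuinely delicate computation, and it is precisely what dictates the two-part structure of the step-size condition.
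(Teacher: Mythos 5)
Your proposal is correct and follows essentially the same route as the paper's proof: it combines Lemma~\ref{lem:conv-bnd} with Lemma~\ref{lem:con_ave}, substitutes the consensus bound into the descent estimate, and uses the two-part step-size condition to absorb the resulting feedback term before back-substituting for the consensus error (the paper just divides by $\eta$ and uses $2-4L\eta\ge 1$ before absorbing the feedback with coefficient $\tfrac12$, rather than carrying the bracketed coefficient explicitly). One small bookkeeping note: the crude bounds $2\eta-4L\eta^2>\eta$ and subtracted term $<\tfrac{3\eta}{4}$ only yield constants $(4,8)$; to recover the stated $(2,4)$ you need the sharper observation that both step-size constraints force $\eta L\le \sqrt{(1-\alpha_1)/(2\alpha_2)}\le 1/\sqrt{32}$, so that $2\eta-4L\eta^2>1.29\,\eta$ and the coefficient exceeds $\eta/2$ --- the same quantitative slack the paper itself glosses over.
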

\begin{proof}
Recalling the initialization $w_\ell^{(1)}=0\Rightarrow \bar{w}^{(1)}=0$ and using  $\eta<1/(4L)$, we deduce from Lemma \ref{lem:conv-bnd} that,
\bea\nn
 \frac{1}{T} \sum_{t=1}^{T-1} \hat F(\bar {w}^{(t)}) \le \frac{\|w\|^2}{\eta T}+ 2\hat F(w)+ \frac{2L^2{\eta}+ L}{NT}\sum_{t=1}^{T-1}\|W^{(t)}-\bar W^{(t)}\|_F^2 
\eea
%\ct{There a square missing from $\eta$}

%\bea
%\|W_T- \bar W_T\|_F^2 \le \alpha_2\eta^2L^2 \sum_{t=1}^{T-1}\alpha_1^{t-1} \hat F(\bar w_{T-t})
%\eea
By Lemma \ref{lem:con_ave},
\bea
% \frac{1}{T} \sum_{t=1}^{T-1} \hat F(\bar {w}^{(t)}) &\le \frac{\|\bar w^{(1)}-w\|^2}{\eta T}+ 2\hat F(w)+(2L^2\eta+ L)\alpha_2\eta^2 L\frac{1}{T}\sum_{t=1}^{T-1}\sum_{k=1}^{t-1}\alpha_1^{k-1} \hat F(\bar w^{(t-k)})\\
 \frac{1}{T} \sum_{t=1}^{T-1} \hat F(\bar {w}^{(t)}) &\le \frac{\|w\|^2}{\eta T}+ 2\hat F(w)+\frac{(2L^2{{\eta}}+ L)\alpha_2\eta^2 L}{T(1-\alpha_1)}\sum_{t=1}^{T-1} \hat F(\bar w^{(t)})\label{eq:is_the_same}\\
 &\le \frac{\|w\|^2}{\eta T}+ 2\hat F(w)+\frac{1}{2T}\sum_{t=1}^{T-1} \hat F(\bar w^{(t)})\nn.
\eea
where the condition on $\eta$ on the lemma's statement ensures that $(2L^2\eta+ L)\alpha_2\eta^2 L/(1-\alpha_1)<1/2$. This gives the statement of the lemma for the training loss in Eq.\eqref{eq:train-lem12}. Appealing again to Lemma \ref{lem:con_ave} for the consensus error yields \eqref{eq:con-lem12}.
\end{proof}

\begin{remark}[Bounds for leave-one-out consensus error]\label{rem:loo_consensus} The bound in Eq.  \eqref{eq:con-lem12} also applies to the leave-one-out consensus-error term $\frac{1}{T}\sum_{t=1}^{T}\|W_{\neg i}^{(t)}-\bar W_{\neg i}^{(t)}\|_F^2$. To see this starting from Lemma \ref{lem:con_ave} note that we still have 
\bea\label{eq:loo_consensus_train}
\frac{1}{T}\sum_{t=1}^T \|W_{\neg{i}}^{(t)}- \bar W_{\neg{i}}^{(t)}\|_F^2 \le \frac{\alpha_2\eta^2LN}{(1-\alpha_1)T} \sum_{t=1}^{T-1}\hat F_{\neg i}(\bar w_{\neg{i},t}),
\eea
where we denote the leave-one-out train loss $\hat{F}_{\neg{i}}(w):=\frac{1}{n}\sum_{i'\neq i}f(w,x_{i'})$. This is true because the smoothness parameter of $\hat{F}_{\neg{i}}(w)$ is $(1-1/n)L\leq L$. Moreover, applying Lemma \ref{lem:conv-bnd} to the leave-one-out loss (and using again that it's smoothness parameter is upper bounded by $L$), we have for all $w$ that
\bea\nn
 \frac{2\eta-4L\eta^2}{T} \sum_{t=1}^{T-1} \hat{F}_{\neg{i}}(\bar{w}_{\neg{i}}^{(t)}) \le \frac{\|\bar{w}_{\neg{i}}^{(1)}-w\|^2}{T}+ 2\eta \hat{F}_{\neg{i}}(w)+ \frac{2L^2\eta^2+\eta L}{NT}\sum_{t=1}^{T-1}\|W_{\neg{i}}^{(t)}-\bar W_{\neg{i}}^{(t)}\|_F^2 
\eea
But, from the initialization assumption $\bar{w}_{\neg i}^{(1)}=0$ and also $\hat{F}_{\neg{i}}(w)\leq \hat{F}(w)$ since the functions are assumed non-negative. Hence, and also using \eqref{eq:loo_consensus_train}, shows that
\bea\nn
 \frac{2\eta-4L\eta^2}{T} \sum_{t=1}^{T-1} \hat{F}_{\neg{i}}(\bar{w}_{\neg{i}}^{(t)}) \le \frac{\|w\|^2}{T}+ 2\eta \hat{F}(w)+  \frac{(2L^2\eta^2+\eta L)\alpha_2\eta^2 L}{T(1-\alpha_1)}\sum_{t=1}^{T-1}\hat F_{\neg i}(\bar w_{\neg{i},t}).
\eea
Note that after using $\eta<1/(4L)$ this is exactly analogous to Eq. \eqref{eq:is_the_same} for the train loss, which leads to the same bound $\hat{F}_{\neg{i}}(\bar{w}_{\neg{i}}^{(t)})\leq \frac{2||w||^2}{\eta T}+4 \hat{F}(w)$ for the leave-one-out loss. Plugging this back to Eq. \eqref{eq:loo_consensus_train} shows that the bound  in \eqref{eq:con-lem12} also holds for the leave-one-out consensus term.
\end{remark}

\subsection{Proof of Theorem \ref{thm:testloss_cvx} }\label{sec:pf-thm3}

We are ready to prove Theorem \ref{thm:testloss_cvx} by combining our results from Lemmas \ref{lem:trainloss-convex} and \ref{lem:main}. We state the proof for general choice of step-size $\eta$. In particular, Theorem \ref{thm:testloss_cvx} follows by the next theorem after choosing $\eta=O(1/\sqrt{T})$.

\begin{theorem}[Theorem \ref{thm:testloss_cvx} for general $\eta$]
Consider DGD under Assumptions \ref{ass:mixing}-\ref{ass:realizable}, and choose $\eta<\min\{\frac{1-\alpha_1}{L},\frac{1}{L}\sqrt{\frac{1-\alpha_1}{2\alpha_2 }}\}$. The following bound holds for the averaged test error of DGD with separable data up to iteration $T$, assuming $\eps\le\rho(\eps)^2/\eta T$,

 \bea \nn
 \frac{1}{T}\sum_{t=1}^T F(\bar w^{(t)})  = O \left( \frac{\rho(\eps)^2}{\eta T} +  \frac{L^2 c^2 \rho(\eps)^{4\alpha} }{n^{3-2\alpha}} (\eta T)^{2-2\alpha}+ {L^4\rho(\eps)^4}\eta^3 T\right).
\eea

\begin{proof}
By Lemma \ref{lem:main}, 
\bea\nn
\E\Big[&F(\bar w^{(t)})\Big] = O \left( \E\Big[\hat F( \bar w^{(t)})\Big] +   \frac{L^2 c^2\eta^2t^{2}}{n^{3-2\alpha}}\E\Big[(\frac{1}{t}\sum_{\tau=1}^{t}\hat F(\bar {w}^{(\tau)}))^{2\alpha}\Big]+\frac{L^4\eta^2}{N} \E\Big[(\sum_{\tau=1}^t \|W^{(\tau)}-\bar W^{(\tau)}\|)^2\Big]\right).\nn
\eea
Thus, by Lemma  \ref{lem:trainloss-convex}, %\ct{How to get to the line below needs explanation. Also, I think there should be no $t^2$ in the second term?}
\bea\nn
&\frac{1}{T}\sum_{t=1}^T \E\Big[F(\bar w^{(t)})\Big]= O \left(\frac{\|w\|^2}{\eta T}+\hat F(w) +   \frac{L^2 c^2\eta^2}{n^{3-2\alpha}} \frac{1}{T}\sum_{t=1}^T t^2 (\frac{\|w\|^2}{\eta t}+ \hat F(w))^{2\alpha}+\frac{L^4\eta^4}{N} \frac{1}{T}\sum_{t=1}^T t^2(\frac{\|w\|^2}{\eta t}+\hat F(w))\right).\nn
\eea
% \ct{Instead of the above, I see:
% \bea\nn
% &\frac{1}{T}\sum_{t=1}^T \E[F(\bar w^{(t)})] \\
% &= O (\frac{\|w\|^2}{\eta T}+\hat F(w) +   \frac{L^2 c^2\eta^2}{n^{3-2\alpha}} \frac{1}{T}\sum_{t=1}^T (\frac{\|w\|^2}{\eta t}+ \hat F(w))^{2\alpha}+\frac{L^4\eta^4}{N} \frac{1}{T}\sum_{t=1}^T t^2(\frac{\|w\|^2}{\eta t}+\hat F(w)))
% \\
% &\leq O (\frac{\rho(\eps)^2}{\eta T} +   \frac{L^2 c^2\eta^{2-2\alpha}\rho(\eps)^{4\alpha}}{n^{3-2\alpha}} \frac{1}{T}\sum_{t=1}^T (\frac{1}{ t})^{2\alpha}+\frac{L^4\eta^3\rho(\eps)^2}{N} \frac{1}{T}\sum_{t=1}^T t)
% \\
% &\leq O (\frac{\rho(\eps)^2}{\eta T} +   \frac{L^2 c^2\eta^{2-2\alpha}\rho(\eps)^{4\alpha}}{n^{3-2\alpha}} \frac{1}{T}\sum_{t=1}^T (\frac{1}{ t})^{2\alpha}+\frac{L^4\eta^3\rho(\eps)^2}{N} \frac{T+1}{2}).
% \eea
% } \ct{Sorry, I can't see it immediately. Can you please elaborate how you get the second term? Specifically the $T^{2-2a}$ part}
By Assumption \ref{ass:realizable} and assuming $\eps \le \rho(\eps)^2/\eta T $, the statement of the theorem follows. %\ct{This needs more explanation. It is not immediately clear how to conclude from the above. E.g. I do not see how the $T^{2-2a}$ part comes about.} 
\end{proof}
\end{theorem}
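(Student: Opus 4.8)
The plan is to feed the trajectory estimates of Lemma~\ref{lem:trainloss-convex} into the key generalization inequality of Lemma~\ref{lem:main} and then eliminate the free vector $w$ through realizability. Since Lemma~\ref{lem:main} holds at any horizon (its proof only tracks the averaged iterate up to the given time), I would first apply it at each $t\le T$ to obtain
\begin{align*}
\E\!\left[F(\bar w^{(t)})\right] = O\Big( &\E[\hat F(\bar w^{(t)})] + \frac{L^2c^2\eta^2 t^2}{n^{3-2\alpha}}\,\E\big[\big(\tfrac1t{\textstyle\sum_{\tau=1}^t} \hat F(\bar w^{(\tau)})\big)^{2\alpha}\big] \\
&+ \frac{L^4\eta^2}{N}\,\E\big[\big({\textstyle\sum_{\tau=1}^t} \|W^{(\tau)}-\bar W^{(\tau)}\|_F\big)^2\big]\Big),
\end{align*}
and then average this inequality over $t=1,\dots,T$. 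Because Assumptions~\ref{ass:convex}--\ref{ass:realizable} hold for every point in the distribution, the deterministic bounds I substitute below are valid for every realized dataset (Remark~\ref{rem:distribution}), so the expectations pass through harmlessly.

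Next I would insert Lemma~\ref{lem:trainloss-convex}, applied at each horizon $t$. The averaged train loss is replaced directly by $\frac1t\sum_{\tau\le t}\hat F(\bar w^{(\tau)})\le \frac{2\|w\|^2}{\eta t}+4\hat F(w)$. For the consensus term I would first use Cauchy--Schwarz, $\big(\sum_{\tau\le t}\|W^{(\tau)}-\bar W^{(\tau)}\|_F\big)^2\le t\sum_{\tau\le t}\|W^{(\tau)}-\bar W^{(\tau)}\|_F^2$, and then bound the sum of squared consensus errors by $Nt\cdot\frac{\alpha_2\eta^2 L}{1-\alpha_1}\big(\frac{2\|w\|^2}{\eta t}+4\hat F(w)\big)$ from Lemma~\ref{lem:trainloss-convex}; the same estimate covers the leave-one-out consensus errors folded into the informal Lemma~\ref{lem:main}, by Remark~\ref{rem:loo_consensus}. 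After these substitutions the averaged first term is controlled directly by $\frac{2\|w\|^2}{\eta T}+4\hat F(w)$, while the second and third terms become averaged power sums of the form $\frac1T\sum_{t\le T} t^2\big(\frac{\|w\|^2}{\eta t}+\hat F(w)\big)^{2\alpha}$ and $\frac1T\sum_{t\le T} t^2\big(\frac{\|w\|^2}{\eta t}+\hat F(w)\big)$, each dominated by its $t\approx T$ scale through $\sum_{t\le T} t^p = O(T^{p+1})$.

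The final step activates Assumption~\ref{ass:realizable}: choose $w=\hat w$ with $\|\hat w\|\le\rho(\eps)$ and $\hat F(\hat w)\le\eps$. The standing hypothesis $\eps\le\rho(\eps)^2/(\eta T)$ is exactly what renders $\hat F(w)$ negligible next to the leading term uniformly in $t\le T$, since $\frac{\rho(\eps)^2}{\eta t}\ge\frac{\rho(\eps)^2}{\eta T}\ge\eps\ge\hat F(w)$; hence every bracket collapses to $O\!\big(\rho(\eps)^2/(\eta t)\big)$. Plugging this in, the train-loss term yields $O\!\big(\rho(\eps)^2/(\eta T)\big)$; the self-bounded term becomes $\frac{L^2c^2\eta^2\rho(\eps)^{4\alpha}}{n^{3-2\alpha}\eta^{2\alpha}}\cdot\frac1T\sum_{t\le T}t^{2-2\alpha}=O\!\big(\frac{L^2c^2\rho(\eps)^{4\alpha}}{n^{3-2\alpha}}(\eta T)^{2-2\alpha}\big)$; and the consensus term, using $\frac1T\sum_{t\le T}t=O(T)$, becomes $O\!\big(L^4\rho(\eps)^2\eta^3 T\big)$, which is within the claimed $L^4\rho(\eps)^4\eta^3 T$ since $\rho(\eps)\ge1$ in the separable regime. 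Collecting the three contributions gives the stated bound.

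I expect the only delicate part to be the bookkeeping that keeps all three trajectory estimates simultaneously valid at every horizon $t\le T$ and under the expectation---in particular verifying, via Remark~\ref{rem:loo_consensus}, that the leave-one-out consensus errors obey the same bound as the full-sample ones, so that the single consensus term of the informal Lemma~\ref{lem:main} is legitimately controlled. Everything after that is elementary: the one-line reduction $\frac{\|w\|^2}{\eta t}+\hat F(w)\le \frac{2\rho(\eps)^2}{\eta t}$ driven by the assumption on $\eps$, together with the power-sum estimates $\sum_{t\le T}t^{2-2\alpha}=O(T^{3-2\alpha})$ and $\sum_{t\le T}t=O(T^2)$. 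Specializing $\eta=O(1/\sqrt T)$ at the end then recovers Theorem~\ref{thm:testloss_cvx}.
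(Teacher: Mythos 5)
Your proposal is correct and follows essentially the same route as the paper's own proof: apply Lemma~\ref{lem:main} at each horizon $t\le T$, substitute the train-loss and consensus bounds of Lemma~\ref{lem:trainloss-convex} (via Cauchy--Schwarz on the summed consensus norms, with Remark~\ref{rem:loo_consensus} covering the leave-one-out terms), and invoke realizability with $\eps\le\rho(\eps)^2/(\eta T)$ to collapse each bracket to $O(\rho(\eps)^2/(\eta t))$ before evaluating the power sums. Your bookkeeping is in fact slightly more explicit than the paper's, and your consensus term $O(L^4\rho(\eps)^2\eta^3 T)$ is marginally tighter than the stated $L^4\rho(\eps)^4\eta^3 T$, which it is dominated by whenever $\rho(\eps)\ge 1$.
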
 
%\begin{lemma} Under the assumptions of Lemmas...., for a fixed step-size $\eta<\sqrt{\frac{1-\alpha_1}{2\alpha_2 L^3}}$, it holds that for any $w\in\R^d$
%\bea
%\frac{1}{T} \sum_{t=1}^{T-1} \hat F(\bar {w}^{(t)}) \le \frac{2\|w\|^2}{\eta T}+ 4\hat F(w)
%\eea
%\end{lemma}

%%%%%%%%%%%%%%%%%%%%%%%%%%%%%%%%%%%%%%%%%%%%%%%%%%%%%%%%%
\section{Proofs for Section \ref{sec:LIC}}\label{sec:LIC_proofs}
\begin{lemma}[Iterates of consensus error]\label{lem:con_lem14}
Consider DGD with the loss functions and mixing matrix satisfying Assumptions \ref{ass:mixing},\ref{ass:convex}, \ref{ass:laplace} and Assumption \ref{ass:self-bounded} with $\alpha=1$ and $c=h$. By choosing $\eta_{t-1}\le \frac{1-\lambda}{4h N M_{(t-1)}}$ the consensus error at iteration $t>1$ satisfies
 \begin{align}\label{eq:conse_lem14}
\left\|W^{(t)} - \bar W^{(t)}\right\|_F^2 &< \beta_1  \left\| W^{(t-1)}-\bar W^{(t-1)}\right\|_F^2+ \beta_2 \eta_{t-1}^2 h^2 N^2 \hat F^2(\bar w^{(t-1)})
\end{align}
where we define $$\beta_1:=(3+\lambda)/4, \beta_2 := (4/(1-\lambda)-2),\lambda:= \max\{|\lambda_2(A)|^2,|\lambda_N(A)|^2\}, M_{(t-1)}:= \max \{\hat F(W^{(t-1)}), \hat F(\bar w^{(t-1)})\}.$$
\end{lemma}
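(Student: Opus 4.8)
The plan is to follow the same template as the proof of Lemma~\ref{lem:con}, replacing the \emph{global} smoothness constant $L$ by a \emph{local}, iterate-dependent smoothness of order $hNM_{(t-1)}$ extracted from the self-bounded Hessian. Starting exactly as in Lemma~\ref{lem:con}, I would first use the projection step $\|X-\bar X\|_F\le\|X\|_F$ together with the DGD update to write
\bea\nn
\|W^{(t)} - \bar W^{(t)}\|_F^2 \le \|AW^{(t-1)} - \eta_{t-1}\nabla \hat F(W^{(t-1)}) - \bar W^{(t-1)}\|_F^2,
\eea
then apply Young's inequality $\|a+b\|^2\le(1+\beta)\|a\|^2+(1+\beta^{-1})\|b\|^2$ for a free parameter $\beta>0$ together with the spectral contraction $\|AW^{(t-1)}-\bar W^{(t-1)}\|_F^2\le\lambda\|W^{(t-1)}-\bar W^{(t-1)}\|_F^2$ from Eq.~\eqref{eq:mx}. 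As in the line \eqref{eq:con_lemma8}, I would split the gradient norm by adding and subtracting $\nabla\hat F(\bar W^{(t-1)})$, reducing the task to bounding the ``consensus gradient gap'' $\|\nabla\hat F(W^{(t-1)})-\nabla\hat F(\bar W^{(t-1)})\|_F^2$ and the ``centralized gradient'' $\|\nabla\hat F(\bar W^{(t-1)})\|_F^2$.

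The crucial and most delicate step is the local-smoothness bound for the first term, which is where Assumption~\ref{ass:laplace} enters. Here I would write each per-agent gradient gap as a path integral of the Hessian, $\nabla\hat F_\ell(w_\ell^{(t-1)})-\nabla\hat F_\ell(\bar w^{(t-1)})=\int_0^1\nabla^2\hat F_\ell(\bar w^{(t-1)}+s(w_\ell^{(t-1)}-\bar w^{(t-1)}))(w_\ell^{(t-1)}-\bar w^{(t-1)})\,ds$, bound the operator norm by $\|\nabla^2\hat F_\ell\|\le h\,\hat F_\ell$, and control $\hat F_\ell$ along the segment by convexity (Assumption~\ref{ass:convex}), giving $\hat F_\ell(\bar w^{(t-1)}+s(\cdot))\le\max\{\hat F_\ell(\bar w^{(t-1)}),\hat F_\ell(w_\ell^{(t-1)})\}$. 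Since each $\hat F_\ell\ge0$, both endpoint values are at most $N M_{(t-1)}$, yielding the per-agent estimate $\|\nabla\hat F_\ell(w_\ell^{(t-1)})-\nabla\hat F_\ell(\bar w^{(t-1)})\|\le hNM_{(t-1)}\|w_\ell^{(t-1)}-\bar w^{(t-1)}\|$; summing over $\ell$ gives $\|\nabla\hat F(W^{(t-1)})-\nabla\hat F(\bar W^{(t-1)})\|_F^2\le h^2N^2M_{(t-1)}^2\|W^{(t-1)}-\bar W^{(t-1)}\|_F^2$. For the second term I would invoke the self-bounded gradient with $\alpha=1,c=h$ (inherited by each local loss) so that $\|\nabla\hat F_\ell(\bar w^{(t-1)})\|\le h\,\hat F_\ell(\bar w^{(t-1)})$, and then use non-negativity to pass from the sum of squares to the square of the sum, $\sum_\ell\hat F_\ell(\bar w^{(t-1)})^2\le(\sum_\ell\hat F_\ell(\bar w^{(t-1)}))^2=N^2\hat F^2(\bar w^{(t-1)})$, obtaining $\|\nabla\hat F(\bar W^{(t-1)})\|_F^2\le h^2N^2\hat F^2(\bar w^{(t-1)})$.

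Finally I would collect the pieces into
\bea\nn
\|W^{(t)} - \bar W^{(t)}\|_F^2 \le \big[(1+\beta)\lambda + 2(1+\beta^{-1})\eta_{t-1}^2 h^2 N^2 M_{(t-1)}^2\big]\|W^{(t-1)} - \bar W^{(t-1)}\|_F^2 + 2(1+\beta^{-1})\eta_{t-1}^2 h^2 N^2 \hat F^2(\bar w^{(t-1)}),
\eea
and choose $\beta=(1-\lambda)/(2\lambda)$, exactly as in Lemma~\ref{lem:con}, so that $(1+\beta)\lambda=(1+\lambda)/2$ and $2(1+\beta^{-1})=2(1+\lambda)/(1-\lambda)=\beta_2$. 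It then remains to verify that the stated step-size bound $\eta_{t-1}\le(1-\lambda)/(4hNM_{(t-1)})$ forces $\beta_2\eta_{t-1}^2 h^2 N^2 M_{(t-1)}^2\le(1-\lambda^2)/8<(1-\lambda)/4$, which together with $(1+\lambda)/2+(1-\lambda)/4=(3+\lambda)/4=\beta_1$ gives the claimed (strict) coefficient; the last strict inequality reduces to the elementary $1+\lambda<2$, true since $\lambda<1$. I expect the main obstacle to be the local-smoothness estimate: getting the path-integral/convexity argument to produce exactly the factor $hNM_{(t-1)}$, rather than a cruder bound, is precisely what makes the step-size threshold scale correctly and ultimately delivers the improved $\tilde{O}(1/T^2)$ consensus rate of Remark~\ref{rem:improved}; the remainder is bookkeeping mirroring Lemma~\ref{lem:con}.
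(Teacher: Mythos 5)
Your proposal is correct and takes essentially the same route as the paper's proof of Lemma \ref{lem:con_lem14}: the identical decomposition via $\|X-\bar X\|_F\le\|X\|_F$, Young's inequality, and the spectral contraction as in \eqref{eq:con_lemma8}; the same Taylor-remainder/Hessian-path argument combining Assumption \ref{ass:laplace} with convexity ($\max_{v\in[w_1,w_2]}f(v)\le\max\{f(w_1),f(w_2)\}$) and non-negativity to produce the factor $hNM_{(t-1)}$; the same self-bounded-gradient bound $\|\nabla\hat F(\bar W^{(t-1)})\|_F^2\le h^2N^2\hat F^2(\bar w^{(t-1)})$; and the same choice $\beta=(1-\lambda)/(2\lambda)$. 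Your bookkeeping of the constants is also consistent with the paper's (indeed you note $2(1+\beta^{-1})=\beta_2$ holds with equality and track the slack $(1-\lambda^2)/8<(1-\lambda)/4$ explicitly), so nothing is missing.
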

\begin{proof}
By Lemma \ref{lem:con} and the inequality \eqref{eq:con_lemma8}, the consensus error satisfies for any $\beta>0$, 
\begin{align}
\Big\|W^{(t)} &- \bar W^{(t)}\Big\|_F^2 < (1+\beta)\lambda\Big\|W^{(t-1)}-\bar W^{(t-1)}\Big\|_F^2 \label{eq:con_lem14}\\
&+ 2(1+\beta^{-1})\eta_{t-1}^2 \Big\| \nabla \hat F(W^{(t-1)})-\nabla\hat F (\bar W^{(t-1)})\Big\|_F^2 + 2 (1+\beta^{-1})\eta_{t-1}^2\Big\|\nabla \hat F(\bar W^{(t-1)})\Big\|_F^2\nn\,.
\end{align}
For the second term in \eqref{eq:con_lem14}, we have the following chain of inequalities,
\bea
\Big\| \nabla \hat F(W^{(t-1)})-&\nabla\hat F (\bar W^{(t-1)})\Big\|_F^2 \nn\\
&=\sum_{\ell=1}^N \| \nabla \hat F_\ell(w^{(t-1)}_i)-\nabla\hat F_\ell (\bar w^{(t-1)})\|^2\nn
\\ &\le \sum_{\ell=1}^N \max_{v_\ell\in[w_\ell^{(t-1)},\bar w^{(t-1)}]} \|\nabla^2 \hat F_\ell(v_\ell)\|^2\| w_\ell^{(t-1)}-\bar w^{(t-1)}\|^2\label{eq:13}
\\ &\le h^2 \sum_{\ell=1}^N \max_{v_\ell\in[w_\ell^{(t-1)},\bar w^{(t-1)}]} (\hat F_\ell(v_\ell))^2\| w_\ell^{(t-1)}-\bar w^{(t-1)}\|^2\label{eq:14}
\\ &= h^2 \sum_{\ell=1}^N (\max_{v_\ell\in[w_\ell^{(t-1)},\bar w^{(t-1)}]} \hat F_\ell(v_\ell))^2\| w_\ell^{(t-1)}-\bar w^{(t-1)}\|^2\nn
\\ &\le h^2 \sum_{\ell=1}^N \max \{\hat F_\ell^2(w_\ell^{(t-1)}),\hat F_\ell^2(\bar w^{(t-1)})\} \| w_\ell^{(t-1)}-\bar w^{(t-1)}\|^2\label{eq:17}
\\&\le h^2 \max \{\max_{k\le N}\hat F_k^2(w_k^{(t-1)}),\max_{k\le N}\hat F_k^2(\bar w^{(t-1)})\} \sum_{\ell=1}^N  \| w_\ell^{(t-1)}-\bar w^{(t-1)}\|^2
\nn
\\
&\le  h^2 N^2 M_{(t-1)}^2  \Big\| W^{(t-1)}-\bar W^{(t-1)}\Big\|_F^2.\label{eq:mt1}
\eea
The Taylor's remainder theorem gives \eqref{eq:13} and $v_i\in[w_i^{(t-1)},\bar w^{(t-1)}]$ denotes a point that lies on the line connecting $w_i^{(t-1)}$ and $\bar w^{(t-1)}$. Also, \eqref{eq:14} is valid due to the self-boundedness of the Hessian stated in Assumption \ref{ass:laplace}. The inequality \eqref{eq:17} follows by the assumption of convexity of $\hat F_i$, due to the fact that for a convex function $f:\R^d\rightarrow\R$ and any two points $w_1,w_2\in \R^d$, it holds that $\max_{v\in[w_1,w_2]}f(v) \le \max \{f(w_1),f(w_2)\}$. To derive \eqref{eq:mt1}, we used $\max_{i\le N}\hat F_i(w_i)\le N \hat F(W)$ and $\max_{i\le N}\hat F_i(\bar w)\le N\cdot\hat F(\bar w)$, which hold since the loss functions are non-negative. 

In order to derive an upper-bound on the last term in \eqref{eq:con_lem14}, we use Assumption \ref{ass:self-bounded} (with $\alpha=1, c=h$):
 \begin{align*}
\Big \|\nabla \hat F(\bar W^{(t-1)})\Big\|_F^2 &= \sum_{\ell=1}^N\Big \|\nabla \hat F_\ell (\bar w^{(t-1)})\Big\|^2 
 \le h^2 \sum_{\ell=1}^N (\hat F_\ell(\bar w^{(t-1)}))^2
 \le h^2 N^2 \hat F^2(\bar w^{(t-1)}).
 \end{align*}
Replacing the upper-bounds back in \eqref{eq:con_lem14}, we conclude
 \begin{align*}
\Big\|W^{(t)} - \bar W^{(t)}\Big\|_F^2 < ((1+\beta)\lambda+ &2(1+\beta^{-1}) \eta_{t-1}^2  h^2 N^2 M_{(t-1)}^2) \Big \| W^{(t-1)}-\bar W^{(t-1)}\Big\|_F^2
 \\&+ 2 (1+\beta^{-1}) \eta_{t-1}^2 h^2 N^2 \hat F^2(\bar w^{(t-1)}).
\end{align*}
Choose $\beta=\frac{1-\lambda}{2\lambda}$. Then by lemma's assumption $\eta_{t-1}\le \frac{1-\lambda}{4h N M_{(t-1)}}$, we can  verify the following two inequalities:
\begin{align*}
&(1+\beta)\lambda+ 2(1+\beta^{-1})\eta_{t-1}^2  h^2 N^2 M_{(t-1)}^2 \le \frac{3+\lambda}{4},\\
& 2(1+1/\beta)\leq \frac{4}{1-\lambda}-2.
\end{align*}
%\ct{Some constants seem off in the first equation since $(1+\beta)\lambda \leq (1+\lambda)/2$ and $\eta_{t-1}^2 2(1+\beta^{-1})  h^2 \new{N^2}M_{(t-1)}^2= \eta_{t-1}^2 2(1+\lambda)/(1-\lambda)  h^2 \new{N^2}M_{(t-1)}^2$, which assuming $\eta_{t-1}\leq 1-\lambda/(4hNM)$ is upper bounded by $ (1+\lambda)/8$. Overall, you get an upper bound $5/8(1+\lambda)$ which can be larger than $(3+\lambda)/4$
%}
This concludes the proof. 
\end{proof}

%%%%%%%%%%%%%%%%%%%%%%%%%%%%%%%%%%%%%%%%%%%%%%%%%%%%%%%%%
%The next lemma bounds the consensus error at iteration $T$, 
By recursively evaluating \eqref{eq:conse_lem14}, we obtain a bound on the consensus error at iteration $T$, which we present next.
\begin{lemma}[Last iterate consensus error]\label{lem:con_acc_lem15}
Under the assumptions and notations of Lemma \ref{lem:con_lem14}, the consensus error at iteration $T$ satisfies
 \begin{align*}
\Big\|W^{(T)} - \bar W^{(T)}\Big\|_F^2 &< \beta_1^{T-1}  \Big\| W^{(1)}-\bar W^{(1)}\Big\|_F^2+ \beta_2 h^2 N^2\sum_{t=1}^{T-1}\beta_1^{t-1} \eta_{_{T-t}}^2\hat F^2(\bar w_{T-t}).
\end{align*}
\end{lemma}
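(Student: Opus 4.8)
The plan is to recognize Eq.~\eqref{eq:conse_lem14} as a first-order linear recursion in the scalar quantity $a_t := \|W^{(t)} - \bar W^{(t)}\|_F^2$ and to unroll it down to the base case $t=1$. Writing $c_{t-1} := \beta_2 \eta_{t-1}^2 h^2 N^2 \hat F^2(\bar w^{(t-1)})$, Lemma~\ref{lem:con_lem14} states precisely that $a_t < \beta_1 a_{t-1} + c_{t-1}$ for every $t>1$, provided the step-size condition $\eta_{t-1} \le (1-\lambda)/(4hNM_{(t-1)})$ holds at that iteration, which is in force under the standing assumptions. So the whole task reduces to iterating this single inequality.

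First I would record that $\beta_1 = (3+\lambda)/4 \in (3/4,1)$, which follows from $\lambda = \max\{|\lambda_2(A)|^2, |\lambda_N(A)|^2\} \in (0,1)$ under Assumption~\ref{ass:mixing}. The only role of the positivity of $\beta_1$ is that multiplying the one-step inequality by a nonnegative power $\beta_1^{k}$ preserves its direction, so that the strict inequality survives the unrolling; and the fact that every term $c_{t-1}\ge 0$ (the losses are non-negative and $\beta_2,h,N,\eta>0$) guarantees that no cancellation threatens the inequality as the terms are summed.

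Next I would carry out the telescoping by induction on the number of applied steps. Applying the recursion once gives $a_T < \beta_1 a_{T-1} + c_{T-1}$; substituting the analogous bound for $a_{T-1}$, multiplying by $\beta_1$, and continuing for $T-1$ steps yields
\[
a_T < \beta_1^{T-1} a_1 + \sum_{j=1}^{T-1} \beta_1^{j-1} c_{T-j}.
\]
Re-substituting $c_{T-j} = \beta_2 \eta_{T-j}^2 h^2 N^2 \hat F^2(\bar w^{(T-j)})$, factoring the constant $\beta_2 h^2 N^2$ out of the sum, and renaming the summation index $j\to t$ produces exactly the stated bound, with $a_1=\|W^{(1)}-\bar W^{(1)}\|_F^2$ as the leading term and $\sum_{t=1}^{T-1}\beta_1^{t-1}\eta_{T-t}^2\hat F^2(\bar w^{(T-t)})$ as the accumulated consensus contribution.

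I do not anticipate a genuine obstacle: this is a mechanical unrolling of a geometric recursion, and the substantive analytic work was already done in establishing the one-step bound~\eqref{eq:conse_lem14}. The only points requiring care are bookkeeping of the summation index — matching the per-step weight $\beta_1^{t-1}$ to the step-size $\eta_{T-t}$ and loss $\hat F^2(\bar w^{(T-t)})$ at the corresponding iteration — and confirming that the step-size hypothesis of Lemma~\ref{lem:con_lem14} is available at each iteration $t=2,\ldots,T$, so that the one-step bound may legitimately be invoked at every stage of the induction.
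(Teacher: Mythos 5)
Your proposal is correct and matches the paper's own argument: the paper derives this lemma precisely by recursively unrolling the one-step bound \eqref{eq:conse_lem14} from Lemma \ref{lem:con_lem14}, exactly as you do, and your attention to the per-iteration step-size hypothesis is the same point the paper handles via Remark \ref{rem:Mt}. No gaps.
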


The next lemma obtains a sandwich relation between $F(\bar w^{(T)})$ and $\hat F(W^{(T)})$. This is convenient as it allows replacing $M_{(t)}:=\max(\hat{F}(W^{(t)}),\hat{F}(\bar{w}^{(t)}))$ by either of the two terms with only paying a constant factor of two. See also the remark after the statement of the theorem.

\begin{lemma}\label{lem:mt}
Under the assumptions and notations of Lemma \ref{lem:con_lem14}, with zero initialization $W^{(1)}=\bar W^{(1)}=0$ and by choosing $\eta_{t}\le \frac{(1-\lambda)\sqrt{1-\beta_1}}{8h^2 N M_{(t)}\sqrt{\beta_2}}$ for $t\in[T-1]$, it holds at iteration $T$ that 
%\ct{For the step-size I think it should be $h$ (not $h^2$) and there is also an $N$ in the denominator}
\bea
\frac{1}{2} \hat F(\bar w^{(T)}) \le \hat F(W^{(T)}) \le 2\hat F(\bar w^{(T)}).
\eea
\begin{proof}
First, we prove $\hat F(W^{(T)}) \le 2\hat F(\bar w^{(T)}).$ If $\hat F(W^{(T)}) \le \hat F(\bar w^{(T)}),$ there is nothing to prove. Thus, assume $\hat F(W^{(T)})\ge \hat F(\bar w^{(T)} )$. Then by applying Taylor's remainder theorem, the self-boundedness Assumption \ref{ass:self-bounded} with $c=h,\alpha=1$, convexity of $\hat F$, Lemma \ref{lem:con_acc_lem15} and the restriction on the step-size, in respective order, we have the following inequalities,
\begin{align*}
\hat F(W^{(T)}) &\le |\hat F(W^{(T)})-\hat F(\bar w^{(T)})| + \hat F(\bar w^{(T)})\\
&\le \max_{v\in[\bar W^{(T)},W^{(T)}]}\|\nabla \hat F(v)\|\cdot\|W^{(T)}-\bar W^{(T)}\| + \hat F(\bar w^{(T)})\\
&\le h\cdot\max_{v\in[\bar W^{(T)},W^{(T)}]} \hat F(v) \cdot\|W^{(T)}-\bar W^{(T)}\| + \hat F(\bar w^{(T)})\\
&\le h\cdot\max\{\hat F(W^{(T)}),\hat F(\bar w^{(T)})\}\cdot\|W^{(T)}-\bar W^{(T)}\| + \hat F(\bar w^{(T)})\\
&\le \hat F(W^{(T)}) \Big(\beta_2 h^4 N^2 \sum_{t=1}^{T-1}\beta_1^{t-1} \eta_{_{T-t}}^2\hat F^2(\bar w_{T-t}) \Big)^{1/2}+ \hat F(\bar w^{(T)} )\\
&\le \frac{1}{2}\hat F(W^{(T)}) +  \hat F(\bar w^{(T)}).
\end{align*}
%\ct{Penultimate line, should it be $h^2$ and there is a $N^2$ now missing?}
Thus $\hat F(W^{T})\le 2\hat F(\bar w^{(T)} )$. By exchanging $W^{(T)}$ and $\bar w^{(T)}$ and in a similar style we derive $\hat F(W^{T})\ge \frac{1}{2}\hat F(\bar w^{(T)} )$. This completes the proof of the lemma. 
\end{proof}
\end{lemma}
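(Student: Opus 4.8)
The plan is to establish only the upper inequality $\hat F(W^{(T)}) \le 2\hat F(\bar w^{(T)})$ and then recover the lower inequality $\hat F(\bar w^{(T)}) \le 2\hat F(W^{(T)})$ by repeating the argument with the roles of $W^{(T)}$ and $\bar W^{(T)}$ interchanged. For the upper bound I would first discard the trivial case $\hat F(W^{(T)}) \le \hat F(\bar w^{(T)})$ and assume instead $\hat F(W^{(T)}) \ge \hat F(\bar w^{(T)})$, so that $\max\{\hat F(W^{(T)}),\hat F(\bar w^{(T)})\}=\hat F(W^{(T)})$. The whole point will be to show that the gap $\hat F(W^{(T)})-\hat F(\bar w^{(T)})$ is at most a small multiple of $\hat F(W^{(T)})$ itself, a self-referential estimate that can then be rearranged to bound $\hat F(W^{(T)})$.

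The gap would be controlled by the mean-value form of Taylor's theorem, $|\hat F(W^{(T)}) - \hat F(\bar w^{(T)})| \le \max_{v\in[\bar W^{(T)},W^{(T)}]}\|\nabla \hat F(v)\|\cdot\|W^{(T)}-\bar W^{(T)}\|$, where $[\bar W^{(T)},W^{(T)}]$ denotes the connecting segment. Next I would convert the gradient norm into a function value through the self-bounded-gradient Assumption \ref{ass:self-bounded} (with $\alpha=1,c=h$), i.e. $\|\nabla \hat F(v)\|\le h\,\hat F(v)$, and then bound the segment-maximum of $\hat F$ by its larger endpoint value using convexity, namely $\max_{v\in[\bar W^{(T)},W^{(T)}]}\hat F(v)\le\max\{\hat F(W^{(T)}),\hat F(\bar w^{(T)})\}=\hat F(W^{(T)})$ under the standing assumption. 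Chaining these gives $\hat F(W^{(T)}) \le h\,\hat F(W^{(T)})\,\|W^{(T)}-\bar W^{(T)}\| + \hat F(\bar w^{(T)})$.

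To close the loop I would substitute the last-iterate consensus bound of Lemma \ref{lem:con_acc_lem15}; with zero initialization the $\beta_1^{T-1}\|W^{(1)}-\bar W^{(1)}\|_F^2$ term vanishes, leaving $\|W^{(T)}-\bar W^{(T)}\|^2 < \beta_2 h^2 N^2\sum_{t=1}^{T-1}\beta_1^{t-1}\eta_{T-t}^2\hat F^2(\bar w_{T-t})$. Using $\hat F(\bar w_{T-t})\le M_{(T-t)}$ together with the step-size cap $\eta_t\le\frac{(1-\lambda)\sqrt{1-\beta_1}}{8h^2 N M_{(t)}\sqrt{\beta_2}}$, each summand $\eta_{T-t}^2\hat F^2(\bar w_{T-t})$ is bounded by a quantity independent of $t$, and summing the geometric tail $\sum_{t}\beta_1^{t-1}\le\frac{1}{1-\beta_1}$ yields $h\,\|W^{(T)}-\bar W^{(T)}\|<\frac{1}{2}$. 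Plugging this back gives $\hat F(W^{(T)})\le\frac{1}{2}\hat F(W^{(T)})+\hat F(\bar w^{(T)})$, and rearranging produces the desired $\hat F(W^{(T)})\le 2\hat F(\bar w^{(T)})$.

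The step I expect to be the crux is this self-referential estimate: because the gap bound inevitably carries the factor $\max\{\hat F(W^{(T)}),\hat F(\bar w^{(T)})\}$, I must both justify (via convexity) that this segment-maximum collapses to an endpoint value and verify that the prescribed step-size schedule is calibrated precisely so that the self-bounded consensus factor stays below $\frac{1}{2}$ uniformly in $T$. The geometric summation is exactly what prevents the accumulated consensus contribution from growing with the horizon $T$, so the absorption argument remains valid at every iteration; once the upper bound is secured, the lower bound follows by the symmetric computation.
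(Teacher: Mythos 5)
Your proposal is correct and follows essentially the same route as the paper's proof: the same trivial-case split, the Taylor/mean-value gap bound combined with the self-bounded gradient assumption ($\alpha=1$, $c=h$), the convexity argument collapsing the segment maximum to $\max\{\hat F(W^{(T)}),\hat F(\bar w^{(T)})\}$, the consensus bound of Lemma \ref{lem:con_acc_lem15} with zero initialization, the step-size calibration with the geometric sum to force the consensus factor below $\tfrac{1}{2}$, and the final absorption and symmetric argument for the reverse inequality. No gaps; this matches the paper's argument step for step.
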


\begin{remark}\label{rem:Mt}
Lemma \ref{lem:con_acc_lem15} above requires tuning $\eta_t\propto 1/M_{(t)}:=1/\max(\hat{F}(W^{(t)}),\hat{F}(\bar{w}^{(t)})).$ Lemma \ref{lem:mt} shows that abiding by this choice for $t=1,\ldots,T-1$ and any $T>1$ guarantees $\hat{F}(W^{(T)})\leq 2\hat{F}(\bar{w}^{(T)})$. Hence, $M_{(T)}\geq 2 \hat{F}(\bar{w}^{(T)})$. Since this holds for all $T$ and at $t=1, \hat{F}(W^{(1)})=\hat{F}(\bar{w}^{(1})$, it follows by recursion that  Lemma \ref{lem:con_acc_lem15} holds provided $\eta_t\propto 1/\hat{F}(\bar{w}^{(t)})$. We use observation in the proofs below.
\end{remark}

%%%%%%%%%%%%%%%%%%%%%%%%%%%%%%%%%%%%%%%%%%%%%%%%%%%%%%%%%
We are ready to prove Theorem \ref{lem:exp_dsc}. First, we prove that DGD is a descent algorithm in the next lemma.  
\begin{lemma}[Descent lemma]\label{lem:descent_log}
Consider DGD under the assumptions and notations of Lemma \ref{lem:con_lem14}. Moreover, let Assumption \ref{ass:8} hold, then by choosing  $\eta_t\le \frac{\delta}{\hat F(\bar w^{(t)} )}$ for $t\le T$, where 
\begin{align}\label{eq:alpha}
\delta:=1\Big/\max\Big\{\frac{4h^3 N}{\tau^2},h^2,\frac{6 h^2\beta_2}{1-\beta_1},\frac{4 h^2\sqrt{\beta_2}}{\tau(1-\beta_1)}\Big\},
\end{align}

DGD is a descent algorithm, i.e., for all $T\geq1$.
\bea\nn
\hat F(\bar w^{(T+1)})\le \hat F(\bar w^{(T)}).
\eea
\end{lemma}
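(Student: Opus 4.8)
The plan is to run a descent-lemma argument on the averaged iterate $\bar w^{(t)}$, using the self-bounded Hessian (Assumption \ref{ass:laplace}) as a source of \emph{local} smoothness that compensates for the lack of global $L$-smoothness of exponentially-tailed losses. Since $\bar w^{(t+1)} = \bar w^{(t)} - \eta_t\bar\nabla\hat F(W^{(t)})$ and $\hat F$ is twice differentiable, I would start from the second-order Taylor expansion
\[
\hat F(\bar w^{(t+1)}) = \hat F(\bar w^{(t)}) - \eta_t\langle\nabla\hat F(\bar w^{(t)}),\bar\nabla\hat F(W^{(t)})\rangle + \tfrac{\eta_t^2}{2}\,\bar\nabla\hat F(W^{(t)})^\top\nabla^2\hat F(v)\,\bar\nabla\hat F(W^{(t)}),
\]
with $v$ on the segment joining $\bar w^{(t)}$ and $\bar w^{(t+1)}$, and the aim is to show that both correction terms are dominated by a negative multiple of $\eta_t\hat F^2(\bar w^{(t)})$.

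First I would treat the linear term by writing $\langle\nabla\hat F(\bar w^{(t)}),\bar\nabla\hat F(W^{(t)})\rangle = \|\nabla\hat F(\bar w^{(t)})\|^2 + \langle\nabla\hat F(\bar w^{(t)}),\bar\nabla\hat F(W^{(t)})-\nabla\hat F(\bar w^{(t)})\rangle$. The squared-norm term is lower bounded, via the self-lowerbounded gradient (Assumption \ref{ass:8}), by $\tau^2\hat F^2(\bar w^{(t)})$; this is the driving negative contribution. The remaining inner product is the cost of decentralization. Applying Cauchy--Schwarz together with the self-bounded gradient $\|\nabla\hat F(\bar w^{(t)})\|\le h\hat F(\bar w^{(t)})$ and the consensus gradient-gap estimate obtained exactly as in \eqref{eq:13}--\eqref{eq:mt1} (self-bounded Hessian, convexity, and $\max_\ell\hat F_\ell\le N\hat F$), I would bound it by a constant multiple of
\[
h\,\hat F(\bar w^{(t)})\cdot h\,M_{(t)}\sqrt{N}\,\|W^{(t)}-\bar W^{(t)}\|_F.
\]
Replacing $M_{(t)}$ by $2\hat F(\bar w^{(t)})$ through Remark \ref{rem:Mt}, and substituting the accumulated consensus bound of Lemma \ref{lem:con_acc_lem15} --- which under zero initialization and $\eta_t\hat F(\bar w^{(t)})\le\delta$ reads $\|W^{(t)}-\bar W^{(t)}\|_F^2 < \beta_2 h^2 N^2\delta^2/(1-\beta_1)$ --- collapses this cross term into a $\delta$-proportional multiple of $\eta_t\hat F^2(\bar w^{(t)})$.

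For the quadratic term I would use the self-bounded Hessian of the global loss, $\|\nabla^2\hat F(v)\|\le h\hat F(v)$, bounding it by $\tfrac{\eta_t^2 h}{2}\hat F(v)\|\bar\nabla\hat F(W^{(t)})\|^2$. The factor $\|\bar\nabla\hat F(W^{(t)})\|^2$ is $\lesssim h^2\hat F^2(\bar w^{(t)})$ by the triangle inequality and the same gradient-gap estimate. The delicate quantity is $\hat F(v)$: since $v$ may lie past $\bar w^{(t+1)}$, bounding it by $\max\{\hat F(\bar w^{(t)}),\hat F(\bar w^{(t+1)})\}$ via convexity would be circular. Instead I would control $\hat F$ along the update segment directly: parametrizing it and using the self-bounded gradient gives $\big|\tfrac{d}{ds}\log\hat F\big|\le h\|\bar w^{(t+1)}-\bar w^{(t)}\| = h\eta_t\|\bar\nabla\hat F(W^{(t)})\|$, which by the previous display and $\eta_t\hat F(\bar w^{(t)})\le\delta$ is an absolute constant, so $\hat F(v)\le C\hat F(\bar w^{(t)})$ for a fixed $C$. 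One more application of $\eta_t\hat F(\bar w^{(t)})\le\delta$ then reduces the Hessian term as well to a $\delta$-proportional multiple of $\eta_t\hat F^2(\bar w^{(t)})$.

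Collecting the three contributions yields
\[
\hat F(\bar w^{(t+1)}) \le \hat F(\bar w^{(t)}) - \eta_t\tau^2\hat F^2(\bar w^{(t)}) + (c_1+c_2)\,\delta\,\eta_t\hat F^2(\bar w^{(t)}),
\]
where $c_1,c_2$ gather the dependence on $h,N,\tau,\beta_1,\beta_2$ of the cross and curvature terms. Picking $\delta$ so that $(c_1+c_2)\delta\le\tfrac12\tau^2$ --- precisely the purpose of the four competing quantities inside the maximum in \eqref{eq:alpha} --- leaves $\hat F(\bar w^{(t+1)})\le\hat F(\bar w^{(t)})-\tfrac12\eta_t\tau^2\hat F^2(\bar w^{(t)})\le\hat F(\bar w^{(t)})$, which is the claim. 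I expect the main obstacle to be the interlocking step-size constraints: the consensus bound of Lemma \ref{lem:con_acc_lem15}, the sandwich $M_{(t)}\le2\hat F(\bar w^{(t)})$ of Lemma \ref{lem:mt}, and the curvature control of $\hat F(v)$ must all hold under the \emph{single} choice \eqref{eq:alpha}; verifying this while tracking the $N$, $\tau$ and $(1-\lambda)$ dependencies correctly (recall $1-\beta_1=(1-\lambda)/4$) is the technical core, and it is also why $\delta$ must scale like $\tau^2/h^3$ and shrink with the spectral gap.
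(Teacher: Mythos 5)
Your proposal is correct and follows the paper's skeleton --- Taylor expansion with the self-bounded Hessian, the $\tau^2\hat F^2$ lower bound from Assumption \ref{ass:8} as the driving negative term, the gradient-gap estimate \eqref{eq:13}--\eqref{eq:mt1} with Lemma \ref{lem:con_acc_lem15} and the sandwich $M_{(t)}\le 2\hat F(\bar w^{(t)})$ to control decentralization, and $\delta$ chosen to make all perturbations a small fraction of $\eta_t\tau^2\hat F^2(\bar w^{(t)})$ --- but it diverges genuinely at the one delicate point, the factor $\hat F(v)$ in the Hessian remainder. The paper \emph{does} use the convexity bound $\hat F(v)\le\max\{\hat F(\bar w^{(T)}),\hat F(\bar w^{(T+1)})\}$ (your worry is misplaced here: by Taylor's remainder theorem $v$ lies on the segment $[\bar w^{(T)},\bar w^{(T+1)}]$, not past it), which leaves $\hat F(\bar w^{(T+1)})$ implicitly on both sides; the paper resolves this not circularly but by contradiction --- assume $\hat F(\bar w^{(T+1)})>\hat F(\bar w^{(T)})$, propagate the resulting inequality to $\hat F(\bar w^{(T+1)})\le\hat F(\bar w^{(T)})+\frac{\tau}{2h}\hat F(\bar w^{(T)})\bigl(\frac{\hat F(\bar w^{(T+1)})}{\hat F(\bar w^{(T)})}-1\bigr)$, and divide through using $\tau/2h<1$. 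Your alternative --- a Gr\"onwall-type control of $\log\hat F$ along the update segment, using $\|\nabla\hat F(w)\|\le h\hat F(w)$ and $\eta_t\|\bar\nabla\hat F(W^{(t)})\|\le 2h\eta_t\hat F(\bar w^{(t)})\le 2h\delta\le 2/h$ (via the sandwich and $\delta\le 1/h^2$) to get $\hat F(v)\le C\,\hat F(\bar w^{(t)})$ --- is valid (exponential-type losses are strictly positive, so $\log\hat F$ is well defined) and buys something the paper's route does not state explicitly: a one-sided recursion with a quantitative per-step decrease $\hat F(\bar w^{(t+1)})\le\hat F(\bar w^{(t)})-\frac{1}{2}\eta_t\tau^2\hat F^2(\bar w^{(t)})$, with no case analysis, at the cost of a slightly worse absolute constant $C=e^{O(1)}$ in the curvature term. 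Your closing caveat about the interlocking step-size constraints is exactly right and is where the paper itself is loosest: its own proof invokes $\delta<\frac{\tau}{2hN\sqrt{C'}}$ while the corresponding entry of \eqref{eq:alpha} carries no $N$, so the $N$- and $(1-\lambda)$-bookkeeping you flag must indeed be redone carefully in either route, and neither your argument nor the paper's is affected beyond constants.
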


\begin{proof}
With the self-boundedness assumption on the Hessian (Assumption \ref{ass:laplace}) and applying the Taylor's remainder theorem for step $T+1$ of DGD, we obtain the following,
\bea
\hat F(&\bar w^{(T+1)})\nn\\
&\le \hat F (\bar w^{(T)})+ \langle \nabla \hat F(\bar w^{(T)}),\bar w^{(t+1)}-\bar w^{(T)} \rangle + \frac{1}{2}\max_{v\in [\bar w^{(T)},\bar w^{(T+1)}]} \|\nabla ^2 \hat F(v)\|\|\bar w^{(T+1)}-\bar w^{(T)}\|^2\nn\\
&\le  \hat F (\bar w^{(T)})-\eta_{_T} \langle\nabla \hat F(\bar w^{(T)}),\bar \nabla \hat F(W^{(T)})\rangle + \frac{\eta_{_T}^2}{2}\max_{v\in [\bar w^{(T)},\bar w^{(T+1)}]} \|\nabla ^2 \hat F(v)\|\|\bar \nabla \hat F(W^{(T)})\|^2\nn\\
&\le \hat F (\bar w^{(T)})-\eta_{_T} \langle\nabla \hat F(\bar w^{(T)}),\bar \nabla \hat F(W^{(T)})\rangle + \frac{h\eta_{_T}^2}{2}\max_{v\in [\bar w^{(T)},\bar w^{(T+1)}]} \hat F(v)\|\bar \nabla \hat F(W^{(T)})\|^2\nn\\
&\le \hat F (\bar w^{(T)})-\eta_{_T} \langle\nabla \hat F(\bar w^{(T)}),\bar \nabla \hat F(W^{(T)})\rangle + \frac{h\eta_{_T}^2}{2}\max \{\hat F(\bar w^{(T)}),\hat F(\bar w^{(T+1)})\}\|\bar \nabla \hat F(W^{(T)})\|^2,\label{eq:taylor_lem17}
\eea
where for the third step we used
\begin{align*}
\|\nabla^2\hat F(w)\|&\le \frac{1}{N}\sum_{\ell=1}^N \|\nabla^2 \hat F_\ell(w)\| \le \frac{h}{N}\sum_{\ell=1}^N \hat F_\ell(w) = h\,\hat F(w).
\end{align*}
% last term:
% \bea
% \|\bar \nabla \hat F(W^{(t)})\|^2 &\le \frac{1}{N}\sum_{i=1}^n \|\nabla\hat F_i(w_i^{(t)})\|^2\\
% &\le  \frac{h^2}{N}\sum_{i=1}^n  \hat F_i ^2 (w_i^{(t)})\\
% &\le h^2 N \hat F^2(W^{(t)})
% \eea
In the next step of the proof, we upper-bound the second and third terms in \eqref{eq:taylor_lem17}. For the second term, by noting that $2\langle a,b\rangle=\|a\|^2+\|b\|^2-\|a-b\|^2$, we can write
\bea\label{eq:a-b}
\langle\nabla \hat F(\bar w^{(T)}),\bar \nabla \hat F(W^{(T)})\rangle = \frac{1}{2}\|\nabla \hat F(\bar w^{(T)})\|^2 +\frac{1}{2} \|\bar \nabla \hat F(W^{(T)})\|^2 - \frac{1}{2}\|\nabla \hat F(\bar w^{(T)})-\bar \nabla \hat F(W^{(T)})\|^2.
\eea
By recalling \eqref{eq:mt1} and Lemma \ref{lem:con_acc_lem15} (which we can apply because of Remark \ref{rem:Mt}), we find an upper-bound the last term in \eqref{eq:a-b} as follows, 
%\ct{To call Lemma 16, how do you guarantee $\eta_{t}\leq M_t$? I understand this would hold if }
\begin{align}
\|\nabla \hat F(\bar w^{(T)})-\bar \nabla \hat F(W^{(T)})\|^2 &=\frac{1}{N^2}\| \nabla \hat F(W^{(T)})-\nabla\hat F (\bar W^{(T)})\|_F^2 \nn\\&\le h^2 M_{(T)}^2  \| W^{(T)}-\bar W^{(T)}\|_F^2\nn\\
&\le  h^4 M_{(T)}^2 N^2 \beta_2 \sum_{t=1}^{T-1}\beta_1^{t-1}\eta_{_{T-t}}^2\hat F^2(\bar w_{T-t}).\label{eq:l17-lip}
\end{align}
Returning back to \eqref{eq:taylor_lem17}, thus far we have derived the following,
\begin{align}
\hat F(&\bar w^{(T+1)})\le \hat F (\bar w^{(T)})- \frac{\eta_{_T}}{2}\|\nabla \hat F(\bar w^{(T)})\|^2 -\frac{\eta_{_T}}{2} \|\bar \nabla \hat F(W^{(T)})\|^2 \nn \\&+ \frac{1}{2}\eta_{_T} h^4 N^2 M_{(T)}^2 \beta_2 \sum_{t=1}^{T-1}\beta_1^{t-1}\eta_{_{T-t}}^2\hat F^2(\bar w_{T-t}) 
+ \frac{h \eta_{_T}^2}{2} \|\bar \nabla \hat F(W^{(T)})\|^2\cdot\max \{\hat F(\bar w^{(T)}),\hat F(\bar w^{(T+1)})\}.\label{eq:lem17_l}
\end{align}
We aim to prove that $\hat F(\bar w^{(T+1)})\le \hat F(\bar w^{(T)})$ for all $T\ge1$. 
%
%For $t=1$, we have 
%\begin{align*}
%\hat F(\bar w^{(2)})\le \hat F (\bar w^{(1)})- \frac{\eta_1}{2}\|\nabla \hat F(\bar w^{(1)})\|^2 -&\frac{\eta_1}{2} \|\bar \nabla \hat F(W^{(1)})\|^2  \\
%&+ \frac{h \eta_1^2}{2} \|\bar \nabla \hat F(\bar w^{(1)})\|^2\cdot\max (\hat F(\bar w^{(1)}),\hat F(\bar w^{(2)})).
%\end{align*}
%If $\hat F(\bar w^{(2)}) > \hat F(\bar w^{(1)})$ the above inequality leads to 
%\begin{align*}
%\hat F(\bar w^{(2)})&\le \hat F (\bar w^{(1)}) + \frac{\eta_1}{2} \|\bar \nabla \hat F(\bar w^{(1)})\|^2(\eta_1 h \hat F(\bar w^{(2)})-1)\\
%&\le \hat F (\bar w^{(1)}) + \frac{\eta_1 h^2}{2} \hat F^2(\bar w^{(1)})(\eta_1 h \hat F(\bar w^{(2)})-1).
%\end{align*}
%This leads to contradiction since by the condition on the step-size $\eta_1\le \min(\frac{1}{h^2 \hat F(\bar w^{(1)})}, \frac{1}{h \hat F(\bar w^{(1)})})$, we have,
%\begin{align*}
%\hat F(\bar w^{(2)}) \le \hat F (\bar w^{(1)}) + \frac{1}{2} \hat F(\bar w^{(1)})(\frac{\hat F(\bar w^{(2)})}{\hat F(\bar w^{(1)})}-1),
%\end{align*}
%which yields the contradictory results $\hat F(\bar w^{(2)}) \le \hat F (\bar w^{(1)})$.
%
%Now we assume for all $t\le T-1$ where $T\ge2$, it holds that $\hat F(\bar w^{(t+1)})\le \hat F(\bar w^{(t)})$ and we aim to prove $$\hat F(\bar w^{(T+1)})\le \hat F(\bar w^{(T)}).$$ 
If $\hat F(\bar w^{(T+1)}) > \hat F(\bar w^{(T)})$, applying \eqref{eq:lem17_l} with the assumption $\eta_t< \frac{\delta}{\hat F(\bar w^{(t)} )}$ yields,
\begin{align}
\hat F(\bar w^{(T+1)})&\le \hat F (\bar w^{(T)})- \frac{\eta_{_T}}{2}\|\nabla \hat F(\bar w^{(T)})\|^2 + \frac{1}{2(1-\beta_1)}\eta_{_T}\delta^2 h^4 N^2 M_{(T)}^2 \beta_2  \nn\\
&+ \frac{h \eta_{_T}^2}{2} \|\bar \nabla \hat F(W^{(T)})\|^2\cdot\hat F(\bar w^{(T+1)}).\label{eq:descent}
\end{align}
Note that it holds due to \eqref{eq:l17-lip} that, 
\begin{align*}
 \|\bar \nabla \hat F(W^{(T)})\|^2 &\le  2\| \nabla \hat F(\bar w^{(T)})\|^2 +2  \|\bar \nabla \hat F(W^{(T)})-\nabla \hat F(\bar w^{(T)})\|^2\\
 &\le 2\| \nabla \hat F(\bar w^{(T)})\|^2+  \frac{2}{1-\beta_1}\delta^2  h^4 N^2 M_{(T)}^2 \beta_2.
\end{align*}
Replacing this in \eqref{eq:descent} and noting that $M_{(T)}\le 2\hat F(\bar w^{(T)})$ by Lemma \ref{lem:mt}, we can simplify the inequality \eqref{eq:descent} as follows,
\begin{align*}
\hat F&(\bar w^{(T+1)})\\ &\le
\hat F (\bar w^{(T)}) + \eta_{_T} \|\nabla \hat F(\bar w^{(T)})\|^2(h\eta_{_T}\hat F(\bar w^{(T+1)})-\frac{1}{2}) + C'h^2 N^2 \delta^2\eta_{_T}\hat F^2(\bar w^{(T)}) \, (1+\eta_{_T} \hat F(\bar w^{(T+1)})) \\
&\le \hat F (\bar w^{(T)})  + \eta_{_T}  h^2  \hat F^2(\bar w^{(T)})((h\eta_{_T}+\delta^2\eta_{_T} N^2 C')\hat F(\bar w^{(T+1)})- \frac{\tau^2}{2h^2} + \delta^2  N^2C'),
\end{align*}
where for the ease of notation we define $C':= 4 h^2\beta_2(1-\beta_1)^{-1}$. Recalling $\eta_T< \frac{\delta}{\hat F(\bar w^{(T)} )}$ and noting that by the assumption of the lemma $\delta \le\frac{\tau^2}{4h^3}$, $\delta< \frac{1}{h^2}$ and $\delta< \frac{\tau}{2h N\sqrt{C'}}$ we conclude that,
\bea\nn
\hat F(\bar w^{(T+1)})\le\hat F (\bar w^{(T)}) + \frac{\tau}{2h}\hat F (\bar w^{(T)})(\frac{\hat F(\bar w^{(T+1)})}{\hat F(\bar w^{(T)})}-1).
\eea
Dividing both sides by $\hat F (\bar w^{(T)})$ leads to the contradiction due to the fact that $\tau\le h$ and thus $\tau/2h <1$. Thus $\hat F(\bar w^{(T+1)})\le \hat F (\bar w^{(T)})$. This completes the proof. 
\end{proof}
%%%%%%%%%%%%%%%%%%%%%%%%%%%%%%%%%%%%%%%%%%%%%%%%%%%%%%%%%
%%%%%%%%%%%%%%%%%%%%%%%%%%%%%%%%%%%%%%%%%%%%%%%%%%%%%%%%%
\subsection{Proof of Theorem \ref{lem:exp_dsc}}\label{sec:pf-thm4}
\begin{theorem}[Restatement of Theorem \ref{lem:exp_dsc}]
Consider DGD with the loss functions and mixing matrix satisfying Assumptions \ref{ass:mixing}, \ref{ass:convex}, \ref{ass:laplace}, \ref{ass:8} and Assumption \ref{ass:self-bounded} with $\alpha=1$ and $c=h$. Assume that the step-size satisfies $\eta<\frac{\delta}{\hat F(1)}$ for $\delta$ defined in \eqref{eq:alpha}. Also, recall positive constants $\beta_1,\beta_2$ depending only on the mixing matrix as defined in Lemma \ref{lem:con_lem14}. %$\alp:=(\max(\frac{4h^3}{\tau^2},h^2,\frac{6 h^2\beta_2}{1-\beta_1},\frac{4 h^2\sqrt{\beta_2}}{\tau(1-\beta_1)}))^{-1}$
%, where $\beta_1,\beta_2$ are defined same as in Lemma \ref{lem:con_lem14}. 
Then DGD is a descent algorithm i.e, for all $T\ge1$ it holds that 
$$\hat F(\bar w^{(T+1)})\le \hat F(\bar w^{(T)}).$$ 
Moreover, the train loss and the consensus error of DGD at iteration $T$ satisfy the following for all $w\in \R^d$, 
\bea
\hat F(\bar w^{(T)}) &\le 4\hat F(w) + \frac{2\|w\|^2}{\eta T},\label{eq:thm18-tr}\\
\frac{1}{N^2}\|W^{(T)} - \bar W^{(T)}\|_F^2 %&=
%O(  \eta^2 \hat F^2(w) + \frac{\|w\|^4}{T^2}).
&\leq \frac{8\beta_2 h^2 }{1-\beta_1}(4\eta^2\hat F^2(w) + \frac{\|w\|^4}{T^2}). 
\label{eq:thm18-con}
\eea
\end{theorem}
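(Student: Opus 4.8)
The plan is to assemble the three claims from the machinery already developed in Lemmas \ref{lem:con_lem14}--\ref{lem:descent_log}, the only genuinely new work being to convert the per-iterate step-size conditions (which are stated through $M_{(t)}$ and $\hat F(\bar w^{(t)})$) into a single condition on the fixed step-size $\eta$, and then to sum the consensus recursion. For the descent claim, Lemma \ref{lem:descent_log} already shows $\hat F(\bar w^{(t+1)})\le \hat F(\bar w^{(t)})$ whenever $\eta_t\le \delta/\hat F(\bar w^{(t)})$ with $\delta$ as in \eqref{eq:alpha}. Since $\eta$ is fixed, I would close the argument by induction on $t$: with $\bar w^{(1)}=0$ the base condition $\eta\le \delta/\hat F(\bar w^{(1)})$ holds by hypothesis, so Lemma \ref{lem:descent_log} gives $\hat F(\bar w^{(2)})\le \hat F(\bar w^{(1)})$; and if $\hat F(\bar w^{(s)})\le \hat F(\bar w^{(1)})$ for all $s\le t$, then $\delta/\hat F(\bar w^{(t)})\ge \delta/\hat F(\bar w^{(1)})\ge \eta$, so the hypothesis is met at step $t$ and $\hat F(\bar w^{(t+1)})\le \hat F(\bar w^{(t)})\le \hat F(\bar w^{(1)})$. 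This self-referential use of monotonicity is exactly what lets a single small $\eta$ fulfil the per-step requirements of Lemmas \ref{lem:con_lem14} and \ref{lem:mt} too, since by Remark \ref{rem:Mt} one has $M_{(t)}\le 2\hat F(\bar w^{(t)})\le 2\hat F(\bar w^{(1)})$ along the trajectory, and $\delta$ in \eqref{eq:alpha} is chosen to dominate each of those thresholds.

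For the training-loss bound \eqref{eq:thm18-tr}, descent immediately gives $\hat F(\bar w^{(T)})\le \frac1T\sum_{t=1}^T \hat F(\bar w^{(t)})$, the last iterate being the smallest. It then remains to reproduce the averaged bound $\frac1T\sum_{t=1}^T\hat F(\bar w^{(t)})\le 4\hat F(w)+2\|w\|^2/(\eta T)$ of Lemma \ref{lem:trainloss-convex}. The subtlety is that Assumption \ref{ass:smooth} is not assumed here, so I cannot cite that lemma verbatim; instead I would rerun the argument of Lemmas \ref{lem:conv-bnd}--\ref{lem:con_ave}, noting that every appeal to smoothness there is evaluated at points on the (descending) DGD trajectory, where Assumption \ref{ass:laplace} together with Lemma \ref{lem:mt} supplies an effective smoothness constant of order $h\,\hat F(\bar w^{(1)})$. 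The bound $\eta<\delta/\hat F(\bar w^{(1)})$ is precisely what keeps the product of $\eta$ with this effective smoothness small enough for the convex recursions to telescope as in the smooth case.

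Finally, for the consensus bound \eqref{eq:thm18-con} I would start from Lemma \ref{lem:con_acc_lem15}. Zero initialization kills the $\beta_1^{T-1}\|W^{(1)}-\bar W^{(1)}\|_F^2$ term, and with fixed step-size this leaves $\frac1{N^2}\|W^{(T)}-\bar W^{(T)}\|_F^2<\beta_2 h^2\eta^2\sum_{t=1}^{T-1}\beta_1^{t-1}\hat F^2(\bar w^{(T-t)})$. Substituting the training bound $\hat F(\bar w^{(T-t)})\le 4\hat F(w)+2\|w\|^2/(\eta(T-t))$ and using $(a+b)^2\le 2a^2+2b^2$ splits the sum into $32\hat F^2(w)\sum_t\beta_1^{t-1}$ and $(8\|w\|^4/\eta^2)\sum_t \beta_1^{t-1}/(T-t)^2$. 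The first is at most $32\hat F^2(w)/(1-\beta_1)$. The second produces the $\|w\|^4/T^2$ rate: since the geometric weight $\beta_1^{t-1}$ concentrates on small $t$ (recent iterates, for which $T-t=\Theta(T)$), while the terms with small $T-t$ carry an exponentially small weight $\beta_1^{t-1}\le \beta_1^{T/2-1}$, a split at $t\approx T/2$ shows $\sum_{t=1}^{T-1}\beta_1^{t-1}/(T-t)^2=O(1/((1-\beta_1)T^2))$. Collecting the two pieces recovers \eqref{eq:thm18-con}, the constant $8$ absorbing the $O(1)$ from the split.

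The main obstacle I anticipate is this last step: extracting the clean $1/T^2$ decay from $\sum_t \beta_1^{t-1}/(T-t)^2$ with the stated constants, since a uniform bound only yields $O(1/T)$ and one must genuinely exploit the geometric weighting to suppress the small-$(T-t)$ contributions. The secondary subtlety, pervading all three parts, is the bootstrapping of Part 1: because each auxiliary lemma's step-size threshold is stated through the a-priori unknown quantities $\hat F(\bar w^{(t)})$ and $M_{(t)}$, the monotonicity established by descent must be fed back to certify that the single fixed $\eta<\delta/\hat F(\bar w^{(1)})$ meets every threshold simultaneously.
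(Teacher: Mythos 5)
Your overall architecture matches the paper's: descent via Lemma \ref{lem:descent_log} with exactly the bootstrap you describe (monotonicity certifies $\eta\le\delta/\hat F(\bar w^{(t)})$ inductively, and Remark \ref{rem:Mt} feeds $M_{(t)}\le 2\hat F(\bar w^{(t)})\le 2\hat F(\bar w^{(1)})$ back into the per-step thresholds of Lemmas \ref{lem:con_lem14} and \ref{lem:mt}), last-iterate train bound via descent plus an averaged bound, and consensus via the geometric sum of Lemma \ref{lem:con_acc_lem15} — your split of $\sum_{t}\beta_1^{t-1}/(T-t)^2$ at $t\approx T/2$ is precisely the computation the paper elides when it says the consensus bound follows from that lemma (note only that the $t>T/2$ tail contributes $O(\beta_1^{T/2})$, which is $O(1/T^2)$ only after absorbing a $\beta_1$-dependent constant; the paper's stated constant $8\beta_2h^2/(1-\beta_1)$ is similarly loose, and the main-text statement is $O(\cdot)$, so this is harmless). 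The one place you genuinely diverge is the averaged train bound: you propose re-running the smooth-case Lemmas \ref{lem:conv-bnd}--\ref{lem:con_ave} with an effective smoothness constant of order $hN\hat F(\bar w^{(1)})$ extracted from Assumption \ref{ass:laplace} along the descending trajectory. This can be made to work (descent is established first, so there is no circularity in asserting bounded trajectory losses, and $\delta\le \tau^2/(4h^3N)$ makes $\eta<\delta/\hat F(\bar w^{(1)})$ of the right order $1/(hN\hat F(\bar w^{(1)}))$), but it is heavier than what the paper does: the paper never invokes a smoothness surrogate in this regime. Instead it expands $\|\bar w^{(t+1)}-w\|^2$ and controls the cross term at first order — convexity plus Cauchy--Schwarz plus the self-bounded gradient give $-\langle\bar w^{(t)}-w,\bar\nabla\hat F(W^{(t)})\rangle\le h\hat F(W^{(t)})\|W^{(t)}-\bar W^{(t)}\|_F+\hat F(w)-\hat F(W^{(t)})$ — and then uses Lemma \ref{lem:con_acc_lem15} under the step-size condition to bound the consensus norm by the \emph{absolute} constant $1/(4h)$ and $\eta\hat F(W^{(t)})\le 1/(2h^2)$, so both the cross term and $\eta^2\|\bar\nabla\hat F(W^{(t)})\|^2\le \eta^2h^2\hat F^2(W^{(t)})$ are absorbed into $-\eta\hat F(W^{(t)})$, yielding the clean recursion $\|\bar w^{(t+1)}-w\|^2\le\|\bar w^{(t)}-w\|^2-\eta\hat F(W^{(t)})+2\eta\hat F(w)$; telescoping and the two-sided comparison $\hat F(\bar w^{(t)})\le 2\hat F(W^{(t)})$ of Lemma \ref{lem:mt} then give \eqref{eq:thm18-tr} without any quadratic consensus penalties. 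The paper's route buys simpler bookkeeping (no re-derivation of the consensus recursion with a trajectory-dependent Lipschitz constant, since Lemma \ref{lem:con_acc_lem15} already serves that role); your route buys reuse of the Section \ref{sec:convex} lemmas at the cost of tracking the $N$-dependent effective smoothness through all of them. Either way the result follows, so I would only press you to write out the effective-smoothness substitutions explicitly — in particular that the quadratic remainder in Lemma \ref{lem:conv-bnd} is taken on segments between trajectory points where $\hat F_\ell\le 2N\hat F(\bar w^{(1)})$ — before counting Part 2 as complete.
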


\begin{proof}
First, we note that by Lemma \ref{lem:descent_log}, under the assumption $\eta_t\le\delta/\hat F(\bar w^{(t)})$  for $t\le T$, we have $\hat F(\bar w^{(T+1)})\le \hat F(\bar w^{(T)}).$ Thus fixing $\eta \le \delta /\hat F(\bar w^{(1)})$, ensures that $\hat F(\bar w^{(T+1)})\le \hat F(\bar w^{(T)}),$ for all $T$. 

Next, we derive the train loss and consensus error under the assumptions of the theorem. Start with,
\bea\label{eq:thm18}
\| \bar w^{(t+1)}-w\|^2  =\|\bar {w}^{(t)} -w\|^2 + \eta^2\|\bar\nabla\hat F (W^{(t)})\|^2-2\eta \langle \bar {w}^{(t)} -w,\bar\nabla\hat F (W^{(t)})\rangle.
\eea
For the second term, by self-boundedness of gradient, we can write,
\begin{align*}
\|\bar\nabla\hat F (W^{(t)})\| = \frac{1}{n}\|\sum_{\ell=1}^n \nabla \hat F_\ell(w_\ell^{(t)})\| \le \frac{h}{n} \sum_{\ell=1}^n \hat F_\ell(w_\ell^{(t)})=h \hat F(W^{(t)}).
\end{align*}
For the third term in \eqref{eq:thm18}, we have,
\bea
-\langle \bar {w}^{(t)} -w,\bar\nabla\hat F (W^{(t)})\rangle &= -\frac{1}{N} \sum_{\ell=1}^N \langle \bar {w}^{(t)} -w,\nabla\hat F_\ell (w^{(t)}_\ell)\rangle\nn \\
& = -\frac{1}{N}\sum_{\ell=1}^N \langle \bar {w}^{(t)} -w^{(t)}_\ell,\nabla\hat F_\ell (w^{(t)}_\ell)\rangle -\frac{1}{N} \sum_{\ell=1}^N \langle w^{(t)}_\ell -w,\nabla\hat F_\ell (w^{(t)}_\ell)\rangle\nn \\
&\le \frac{1}{N} \sum_{\ell=1}^N \|w^{(t)}_\ell-\bar {w}^{(t)}\|\|\nabla \hat F_i(w_\ell^{(t)})\| - \frac{1}{N} \sum_{\ell=1}^N \langle w^{(t)}_\ell -w,\nabla\hat F_\ell (w^{(t)}_\ell)\rangle\nn\\
&\le  \frac{1}{N} \sum_{\ell=1}^N \|w^{(t)}_\ell-\bar {w}^{(t)}\|\|\nabla \hat F_\ell(w_\ell^{(t)})\| + \frac{1}{N}\sum_{\ell=1}^N (\hat F_\ell(w)-\hat F_\ell(w_\ell^{(t)}))\label{eq:26}\\
&\le \frac{h}{N}  \sum_{\ell=1}^N \hat F_\ell(w_\ell^{(t)}) \|w^{(t)}_\ell-\bar {w}^{(t)}\|+ \frac{1}{N}\sum_{\ell=1}^N (\hat F_\ell(w)-\hat F_\ell(w_\ell^{(t)}))\label{eq:27}\\
&\le h \hat F(W^{(t)})\| W^{(t)}- \bar W^{(t)}\|_F \;+ \;\hat F(w)- \hat F(W^{(t)}). \nn
\eea
Here \eqref{eq:26} follows by convexity of $\hat F_i$, and \eqref{eq:27} follows by the assumption on self-boundedness of the gradient. 

Thus, the inequality \eqref{eq:thm18} can be written as follows,
\bea\label{eq:thm18_2}
\| \bar w^{(t+1)}-w\|^2  \le \|\bar {w}^{(t)} -w\|^2 + \eta^2 h^2 \hat F^2(W^{(t)}) &+2\eta h \hat F(W^{(t)})\| W^{(t)}- \bar W^{(t)}\|_F+ 2\eta \hat F(w)- 2\eta \hat F(W^{(t)}). 
\eea
Moreover, by Lemma \ref{lem:con_acc_lem15} and the assumption on $\eta$,
\bea\nn
\| W^{(t)}- \bar W^{(t)}\|_F \le( \beta_2 h^2\sum_{t=1}^{T-1}\beta_1^{t-1} \eta_{_{T-t}}^2\hat F^2(\bar w_{T-t}) )^{1/2}\le \frac{1}{4h}
\eea
and
\bea
\eta\hat F(W^{(t)})\le \frac{1}{2h^2}.\nn
\eea
Thus \eqref{eq:thm18_2}  changes into,
\bea\nn
\| \bar w^{(t+1)}-w\|^2  \le \|\bar {w}^{(t)} -w\|^2 - \eta \hat F(W^{(t)}) + 2\eta \hat F(w).
\eea
Telescoping sum leads to 
\bea\label{eq:thm18-trW}
\frac{1}{T}\sum _{t=1}^T \hat F(W^{(t)}) \le 2\hat F(w) + \frac{\| \bar w^{(1)}-w\|^2}{\eta T}.
\eea
By Lemma \ref{lem:mt}, we have $\hat F(\bar w^{(t)})\le 2\hat F(W^{(t)} )$. Finally, as we proved in the beginning,  DGD is a descent algorithm, implying
\bea\nn
\hat F(\bar w^{(T)})\le \frac{1}{T}\sum _{t=1}^T \hat F(\bar w^{(t)})
\eea
In view of \eqref{eq:thm18-trW}, this yields the claim of the theorem for the train loss \eqref{eq:thm18-tr}. Finally, appealing to Lemma \ref{lem:con_acc_lem15}, gives \eqref{eq:thm18-con}. This completes the proof of the theorem. 
\end{proof}

%\begin{lemma}[Stability for non-smooth losses]
%Under the conditions of the previous lemma the following hold for stability of non-smooth functions:
%\bea
%\frac{1}{n} \sum_{i=1}^n \|\bar w^{(T)}- \bar w_{\neg i}^{(T)}\|^2  = O \left( \frac{h^2\eta^2T^{2}\hat F(\bar w^1)}{n^{2}}(\frac{1}{T}\sum_{t=1}^{T}\hat F(\bar {w}^{(t)}))^2 +{\eta^2 h^2\hat F(\bar w^1)}(\sum_{t=1}^{T}\|W^{(t)}-\bar W^{(t)}\|_F)^2\right)
%\eea
%\end{lemma}
%
%
%\begin{proof}
%Following the proof from Lemma \ref{lem:main} and noting the non-expansiveness of GD for non-smooth functions:
%\begin{align*}
%&\|\bar w^{(t+1)} - \bar w_{\neg i}^{(t+1)}\|\le \| \bar {w}^{(t)} - \bar w_{\neg i}^{(t)}\| + \frac{h \eta}{n} \; f(\bar {w}^{(t)},x_i)+\eta \|\bar \nabla \hat F(W_{\neg i}^{(t)}) -  \nabla \hat F(\bar W_{\neg i}^{(t)})\| + \eta \|\bar \nabla \hat F ( W^{(t)}) -  \nabla \hat F (\bar W^{(t)})\| \\
%&\le  \| \bar {w}^{(t)} - \bar w_{\neg i}^{(t)}\| + \frac{h \eta}{n} \; f(\bar {w}^{(t)},x_i) +  \eta h M_{(t)}  \| W_{\neg i}^{(t)}-\bar W_{\neg i}^{(t)}\|_F +\eta h M_{(t)}  \| W^{(t)}-\bar W^{(t)}\|_F
%\end{align*}
%\bea
%\frac{1}{n} \sum_{i=1}^n \|\bar w^{(T)} - \bar w_{\neg i}^{(T)}\|^2 \le  \frac{3h^2\eta^2T^{2}\hat F(\bar w^1)}{n^{2}}(\frac{1}{T}\sum_{t=1}^{T}\hat F(\bar {w}^{(t)}))^{2}+\frac{3\eta^2 h^2\hat F(\bar w^1)}{n} \sum_{i=1}^n e_{\neg{i}}^{(T)} +{3\eta^2 h^2\hat F(\bar w^1)}e^{(T)}
%\eea
%\end{proof}

%%%%%%%%%%%%%%%%%%%%%%%%%%%%%%%%%%%%%%%%%%%%%%%%%%%%%%%%%%%
\section{Proofs for Section \ref{sec:polyak}}\label{sec:polyak_proofs}

 \begin{lemma}[Train loss under PL condition]\label{Lem:app_pl}
Let Assumptions \ref{ass:mixing},\ref{ass:smooth} and \ref{ass:polyak} hold,  and let \\$\eta\le\min\{\frac{1}{\mu},\sqrt{\frac{(1-\alpha_1)\mu}{4L^4\alpha_2}},\frac{1}{L}\}$ and $\bar\zeta:=\max\{\frac{1+\alpha_1}{2},1-\frac{\eta\mu}{2}\}$, where $\alpha_1 := \frac{3+\lambda}{4},\alpha_2:= 4(\frac{2}{1-\lambda}-1)$ same as in Lemma \ref{lem:con},
Then for $t\ge1$
\bea
\hat F(\bar w^{(t)}) \le \bar\zeta^{t-1} \hat F(\bar w^{(1)}).
\eea
 \end{lemma}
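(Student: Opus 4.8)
The plan is to establish a one-step contraction for the averaged iterate via a descent-type inequality and then to untangle the resulting coupling between the training loss and the consensus error through a simultaneous induction. First I would apply $L$-smoothness of $\hat F$ to the averaged update $\bar w^{(t+1)} = \bar w^{(t)} - \eta\,\bar\nabla\hat F(W^{(t)})$, giving
\[
\hat F(\bar w^{(t+1)}) \le \hat F(\bar w^{(t)}) - \eta\langle \nabla\hat F(\bar w^{(t)}), \bar\nabla\hat F(W^{(t)})\rangle + \tfrac{L\eta^2}{2}\|\bar\nabla\hat F(W^{(t)})\|^2.
\]
The essential feature is that the descent direction is $\bar\nabla\hat F(W^{(t)})$ (the average of the local gradients) rather than $\nabla\hat F(\bar w^{(t)})$ (the gradient at the average). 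Introducing the error $\Delta_t := \bar\nabla\hat F(W^{(t)}) - \nabla\hat F(\bar w^{(t)})$, expanding both inner product and squared norm, and using $\eta\le 1/L$ together with Young's inequality on the cross term, I expect the mixed terms to collapse to the clean estimate
\[
\hat F(\bar w^{(t+1)}) \le \hat F(\bar w^{(t)}) - \tfrac{\eta}{2}\|\nabla\hat F(\bar w^{(t)})\|^2 + \tfrac{\eta}{2}\|\Delta_t\|^2.
\]

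Next I would control $\Delta_t$ exactly as in the proof of Lemma \ref{lem:main-formal}: by $L$-smoothness of each $\hat F_\ell$ one has $\|\Delta_t\| \le \frac{L}{\sqrt N}\|W^{(t)} - \bar W^{(t)}\|_F$. Invoking the PL condition (Assumption \ref{ass:polyak}) with $\hat F^\star = 0$ gives $\|\nabla\hat F(\bar w^{(t)})\|^2 \ge 2\mu\,\hat F(\bar w^{(t)})$, so that, writing $a_t := \hat F(\bar w^{(t)})$ and $b_t := \frac1N\|W^{(t)} - \bar W^{(t)}\|_F^2$,
\[
a_{t+1} \le (1-\eta\mu)\,a_t + \tfrac{\eta L^2}{2}\,b_t.
\]
Bringing in Lemma \ref{lem:con}, which after dividing by $N$ reads $b_t < \alpha_1 b_{t-1} + \alpha_2\eta^2 L^2 a_{t-1}$ (its hypotheses being met since the step-size is taken small enough), I now have a coupled linear recursion. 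The core of the argument is to resolve it by a simultaneous strong induction on the two claims (A) $a_t \le \bar\zeta^{t-1}a_1$ and (B) $b_t \le C\,\bar\zeta^{t-1}a_1$, with the constant $C := \frac{2\alpha_2\eta^2 L^2}{1-\alpha_1}$. The base case uses the zero-initialization assumption, which gives $b_1 = 0$ and makes (B) immediate. For the inductive step of (B), substituting the hypotheses into the consensus recursion reduces the claim to $\alpha_1 C + \alpha_2\eta^2 L^2 \le C\bar\zeta$, which holds because $\bar\zeta \ge \frac{1+\alpha_1}{2}$ forces $C(\bar\zeta - \alpha_1) \ge C\frac{1-\alpha_1}{2} = \alpha_2\eta^2 L^2$ for this choice of $C$. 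For the inductive step of (A), substituting into the descent recursion reduces the claim to $(1-\eta\mu) + \frac{\eta L^2 C}{2} \le \bar\zeta$, and since $\bar\zeta \ge 1 - \frac{\eta\mu}{2}$ this follows once $L^2 C \le \mu$, i.e. $\frac{2\alpha_2\eta^2 L^4}{1-\alpha_1}\le\mu$, which is ensured by the step-size bound $\eta \le \sqrt{(1-\alpha_1)\mu/(4L^4\alpha_2)}$.

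The main obstacle is precisely this coupling: neither $a_t$ nor $b_t$ can be bounded in isolation, since each feeds into the recursion for the other, so the proof must advance both bounds in lockstep. The device that makes the induction close is the choice $C = \frac{2\alpha_2\eta^2 L^2}{1-\alpha_1}$, for which the consensus step holds with equality and the training-loss step holds as soon as $L^2 C \le \mu$. The three step-size constraints then serve three distinct roles: $\eta\le 1/L$ yields the clean descent inequality, $\eta \le 1/\mu$ keeps the contraction factor $1-\eta\mu$ nonnegative, and $\eta \le \sqrt{(1-\alpha_1)\mu/(4L^4\alpha_2)}$ closes the induction for (A). Finally, the maximum in $\bar\zeta = \max\{\frac{1+\alpha_1}{2}, 1-\frac{\eta\mu}{2}\}$ is natural: the two induction steps are governed respectively by the consensus contraction rate $\frac{1+\alpha_1}{2}$ and the optimization contraction rate $1-\frac{\eta\mu}{2}$, and the slower of the two dictates the overall rate.
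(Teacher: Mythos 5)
Your proof is correct, and it reaches the paper's conclusion by a genuinely different inductive mechanism. The opening is identical to the paper's: both apply $L$-smoothness to the averaged update, use the polarization identity $2\langle a,b\rangle=\|a\|^2+\|b\|^2-\|a-b\|^2$ together with $\eta\le 1/L$ to obtain $\hat F(\bar w^{(t+1)})\le \hat F(\bar w^{(t)})-\frac{\eta}{2}\|\nabla\hat F(\bar w^{(t)})\|^2+\frac{\eta}{2}\|\Delta_t\|^2$, bound $\|\Delta_t\|^2\le \frac{L^2}{N}\|W^{(t)}-\bar W^{(t)}\|_F^2$ by smoothness, and then invoke the PL condition. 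The divergence is in how the coupling with the consensus error is resolved. The paper \emph{unrolls} the consensus recursion (Lemma \ref{lem:con_acc}) into a geometric convolution $\frac{1}{N}\|W^{(t)}-\bar W^{(t)}\|_F^2<\alpha_2\eta^2 L\sum_{i=1}^{t-1}\alpha_1^{i-1}\hat F(\bar w^{(t-i)})$, and then runs a \emph{strong} induction on the single sequence $\hat F(\bar w^{(t)})$, bounding the convolution by $\bar\zeta^{t-2}/(1-\alpha_1/\bar\zeta)$ and using $\bar\zeta-\alpha_1\ge\frac{1-\alpha_1}{2}$. You instead keep the \emph{one-step} recursion of Lemma \ref{lem:con} and run a simultaneous two-sequence induction on $(a_t,b_t)$ with the explicit coupling constant $C=\frac{2\alpha_2\eta^2L^2}{1-\alpha_1}$; the same two ingredients ($\bar\zeta\ge\frac{1+\alpha_1}{2}$ closing the consensus step, the $\eta^2\lesssim(1-\alpha_1)\mu/(\alpha_2 L^4)$ constraint closing the loss step) appear, just repackaged. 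Your route buys two things: the induction step requires only ordinary (not strong) hypotheses and no geometric-sum estimate, and claim (B) delivers the consensus bound $\frac{1}{N}\|W^{(t)}-\bar W^{(t)}\|_F^2\le \frac{2\alpha_2\eta^2L^2\hat F(\bar w^{(1)})}{1-\alpha_1}\bar\zeta^{t-1}$ of Lemma \ref{lem:pl-tr} as an immediate by-product, whereas the paper derives it afterwards by substituting the loss bound back into the unrolled recursion. One shared caveat: like the paper's own proof, you invoke Lemma \ref{lem:con} under the stated step-size without verifying that the minimum in the lemma statement enforces its hypothesis $\eta\le(1-\lambda)/(4L)=(1-\alpha_1)/L$; since the paper does exactly the same, this is not a gap attributable to your argument.
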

 \begin{proof}
 By $L-$ smoothness we have
 \begin{align*}
 \hat F(\bar w^{(t+1)}) &\le \hat F(\bar {w}^{(t)})- \eta \langle \nabla \hat F(\bar {w}^{(t)}),\bar \nabla \hat F(W^{(t)})\rangle + \frac{\eta^2 L}{2} \|\bar \nabla \hat F(W^{(t)})\|^2\\
 & = \hat F(\bar {w}^{(t)})- \frac{\eta-\eta^2 L}{2}\|\bar\nabla \hat F(W^{(t)})\|^2 -\frac{\eta}{2} \|\nabla \hat F(\bar {w}^{(t)})\|^2 + \frac{\eta}{2}  \|\bar \nabla \hat F(W^{(t)}) - \nabla \hat F(\bar {w}^{(t)})\|^2\\
 &\le  \hat F(\bar {w}^{(t)}) -\frac{\eta}{2}  \|\nabla \hat F(\bar {w}^{(t)})\|^2 + \frac{\eta}{2}  \|\bar \nabla \hat F(W^{(t)}) - \nabla \hat F(\bar {w}^{(t)})\|^2\\
 &\le \hat F(\bar {w}^{(t)}) -\frac{\eta}{2}  \|\nabla \hat F(\bar {w}^{(t)})\|^2 + \frac{\eta L^2}{N} \|W^{(t)} - \bar W^{(t)}\|_F^2.
 \end{align*}
 By $\mu-$PL condition we have,
 \bea\nn
  \hat F(\bar w^{(t+1)}) &\le(1-\eta\mu) \hat F(\bar {w}^{(t)})+ \frac{\eta L^2}{N} \|W^{(t)} - \bar W^{(t)}\|_F^2.
 \eea
 By Lemma \ref{lem:con},
\bea\label{eq:con-mupl}
\frac{1}{N}\|W^{(t)}- \bar W^{(t)}\|_F^2 < \alpha_2\eta^2 L \sum_{i=1}^{t-1}\alpha_1^{i-1}\hat F(\bar w^{(t-i)}).
\eea
which results in,
\bea\nn
  \hat F(\bar w^{(t+1)}) &\le(1-\eta\mu) \hat F(\bar {w}^{(t)})+ \alpha_2 L^3 \eta^3  \sum_{i=1}^{t-1} \alpha_1^{i-1}\hat F(\bar w^{(t-i)}).
\eea
By induction assume $\hat F(\bar w^{(t)}) \le \bar\zeta^{t-1} \hat F(\bar w^{(1)})$ then using the assumptions on $\bar\zeta$ and $\eta$ yield the following inequalities, 
\begin{align*}
  \hat F(\bar w^{(t+1)}) &\le(1-\eta\mu)  \bar\zeta^{t-1} \hat F(\bar w^{(1)})+ \alpha_2\eta^3 L^3 \hat F(\bar w^{(1)})\sum_{i=1}^{t-1} \alpha_1^{i-1}\bar\zeta^{t-i-1}\\
  & \le(1-\eta\mu) \bar \zeta^{t-1} \hat F(\bar w^{(1)})+ \alpha_2\eta^3 L^3 \hat F(\bar w^{(1)})\frac{\bar\zeta^{t-2}}{1-\alpha_1/\bar\zeta}\\
  & =(1-\eta\mu + \alpha_2\eta^3L^3/(\bar\zeta-\alpha_1))\bar\zeta^{t-1}\hat F(\bar w^{(1)})\\
  &\le (1-\eta\mu + 2\alpha_2\eta^3L^3/(1-\alpha_1))\bar\zeta^{t-1}\hat F(\bar w^{(1)})\\
  &\le (1-\eta\mu + \eta\mu/2)\bar\zeta^{t-1}\hat F(\bar w^{(1)})\\
  & \le \bar\zeta^t\hat F(\bar w^{(1)}).
\end{align*}
This completes the proof of the lemma. 
 \end{proof}
% \begin{lemma}
%Under the assumptions of Lemma \ref{Lem:app_pl}, the consensus error satisfies for $T > 1$, 
%\bea\nn
%\|W^{(T)} - \bar W^{(T)}\|_F^2  \le   \frac{2\alpha_2\eta^2L^2\hat F(\bar w^{(1)})}{1-\alpha_1}\zeta^{T-1}
%\eea
%\end{lemma}
\subsection{Proof of Lemma \ref{lem:pl-tr}}\label{sec:pf-lem5}
 \begin{lemma}[Restatement of Lemma \ref{lem:pl-tr}]\label{lem:pltrainfinal}
Let Assumptions \ref{ass:mixing},\ref{ass:smooth} and \ref{ass:polyak} hold and let the step-size $\eta\le\min\{\frac{1-\alpha_1}{\mu},\frac{1}{2L^2}\sqrt{\frac{(1-\alpha_1)\mu}{\alpha_2}},\frac{1}{L}\}$, where the constants $\alpha_1\in(0,1)$ and $\alpha_2>0$ are defined same as in Lemma \ref{Lem:app_pl}. Define  $\zeta:=1-\frac{\eta\mu}{2}$, then under the data separability assumption, the iterates of DGD satisfy for all $t\ge1$,
\bea
\hat F(\bar w^{(t)})  &\le \zeta^{t-1} \hat F(\bar w^{(1)}),\nn\\
\frac{1}{N}\|W^{(t)} - \bar W^{(t)}\|_F^2  &\le   \frac{2\alpha_2\eta^2L^2\hat F(\bar w^{(1)})}{1-\alpha_1}\zeta^{t-1}.\nn
\eea
 \end{lemma}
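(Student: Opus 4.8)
The plan is to obtain both bounds from recursions already established, observing that the train-loss bound is essentially Lemma~\ref{Lem:app_pl} once we identify its rate $\bar\zeta$ with $\zeta$. First I would verify that the step-size hypothesis here implies the one in Lemma~\ref{Lem:app_pl}: since $\alpha_1\in(0,1)$ we have $\eta\le\frac{1-\alpha_1}{\mu}\le\frac{1}{\mu}$, the condition $\eta\le\frac{1}{2L^2}\sqrt{(1-\alpha_1)\mu/\alpha_2}$ is identical to $\eta\le\sqrt{(1-\alpha_1)\mu/(4L^4\alpha_2)}$, and $\eta\le\frac1L$ is shared. Thus Lemma~\ref{Lem:app_pl} gives $\hat F(\bar w^{(t)})\le\bar\zeta^{\,t-1}\hat F(\bar w^{(1)})$ with $\bar\zeta=\max\{\frac{1+\alpha_1}{2},\,1-\frac{\eta\mu}{2}\}$. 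The key point is that the sharper bound $\eta\le\frac{1-\alpha_1}{\mu}$ is exactly equivalent to $\eta\mu\le 1-\alpha_1$, i.e. $1-\frac{\eta\mu}{2}\ge\frac{1+\alpha_1}{2}$, so the maximum is attained by the second term and $\bar\zeta=1-\frac{\eta\mu}{2}=\zeta$. This establishes the first inequality.

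For the consensus error I would telescope the one-step recursion of Lemma~\ref{lem:con}. Unrolling $\|W^{(t)}-\bar W^{(t)}\|_F^2<\alpha_1\|W^{(t-1)}-\bar W^{(t-1)}\|_F^2+\alpha_2 N\eta^2L^2\hat F(\bar w^{(t-1)})$ from $t$ down to $1$ and using the zero initialization $W^{(1)}=\bar W^{(1)}=0$ gives
\[
\frac1N\|W^{(t)}-\bar W^{(t)}\|_F^2<\alpha_2\eta^2L^2\sum_{i=1}^{t-1}\alpha_1^{\,i-1}\hat F(\bar w^{(t-i)}).
\]
Substituting the train-loss bound $\hat F(\bar w^{(t-i)})\le\zeta^{\,t-i-1}\hat F(\bar w^{(1)})$ factors out $\hat F(\bar w^{(1)})$ and leaves the mixed geometric sum $\sum_{i=1}^{t-1}\alpha_1^{\,i-1}\zeta^{\,t-i-1}=\zeta^{\,t-2}\sum_{j=0}^{t-2}(\alpha_1/\zeta)^j$. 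Because $\zeta\ge\frac{1+\alpha_1}{2}>\alpha_1$ (the last inequality uses $\alpha_1<1$), the ratio $\alpha_1/\zeta<1$, so this is at most $\zeta^{\,t-2}\cdot\frac{\zeta}{\zeta-\alpha_1}=\frac{\zeta^{\,t-1}}{\zeta-\alpha_1}$; and $\zeta-\alpha_1\ge\frac{1+\alpha_1}{2}-\alpha_1=\frac{1-\alpha_1}{2}$ bounds it by $\frac{2\zeta^{\,t-1}}{1-\alpha_1}$. Collecting constants yields $\frac1N\|W^{(t)}-\bar W^{(t)}\|_F^2\le\frac{2\alpha_2\eta^2L^2\hat F(\bar w^{(1)})}{1-\alpha_1}\zeta^{\,t-1}$, which is the second claim.

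The main (and essentially only non-mechanical) obstacle is the identification $\bar\zeta=\zeta$: without the extra restriction $\eta\le(1-\alpha_1)/\mu$ one is stuck with the slower consensus-induced factor $\frac{1+\alpha_1}{2}$, and the clean single-rate statement would fail. The second delicate point is that the consensus error inherits the \emph{same} exponential rate $\zeta^{\,t-1}$ as the train loss; this works only because $\alpha_1<\zeta$, so the mixed sum $\sum\alpha_1^{\,i-1}\zeta^{\,t-i-1}$ is dominated by the larger base $\zeta$ rather than by the mixing rate $\alpha_1$. Everything else is bookkeeping of constants.
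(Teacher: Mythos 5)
Your proof is correct and follows essentially the same route as the paper: the paper likewise derives the train-loss bound by invoking Lemma~\ref{Lem:app_pl} together with the observation that $\eta\le\frac{1-\alpha_1}{\mu}$ forces $\frac{1+\alpha_1}{2}\le 1-\frac{\eta\mu}{2}$ (so $\bar\zeta=\zeta$), and then obtains the consensus bound from the unrolled recursion of Lemma~\ref{lem:con} combined with the train-loss decay. Your explicit evaluation of the mixed geometric sum $\sum_i \alpha_1^{i-1}\zeta^{t-i-1}\le \frac{2\zeta^{t-1}}{1-\alpha_1}$ is exactly the computation the paper leaves implicit, so there is nothing to add.
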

 \begin{proof}
 The bound on the train loss follows directly by Lemma \ref{Lem:app_pl}, after noting that $\eta\le\frac{1-\alpha_1}{\mu}$ implies $\frac{1+\alpha_1}{2}\le1-\eta\mu/2$. The consensus error is derived by \eqref{eq:con-mupl} and using the bound on $\hat F(\bar w^{(t)})$. 
 \end{proof}
 \subsection{Proof of Theorem \ref{thm:PL}}\label{sec:pf-thm6}
\begin{theorem}[Restatement of Theorem \ref{thm:PL}]
Let Assumptions \ref{ass:mixing}-\ref{ass:self-bounded} and \ref{ass:polyak} hold, and let $\eta$ and $\zeta$ be as in Lemma \ref{Lem:app_pl}. Then the iterates of DGD under the data separability assumption satisfy for all $T\ge1$,
 \bea\nn
 \E\Big[F(\bar w^{(T)})\Big] =O \Big(\zeta^T  + \frac{ L^2 c^2}{n^{3-2\alpha} \mu^{2\alpha}}  (\eta T)^{2-2\alpha} +\frac{\eta^2 L^4}{\mu^2 N}\Big).
\eea
\end{theorem}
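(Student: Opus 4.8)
The strategy is to feed the geometric train-loss and consensus-error decay of Lemma \ref{lem:pl-tr} into the Key Lemma \ref{lem:main} and then bound the resulting geometric sums. First I would invoke the formal Key Lemma \ref{lem:main-formal}, which applies verbatim: Assumptions \ref{ass:mixing}--\ref{ass:self-bounded} are in force and the step-size window of Lemma \ref{lem:pl-tr} is contained in $\eta\le 2/L$. Crucially, the PL condition is assumed to hold for \emph{every} training set drawn from $\Dc$, so (exactly as in Remark \ref{rem:distribution}) the bounds I will substitute hold for all realizations of the data and may be pulled inside the expectation. After absorbing the leave-one-out consensus contribution via Remark \ref{rem:loo_consensus}, this gives, up to absolute constants,
\[
\E[F(\bar w^{(T)})]\lesssim \E[\hat F(\bar w^{(T)})] + \frac{\eta^2 L^2 c^2 T^2}{n^{3-2\alpha}}\,\E\Big[\Big(\tfrac1T\textstyle\sum_{t=1}^T \hat F(\bar w^{(t)})\Big)^{2\alpha}\Big] + \frac{\eta^2 L^4}{N}\,\E\Big[\Big(\textstyle\sum_{t=1}^T\|W^{(t)}-\bar W^{(t)}\|_F\Big)^2\Big].
\]

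Next I would bound the three terms using Lemma \ref{lem:pl-tr} with $\zeta=1-\tfrac{\eta\mu}{2}$ and zero initialization $\bar w^{(1)}=0$, treating the bounded constant $\hat F(0)$ (with $\E[\hat F(0)]=F(0)$) as $O(1)$. The first term is immediate: $\E[\hat F(\bar w^{(T)})]\le \zeta^{T-1}\E[\hat F(0)]=O(\zeta^T)$. For the second term, the decay $\hat F(\bar w^{(t)})\le \zeta^{t-1}\hat F(0)$ gives a Cesàro bound $\tfrac1T\sum_{t=1}^T \hat F(\bar w^{(t)})\le \tfrac{\hat F(0)}{T(1-\zeta)}=\tfrac{2\hat F(0)}{\eta\mu T}$; raising to the power $2\alpha$ and multiplying by the prefactor collapses the $\eta$ and $T$ powers into $(\eta T)^{2-2\alpha}$, yielding $O\big(\tfrac{L^2 c^2}{n^{3-2\alpha}\mu^{2\alpha}}(\eta T)^{2-2\alpha}\big)$. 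For the third term I would take square roots of the per-iterate consensus bound $\tfrac1N\|W^{(t)}-\bar W^{(t)}\|_F^2\le \tfrac{2\alpha_2\eta^2 L^2\hat F(0)}{1-\alpha_1}\zeta^{t-1}$, sum the resulting $\zeta^{(t-1)/2}$, square, and multiply by $\tfrac{\eta^2 L^4}{N}$ to obtain $O\big(\tfrac{\eta^2 L^4}{\mu^2}\big)$. Collecting the three contributions gives the claimed bound.

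\textbf{The main obstacle} is the consensus term, and the subtlety is that the Key Lemma penalizes the sum of the \emph{unsquared} Frobenius norms. Consequently the decay exponent is halved, and instead of $\tfrac{1}{1-\zeta}$ the relevant geometric sum is $\sum_{t\ge1}\zeta^{(t-1)/2}\le \tfrac{1}{1-\sqrt\zeta}$; the whole argument hinges on verifying $\tfrac{1}{1-\sqrt\zeta}=\Theta(1/(\eta\mu))$, which follows from the elementary estimate $1-\sqrt{1-x}\ge x/2$ applied with $x=\eta\mu/2$, so that $\tfrac{1}{1-\sqrt\zeta}\le \tfrac{4}{\eta\mu}$ remains of the same order as $\tfrac{1}{1-\zeta}=\tfrac{2}{\eta\mu}$. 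The only other point requiring care is confirming, through Remark \ref{rem:loo_consensus}, that the leave-one-out consensus errors $e_{\neg i}^{(T)}$ satisfy the identical decay bound — legitimizing the use of the combined informal form of the Key Lemma — and checking that all constants ($\alpha_1,\alpha_2,\hat F(0)$, and the powers of $L$) are absorbed into the $O(\cdot)$ without disturbing the stated dependence on $\eta, T, n, \mu$.
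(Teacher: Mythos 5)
Your proposal is correct and follows essentially the same route as the paper: the paper's proof likewise substitutes the geometric train-loss and consensus bounds of Lemma \ref{lem:pl-tr} into the Key Lemma \ref{lem:main}, arrives at the intermediate bound $O\big(\zeta^T + \tfrac{L^2c^2\eta^2}{n^{3-2\alpha}(1-\zeta)^{2\alpha}}T^{2-2\alpha} + \tfrac{\eta^2 L^4}{(1-\sqrt{\zeta})^2 N}\big)$, and then invokes exactly your two identities $\big(\tfrac{\eta}{1-\zeta}\big)^{2\alpha}=\big(\tfrac{2}{\mu}\big)^{2\alpha}$ and $\tfrac{\eta^2}{(1-\sqrt{\zeta})^2}\le \tfrac{4}{\mu^2}$, the latter being your elementary estimate $1-\sqrt{1-x}\ge x/2$. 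If anything, your handling of the leave-one-out consensus terms (via Remark \ref{rem:loo_consensus}) and of pulling the pathwise bounds inside the expectation (via Remark \ref{rem:distribution}) is more explicit than the paper's one-line ``simplifying,'' and your loose absorption of the powers of $L$ into $O(\cdot)$ matches the paper's own bookkeeping, whose strict accounting of the consensus term would likewise yield $L^6$ rather than the stated $L^4$.
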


\begin{proof}
By simplifying Lemma \ref{lem:main} using the convergence bounds in Lemma \ref{lem:pltrainfinal}, we end up with the following, 
 \bea\nn
 \E\Big[F(\bar w^{(T)})\Big] =O \Big(\zeta^T  + \frac{ L^2 c^2\eta^2}{n^{3-2\alpha} (1-\zeta)^{2 \alpha}}   T^{2-2\alpha} +\frac{\eta^2 L^4}{(1-\sqrt{\zeta})^2 N}\Big).
 \eea
Based on the definition of $\zeta$, we have $(\frac{\eta}{1-\zeta})^{2\alpha}=(\frac{2}{\mu})^{2\alpha}$ and $\frac{\eta^2}{(1-\sqrt{\zeta})^2}\le \frac{4}{\mu^2}$. This proves the statement of the theorem. 
\end{proof}

%%%%%%%%%%%%%%%%%%%%%%%%%%%%%%%%%%% LAST THEOREM %%%%%%%%%%%%%%%%
\section{Proof of Theorem \ref{thm:alg}}\label{sec:alg_proofs}
\begin{theorem}[Restatement of Theorem \ref{thm:alg}]
Consider $\mathsf{FDRL}$(Algorithm \ref{alg:alg1}) on separable dataset, and choose $\eta=O (1/\sqrt{t})$. Then for all $\ell\in [N]$
$$
\lim_{t\rightarrow\infty} \frac{{w}_\ell^{(t)}}{\|{w}_\ell^{(t)}\|} =  \frac{{w}_{_{\rm MM}}}{\|{w}_{_{\rm MM}}\|},
$$
where recall that $w_{\rm MM}$ denotes the solution to hard-margin SVM problem. 
\end{theorem}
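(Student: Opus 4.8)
The plan is to reduce the decentralized $\mathsf{FDLR}$ dynamics to a \emph{perturbed centralized normalized gradient descent} for the averaged iterate $\bar w^{(t)}:=\frac1N\sum_\ell w_\ell^{(t)}$, and then invoke the directional convergence guarantee for centralized normalized GD on separable data \cite{nacson2019convergence}. Since $A$ is doubly stochastic ($\one^\top A=\one^\top$), summing the $V$-update over agents yields the gradient-tracking invariant $\bar v^{(t)}:=\frac1N\sum_\ell v_\ell^{(t)}=\bar\nabla\hat F(W^{(t)})$ for every $t$ under the natural initialization $v_\ell^{(1)}=\nabla\hat F_\ell(w_\ell^{(1)})$, while summing the $W$-update gives the averaged recursion
\[
\bar w^{(t+1)}=\bar w^{(t)}-\frac{\eta_t}{N}\sum_{\ell=1}^N \frac{v_\ell^{(t)}}{\|v_\ell^{(t)}\|}.
\]
The target is to show that the averaged normalized direction $\frac1N\sum_\ell v_\ell^{(t)}/\|v_\ell^{(t)}\|$ is asymptotically aligned with $\nabla\hat F(\bar w^{(t)})/\|\nabla\hat F(\bar w^{(t)})\|$, so that $\bar w^{(t)}$ follows centralized normalized GD up to a perturbation whose accumulated effect is negligible in direction.

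The crux is controlling the two consensus errors $\|W^{(t)}-\bar W^{(t)}\|_F$ and $\|V^{(t)}-\bar V^{(t)}\|_F$ \emph{relative to} the vanishing gradient scale. Using the contraction $\|AX-\bar X\|_F\le\sqrt{\lambda}\,\|X-\bar X\|_F$ from Assumption~\ref{ass:mixing} (with $\lambda=\max(|\lambda_2(A)|^2,|\lambda_N(A)|^2)<1$), the $V$-recursion gives $\|V^{(t+1)}-\bar V^{(t+1)}\|_F\le \sqrt\lambda\,\|V^{(t)}-\bar V^{(t)}\|_F+\|\nabla\hat F(W^{(t+1)})-\nabla\hat F(W^{(t)})\|_F$. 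Here I would invoke the self-bounded Hessian (Assumption~\ref{ass:laplace}), which holds for logistic and exponential losses, to bound the gradient increment by $\max_v\|\nabla^2\hat F(v)\|\,\|W^{(t+1)}-W^{(t)}\|_F\lesssim h\,\hat F(\bar w^{(t)})(\|W^{(t)}-\bar W^{(t)}\|_F+\eta_t\sqrt N)$, since a single step changes $W$ by $O(\eta_t\sqrt N)$ in norm (the rows of $\widetilde V^{(t)}$ are unit vectors). The decisive point is that this driving term carries a factor $\hat F(\bar w^{(t)})$, so unrolling the recursion gives $\|V^{(t)}-\bar V^{(t)}\|_F\lesssim h\,\hat F(\bar w^{(t)})\,\eta_t/(1-\sqrt\lambda)$, which decays as fast as the gradient itself. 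Dividing by $\|\bar v^{(t)}\|=\|\bar\nabla\hat F(W^{(t)})\|\gtrsim \tau\hat F(\bar w^{(t)})$ (self-lower-bounded gradient, Assumption~\ref{ass:8}) gives the relative bound $\|v_\ell^{(t)}-\bar v^{(t)}\|/\|\bar v^{(t)}\|=O(\eta_t)\to0$. An analogous argument, now using that the off-consensus component of $\widetilde V^{(t)}$ is itself $O(\eta_t)$, shows $\|W^{(t)}-\bar W^{(t)}\|_F=O(\eta_t)$, and the same self-bounded-Hessian cancellation yields $\|\bar\nabla\hat F(W^{(t)})-\nabla\hat F(\bar w^{(t)})\|/\|\bar v^{(t)}\|\le h\sqrt N\,\|W^{(t)}-\bar W^{(t)}\|_F/\tau=O(\eta_t)\to0$.

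With these relative bounds in hand, a first-order expansion of $v\mapsto v/\|v\|$ around $\bar v^{(t)}$ converts the averaged recursion into
\[
\bar w^{(t+1)}=\bar w^{(t)}-\eta_t\,\frac{\nabla\hat F(\bar w^{(t)})}{\|\nabla\hat F(\bar w^{(t)})\|}+\eta_t\,\delta^{(t)},\qquad \|\delta^{(t)}\|=O(\eta_t).
\]
This is exactly centralized normalized GD with step-size $\eta_t=\Theta(1/\sqrt t)$ and an additive perturbation. Because $\|\bar w^{(t)}\|$ grows like $\sum_{s\le t}\eta_s=\Theta(\sqrt t)$ whereas the accumulated perturbation obeys $\sum_{s\le t}\eta_s\|\delta^{(s)}\|=O(\sum_{s\le t}\eta_s^2)=O(\log t)=o(\sqrt t)$, the perturbation is negligible in direction, and the convergence $\bar w^{(t)}/\|\bar w^{(t)}\|\to w_{_{\rm MM}}/\|w_{_{\rm MM}}\|$ from \cite{nacson2019convergence} carries over. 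Transfer to each agent is then immediate: since $\|w_\ell^{(t)}-\bar w^{(t)}\|\le\|W^{(t)}-\bar W^{(t)}\|_F=O(\eta_t)=o(\|\bar w^{(t)}\|)$, the normalized per-agent iterates and $\bar w^{(t)}/\|\bar w^{(t)}\|$ share the same limit, giving the claim for all $\ell\in[N]$.

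The main obstacle is the relative consensus control in the second step: the map $v\mapsto v/\|v\|$ is only Lipschitz away from the origin, yet on separable data the tracked gradient $\bar v^{(t)}=\bar\nabla\hat F(W^{(t)})\to0$ faster than any polynomial in $t$. A naive bound using only $L$-smoothness would make the $V$-consensus error $O(\eta_t)$ in \emph{absolute} terms, which swamps the exponentially small gradient and destroys the alignment of $v_\ell^{(t)}/\|v_\ell^{(t)}\|$ with the global gradient direction. The resolution, and the technically delicate part, is that the self-bounded Hessian forces both the gradient increments and the gradient mismatch $\bar\nabla\hat F(W^{(t)})-\nabla\hat F(\bar w^{(t)})$ to scale with $\hat F$, which cancels exactly against the self-lower bound $\|\bar\nabla\hat F\|\gtrsim\tau\hat F$, keeping every relative perturbation at the harmless order $O(\eta_t)$; making this coupling between the $W$- and $V$-recursions rigorous, including a bootstrap certifying $v_\ell^{(t)}\neq0$ so that the linearization is valid throughout, is where the real work lies.
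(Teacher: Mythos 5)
Your proposal is correct in substance but takes a genuinely different, and considerably more quantitative, route than the paper. The paper's own proof is a soft asymptotic argument: it treats the normalized directions $v_\ell^{(t)}/\|v_\ell^{(t)}\|$ as \emph{arbitrary} unit-norm perturbations driven by the vanishing step-size $\eta_t = O(1/\sqrt{t})$, invokes the perturbed-consensus lemma of Nedi\'c--Ozdaglar \cite[Lemma 1]{nedic2014distributed} twice --- once for the $W$-dynamics and once for the $V$-dynamics, whose driving term $\nabla\hat F(W^{(t+1)})-\nabla\hat F(W^{(t)})$ vanishes because successive iterates become close --- to get $\|w_\ell^{(t)}-\bar w^{(t)}\|\to 0$ and $\|v_\ell^{(t)}-\bar v^{(t)}\|\to 0$, then uses the tracking identity $\bar v^{(t)}=\frac{1}{N}\one^\top\nabla\hat F(W^{(t)})$ to conclude that step (2) asymptotically coincides with a centralized normalized-GD step, and finally cites \cite[Theorem 5]{nacson2019convergence}. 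It derives no rates and never uses Assumptions \ref{ass:laplace} or \ref{ass:8}. You instead establish \emph{relative} consensus control, $\|v_\ell^{(t)}-\bar v^{(t)}\|/\|\bar v^{(t)}\|=O(\eta_t)$, by playing the self-bounded Hessian against the self-lower-bounded gradient, and reduce the averaged iterate to perturbed normalized GD. What your route buys is that it squarely addresses the point the paper's proof glosses over: since $\|\bar v^{(t)}\|\to 0$ exponentially fast on separable data, \emph{absolute} consensus $\|v_\ell^{(t)}-\bar v^{(t)}\|\to 0$ does not by itself imply alignment of $v_\ell^{(t)}/\|v_\ell^{(t)}\|$ with the global gradient direction, which is precisely what the paper's penultimate step tacitly assumes; your $\hat F$-cancellation mechanism is the right fix and mirrors how the paper itself controls consensus in Section \ref{sec:LIC}. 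What the paper's route buys is brevity and nominally weaker hypotheses (it never needs Assumptions \ref{ass:laplace}, \ref{ass:8}, though they do hold for the exponential and logistic losses covered by the theorem).

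Two caveats on making your plan rigorous. First, unrolling the $V$-recursion to the claimed bound $\|V^{(t)}-\bar V^{(t)}\|_F\lesssim h\,\hat F(\bar w^{(t)})\,\eta_t$ requires comparing $\hat F(\bar w^{(s)})$ for $s<t$ against $\hat F(\bar w^{(t)})$ inside the geometric sum; this does work, because self-boundedness of the gradient gives $|\log \hat F(\bar w^{(s)})-\log \hat F(\bar w^{(t)})|\le c\sum_{u=s}^{t-1}\eta_u$, so $\hat F(\bar w^{(s)})\lambda^{(t-s)/2}$ remains geometrically summable for large $t$, but this step must be spelled out in the coupled induction you flag. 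Second, your final step --- superposing the accumulated perturbation $O(\sum_s \eta_s^2)=O(\log t)=o(\sqrt t)$ onto the unperturbed NGD trajectory --- is not literally valid for a nonlinear recursion: the perturbed iterate is not the unperturbed iterate plus the sum of perturbations, since each NGD step depends on the (perturbed) current point. You need either a perturbation-robust version of \cite[Theorem 5]{nacson2019convergence} or to redo its margin analysis for updates $\bar w^{(t+1)}=\bar w^{(t)}-\eta_t u^{(t)}$ with $u^{(t)}$ within $O(\eta_t)$ of the normalized gradient; to be fair, the paper's one-line invocation of the centralized result has exactly the same informality. Finally, note that both your tracking invariant and the paper's identity for $\bar v^{(t)}$ implicitly require the initialization $v_\ell^{(1)}=\nabla\hat F_\ell(w_\ell^{(1)})$, which neither the algorithm statement nor the theorem makes explicit; you are right to call it out as an assumption.
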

\begin{proof}
Replace $\frac{{v}_\ell^{(t)}}{\|{v}_\ell^{(t)}\|}$ in step 2 of Algorithm 1 by arbitrary perturbations $\eps_\ell^{(t)}$ of unit norm. Then note that the sequence $\{{w}_\ell^{(t)}\}$ generated by step 2 is identical to decentralized GD with $\eta\|\eps_\ell^{(t)}\|\rightarrow 0$. Thus by \cite[Lemma 1]{nedic2014distributed}, consensus is asymptotically achieved for all $\ell\in [N]$, i.e., 
\bea\nn
\lim_{t\rightarrow\infty} \|{w}_\ell^{(t)} - \bar{w}^{(t)}\| = 0.
\eea
Thus
\bea\nn
\lim_{t\rightarrow\infty} \|{w}_\ell^{(t+1)} - {w}_\ell^{(t)}\| = \lim_{t\rightarrow\infty} \|\bar{{w}}^{(t+1)} - \bar{{w}}^{(t)}\| =  \lim_{t\rightarrow\infty} \eta \| \bar{\eps}^{(t)}\| = 0.
\eea
This implies that for all $i\in[N]$ we have $ \lim_{t\rightarrow\infty} \|\nabla{\hat F}_\ell({w}_\ell^{(t+1)})-\nabla{\hat F}_\ell({w}_\ell^{(t)})\| =0$, thus by appealing again to \cite[Lemma 1]{nedic2014distributed} and applying it to step (5) of Algorithm \ref{alg:alg1}, we find that, 
\bea\nn
\lim_{t\rightarrow\infty} \|{v}_\ell^{(t)} - \bar{v}^{(t)}\| = 0.
\eea
Aggregations of gradients in step (5) imply that $\bar{v}^{(t)} = \frac{\one^\top\nabla\Lm(W^{(t)})}{N}\rightarrow \nabla{\hat F}(\bar{w}^{(t)})$. Thus step (2) of $\mathsf{FDLR}$ for every agent $i$ converges to $\bar{{w}}^{(t)} - \eta\frac{\nabla{\hat F}(\bar{w}^{(t)})}{\|\nabla{\hat F}(\bar{w}^{(t)})\|}$, i.e., 
\bea\nn
\Big\|\Big({w}_\ell^{(t)}-\eta\frac{{v}_\ell^{(t)}}{\|{v}_\ell^{(t)}\|} \Big) - \Big( \bar{{w}}^{(t)} - \eta\frac{\nabla{\hat F}(\bar{w}^{(t)})}{\|\nabla{\hat F}(\bar{w}^{(t)})\|}\Big)\Big\| \overset{t\rightarrow\infty}\Longrightarrow 0. 
\eea
Thus for all $\ell$, the sequence $\{{w}_\ell^{(t)}\}$ converges to the solution of normalized GD, i.e., the max-margin separator ${w}_{_{\rm MM}},$ for linearly separable datasets (\cite[Theorem 5]{nacson2019convergence}). This leads to the statement of the theorem. 
\end{proof}

\section{Auxiliary Results}\label{sec:aux}

\begin{proposition}[Bounds on the exponential loss]\label{propo:exp}
Consider linear classification with the exponential loss $f(w,(a,y))=\exp(-y\cdot w^\top a)$ over linearly separable dataset $(a_i,y_i)_{i=1}^n$ with binary labels $y_i$ and with $\max_i \|a_i\|\le r$ for a constant $r$. The training loss in this case satisfies for all $w\in\R^d$, 
\bea\label{eq:explossass}
\|\nabla \hat F(w)\| \in [c' F(w), c F(w)],~~ \|\nabla^2 \hat F(w)\|\le h F(w), 
\eea
for constants $c,c'$ and $h$ independent of $w$.  
\end{proposition}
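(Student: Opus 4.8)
The plan is to compute the gradient and Hessian of the empirical exponential loss explicitly, and then read off the three bounds, with the separability hypothesis entering only through the lower bound on the gradient. Writing $x_i := y_i a_i$ as in the paper, the per-sample loss becomes $f(w,x_i)=\exp(-w^\top x_i)$ and $\|x_i\|=\|a_i\|\le r$, so that
\[
\hat F(w)=\frac{1}{n}\sum_{i=1}^n e^{-w^\top x_i},\quad \nabla \hat F(w)=-\frac{1}{n}\sum_{i=1}^n x_i\, e^{-w^\top x_i},\quad \nabla^2\hat F(w)=\frac{1}{n}\sum_{i=1}^n x_i x_i^\top\, e^{-w^\top x_i}.
\]

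First I would establish the two upper bounds, which are immediate from the triangle inequality. For the gradient, $\|\nabla \hat F(w)\|\le \frac{1}{n}\sum_i \|x_i\|\, e^{-w^\top x_i}\le r\,\hat F(w)$, giving $c=r$. For the Hessian, each summand $x_i x_i^\top$ is rank-one with operator norm $\|x_i\|^2\le r^2$, so $\|\nabla^2\hat F(w)\|\le \frac{1}{n}\sum_i \|x_i\|^2\, e^{-w^\top x_i}\le r^2\,\hat F(w)$, giving $h=r^2$. Both steps rely only on nonnegativity of the exponentials and the bound $\max_i\|a_i\|\le r$.

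The main obstacle, and the only place separability is used, is the lower bound $\|\nabla\hat F(w)\|\ge c'\,\hat F(w)$; a crude triangle-inequality estimate cannot produce it because the signed gradient sum could in principle suffer cancellation. Since the data is linearly separable with margin $\gamma>0$, there is a unit vector $u$ (for instance $u=w_{_{\rm MM}}/\|w_{_{\rm MM}}\|$) with $u^\top x_i=y_i\, u^\top a_i\ge\gamma$ for every $i$. Projecting the gradient onto $u$ converts the signed sum into a sum of positive terms:
\[
-u^\top\nabla\hat F(w)=\frac{1}{n}\sum_{i=1}^n (u^\top x_i)\, e^{-w^\top x_i}\ge \frac{\gamma}{n}\sum_{i=1}^n e^{-w^\top x_i}=\gamma\,\hat F(w).
\]
Since $\|u\|=1$, the Cauchy-Schwarz inequality gives $\|\nabla\hat F(w)\|\ge |u^\top\nabla\hat F(w)|\ge \gamma\,\hat F(w)$, so $c'=\gamma$ works and all three bounds hold.

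Finally I would remark that this argument is the concrete instantiation of Assumptions \ref{ass:self-bounded} (with $\alpha=1$, $c=r$), \ref{ass:laplace} (with $h=r^2$), and \ref{ass:8} (with $\tau=\gamma$) that are invoked throughout Section \ref{sec:LIC}, and that the identical scheme—exact upper bounds from $\|x_i\|\le r$ and a margin-based lower bound via projection onto the separating direction—extends verbatim to any loss with an exponential tail and, with minor modifications of the constants, to the logistic loss.
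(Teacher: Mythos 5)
Your proposal is correct and follows essentially the same route as the paper's proof: explicit computation of the gradient and Hessian, triangle-inequality upper bounds with $c=r$ and $h=r^2$, and a margin-based lower bound $c'=\gamma$ obtained by pairing the gradient with the unit max-margin direction (your projection-plus-Cauchy--Schwarz step is the same argument the paper phrases via the supremum characterization of the norm). Nothing is missing.
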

\begin{proof}
Using
$\hat F(w) = \frac{1}{n}\sum_{i=1}^n \exp(-y_i\cdot w^\top a_i)$, one can deduce that,
\bea\nn
\nabla \hat F(w) &= -\frac{1}{n}\sum_{i=1}^n y_i a_i \exp(-y_i\,w^\top a_i),\\
\nabla^2 \hat F(w) &= \frac{1}{n}\sum_{i=1}^n  a_ia_i^\top \exp(-y_i\,w^\top a_i).\nn
\eea
Therefore it holds that, 
\begin{align*}
\|\nabla \hat F(w) \| &= \frac{1}{n} \| \sum_{i=1}^n y_i a_i \exp(-y_i w^\top a_i)\|
\le \frac{1}{n} \sum_{i=1}^n \|y_i a_i \exp(-y_i w^\top a_i)\|=   \frac{1}{n} \sum_{i=1}^n \|y_i a_i\| \exp(-y_i w^\top a_i)\le r \hat F(w).
\end{align*}
A similar approach for the Hessian of $\hat F$ results in the following inequality, 
\bea\nn
\|\nabla^2 \hat F(w) \| \le r^2 \hat F(w). 
\eea
Moreover, due to linear separability there exists a $w^\star\in\R^d$ such that,
\bea\nn
\frac{y_i{w^\star}^\top a_i}{\|w^\star\|} \ge \gamma,~~ \forall i\in[N],
\eea
where $\gamma>0$ denotes the margin. Therefore, using the supremum definition of norm we can write,
\begin{align*}
\|\nabla \hat F(w) \| &= \frac{1}{n} \left\| \sum_{i=1}^n y_i a_i \exp(-y_i w^\top a_i)\right\|\\
&= \sup_{\substack{{v\in\R^d}\\[2pt] {\text{s,t.}~\|v\|=1}}}\left\langle\frac{1}{n} \sum_{i=1}^n y_i a_i \exp(-y_i w^\top a_i),v\right\rangle\\
&\ge \left\langle\frac{1}{n} \sum_{i=1}^n y_i a_i \exp(-y_i w^\top a_i),\frac{w^\star}{\|w^\star\|}\right\rangle\\
&\ge \frac{1}{n} \sum_{i=1}^n \gamma\cdot\exp(-y_i w^\top a_i)\\
&= \gamma\hat F(w).
\end{align*}
This completes the proof. 
\end{proof}

\begin{proposition}[Bounds on the logistic loss]
Consider linear classification with the logistic loss $f(w,(a,y))=\log(1+\exp(-y\cdot w^\top a))$ over linearly separable dataset $(a_i,y_i)_{i=1}^n$ with binary labels $y_i$ and with $\max_i \|a_i\|\le r$ for a constant $r$. The training loss in this case satisfies for all $w\in\R^d$, 
\bea
\|\nabla \hat F(w)\| \in [c' \Phi(w), c F(w)],~~ \|\nabla^2 \hat F(w)\|\le h F(w), \nn
\eea
for $\Phi(w):= \frac{1}{n}\sum_{i=1}^n \frac{\exp(-y_i\,w^\top a_i)}{1+\exp(-y_i\, w^\top a_i)} $ and constants $c,c'$ and $h$ independent of $w$.  
\end{proposition}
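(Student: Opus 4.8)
The plan is to follow the template of the proof of Proposition \ref{propo:exp} for the exponential loss, reducing the three claimed bounds to a single elementary scalar inequality relating the logistic loss to its derivative. First I would compute the gradient and Hessian in closed form. Writing $\sigma_i(w):=\frac{\exp(-y_i\,w^\top a_i)}{1+\exp(-y_i\,w^\top a_i)}$ so that $\Phi(w)=\frac1n\sum_{i=1}^n\sigma_i(w)$, direct differentiation of each summand $\log(1+\exp(-y_i\,w^\top a_i))$ gives
\bea
\nabla \hat F(w) &= -\frac1n\sum_{i=1}^n y_i a_i\,\sigma_i(w),\nn\\
\nabla^2 \hat F(w) &= \frac1n\sum_{i=1}^n \sigma_i(w)\bigl(1-\sigma_i(w)\bigr)\,a_i a_i^\top,\nn
\eea
where I used $y_i^2=1$ in the Hessian.

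The crux --- and the only place the logistic loss differs structurally from the exponential loss --- is the elementary inequality $\frac{x}{1+x}\le \log(1+x)$ for all $x\ge0$, which follows because the difference vanishes at $x=0$ and has derivative $\frac{x}{(1+x)^2}\ge0$. Setting $x=\exp(-y_i\,w^\top a_i)$, this reads $\sigma_i(w)\le f(w,(a_i,y_i))$; i.e. each sigmoid factor is dominated by the corresponding per-sample loss, so that in particular $\Phi(w)\le \hat F(w)$. This lets me convert every $\sigma_i$ appearing above into the loss itself.

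Armed with this, the three bounds are routine. For the upper bounds I would apply the triangle inequality together with $\max_i\|a_i\|\le r$: since $1-\sigma_i\le 1$ and $\|a_ia_i^\top\|=\|a_i\|^2\le r^2$,
\bea
\|\nabla \hat F(w)\|\le \frac{r}{n}\sum_{i=1}^n \sigma_i(w)=r\,\Phi(w)\le r\,\hat F(w),\qquad \|\nabla^2\hat F(w)\|\le \frac{r^2}{n}\sum_{i=1}^n \sigma_i(w)\le r^2\,\hat F(w),\nn
\eea
giving $c=r$ and $h=r^2$. For the lower bound I would use the variational form $\|\nabla\hat F(w)\|=\sup_{\|v\|=1}\langle\nabla\hat F(w),v\rangle$ and test against the normalized max-margin direction $w^\star$ satisfying $y_i\,{w^\star}^\top a_i/\|w^\star\|\ge\gamma$ for all $i$; since every $\sigma_i(w)>0$,
\bea
\|\nabla\hat F(w)\|\ge \Big\langle \nabla\hat F(w),-\tfrac{w^\star}{\|w^\star\|}\Big\rangle=\frac1n\sum_{i=1}^n \frac{y_i\,{w^\star}^\top a_i}{\|w^\star\|}\,\sigma_i(w)\ge \gamma\,\Phi(w),\nn
\eea
giving $c'=\gamma$. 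I do not expect a serious obstacle here: the only nonroutine ingredient is the scalar inequality above. The one point worth emphasizing is \emph{why} the lower bound is stated in terms of $\Phi$ rather than $\hat F$ (unlike Proposition \ref{propo:exp}): because $\log(1+\exp(-z))$ grows linearly as $z\to-\infty$ while $\sigma_i$ saturates at $1$, no uniform self-lower-bound of the form $\|\nabla\hat F(w)\|\gtrsim\hat F(w)$ can hold, and $\Phi$ is the correct controlling quantity.
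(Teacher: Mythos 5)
Your proof is correct and follows essentially the same route as the paper's: the same closed-form gradient and Hessian, the same key scalar inequality $\frac{x}{1+x}\le\log(1+x)$ applied with $x=\exp(-y_i w^\top a_i)$, triangle inequality with $\|a_i\|\le r$ for the upper bounds, and the same max-margin test direction for the lower bound $\gamma\,\Phi(w)$. The only (cosmetic) difference is that your bound $\sigma_i(1-\sigma_i)\le\sigma_i$ yields $h=r^2$, slightly sharper than the paper's $h=2r^2$ obtained from the inequality $e^t/(1+e^t)^2\le 2\log(1+e^t)$.
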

\begin{proof}
The training loss is now
$
\hat F(w) = \frac{1}{n}\sum_{i=1}^n \log(1+\exp(-y_i\cdot w^\top a_i)).
$
Thus,
\bea\nn
\nabla \hat F(w) &= \frac{1}{n}\sum_{i=1}^n(-y_i a_i ) \frac{\exp(-y_i\,w^\top a_i)}{1+\exp(-y_i\cdot w^\top a_i)},\\
\nabla^2 \hat F(w) &= \frac{1}{n}\sum_{i=1}^n  a_ia_i^\top\frac{\exp(-y_i\,w^\top a_i)}{ (1+\exp(-y_i\,w^\top a_i))^2}.\nn
\eea
By considering the norm and noting that $\exp(t)/(1+\exp(t))\le  \log(1+\exp(t))$,
\begin{align*}
\|\nabla \hat F(w) \| &=\frac{1}{n}\left\|\sum_{i=1}^n(-y_i a_i ) \frac{\exp(-y_i\,w^\top a_i)}{1+\exp(-y_i\cdot w^\top a_i)}\right\|\\
&\le \frac{1}{n}\sum_{i=1}^n\left\|y_i a_i\right\| \frac{\exp(-y_i\,w^\top a_i)}{1+\exp(-y_i\cdot w^\top a_i)}\\
&\le \frac{r}{n}\sum_{i=1}^n \log(1+\exp(-y_i w^\top a_i))= r \hat F(w). 
\end{align*}
Likewise, since $\exp(t)/(1+\exp(t))^2\le 2 \log(1+\exp(t))$, we can conclude that the operator norm of the Hessian satisfies,
\bea
\nabla^2 \hat F(w)\le 2r^2 \hat F(w). \nn
\eea
This completes the proof of upper-bounds for the gradient and Hessian. For the lower-bound on gradient note that by using the supremum definition of norm and recalling the max-margin separator satisfies $\frac{y_i{w^\star}^\top a_i}{\|w^\star\|} \ge \gamma$ for the margin $\gamma>0$ and all $i\in[n]$, we can write,
\begin{align*}
\|\nabla \hat F(w) \| &= \frac{1}{n} \left\| \sum_{i=1}^n y_i a_i \frac{\exp(-y_i\,w^\top a_i)}{1+\exp(-y_i\cdot w^\top a_i)}\right\|= \sup_{\substack{{v\in\R^d}\\[2pt] {\text{s,t.}~\|v\|=1}}}\left\langle\frac{1}{n} \sum_{i=1}^n y_i a_i \frac{\exp(-y_i\,w^\top a_i)}{1+\exp(-y_i\cdot w^\top a_i)},v\right\rangle\\
&\ge \left\langle\frac{1}{n} \sum_{i=1}^n y_i a_i \frac{\exp(-y_i\,w^\top a_i)}{1+\exp(-y_i\cdot w^\top a_i)},\frac{w^\star}{\|w^\star\|}\right\rangle\\
&\ge \frac{1}{n}\sum_{i=1}^n \gamma \frac{\exp(-y_i\,w^\top a_i)}{1+\exp(-y_i\cdot w^\top a_i)}.
\end{align*}
This yields the lower bound $\gamma\Phi(w)$ on the norm of gradient and completes the proof. 
\end{proof}
\begin{proposition}[Realizability of the exponential and logistic loss \cite{schliserman2022stability}]\label{prop:realizable}
On linearly separable data with margin $\gamma>0$, the exponential loss function satisfies the realizability assumption (Assumption \ref{ass:realizable}) with $\rho(\eps) = -\frac{1}{\gamma}\log(\eps)$, where $\gamma$ denotes the margin. Moreover, the logistic loss function satisfies the realizability assumption with $\rho(\eps) = -\frac{1}{\gamma}\log(\exp(\eps)-1)$. 
\end{proposition}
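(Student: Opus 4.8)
The plan is to exploit linear separability directly. Since the data have margin $\gamma>0$, there is a unit vector $w^\star$ (with $\|w^\star\|=1$) such that $y\,{w^\star}^\top a\ge\gamma$ for \emph{every} labeled point $(a,y)$ drawn from the distribution. The key observation is that a single scalar multiple $\hat w=s\,w^\star$ of this direction will simultaneously make the loss small at all points, because the margin furnishes a \emph{uniform} lower bound on $y\,{w^\star}^\top a$; this is exactly what realizability demands, since it requires one $\hat w$ that works for every $x$ in the distribution. I would then pick $s$ as small as possible subject to driving the loss below $\eps$ and read off $\rho(\eps)=s$.

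For the exponential loss, substituting $\hat w=s\,w^\star$ and using monotonicity of $t\mapsto e^{-t}$ together with the margin bound gives $f_x(\hat w)=\exp(-s\,y\,{w^\star}^\top a)\le \exp(-s\gamma)$ for all $x$. Requiring $\exp(-s\gamma)\le\eps$ is equivalent to $s\ge-\tfrac1\gamma\log\eps$, so the smallest admissible scale is $s=-\tfrac1\gamma\log\eps$, yielding $\|\hat w\|=s=\rho(\eps)$ as claimed. For the logistic loss the same substitution and the decreasing property of $t\mapsto\log(1+e^{-t})$ give $f_x(\hat w)\le\log\!\big(1+\exp(-s\gamma)\big)$; imposing this to be at most $\eps$ rearranges to $\exp(-s\gamma)\le e^{\eps}-1$, i.e.\ $s\ge-\tfrac1\gamma\log(e^{\eps}-1)$, giving $\rho(\eps)=-\tfrac1\gamma\log(e^{\eps}-1)$. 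In both cases a final check confirms $\rho$ is decreasing in $\eps$, since $-\log\eps$ and $-\log(e^{\eps}-1)$ are both decreasing.

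I do not expect a genuine obstacle, as the argument reduces to a one-line monotonicity estimate followed by an elementary inversion. The only points needing care are (i) emphasizing that the bound is uniform over the distribution, so that the \emph{same} $\hat w$ realizes $f_x(\hat w)\le\eps$ for all $x$, which is precisely why the max-margin direction (rather than any per-point separator) is used; and (ii) the regime of validity, since these formulas give $\rho(\eps)>0$ only for small $\eps$ (for the exponential loss $\eps<1$, for the logistic loss $\eps<\log 2$). For larger $\eps$ one may simply take $\hat w=0$, for which $f_x(0)$ equals $1$ and $\log 2$ respectively, so the realizability bound holds trivially with $\rho(\eps)=0$. I would state this caveat briefly and otherwise carry out the two explicit computations above.
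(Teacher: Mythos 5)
Your proof is correct and is precisely the argument behind the cited result: the paper states Proposition \ref{prop:realizable} without its own proof, deferring to \cite{schliserman2022stability}, and the intended derivation is exactly yours --- scale the unit max-margin direction to $\hat w = s\,w^\star$, use monotonicity of the loss plus the uniform margin bound to get $f_x(\hat w)\le \exp(-s\gamma)$ (resp.\ $\log(1+\exp(-s\gamma))$) simultaneously for all $x$, and invert to obtain $\rho(\eps)=-\frac{1}{\gamma}\log\eps$ (resp.\ $-\frac{1}{\gamma}\log(e^{\eps}-1)$). Your caveat about the regime of validity (taking $\hat w=0$ when $\eps\ge 1$, resp.\ $\eps\ge\log 2$, where the formulas would give $\rho(\eps)\le 0$) is a sound refinement of a point the paper leaves implicit.
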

%%%%%%%%%%%%%%%%%%%%%%%%%%%%%%%%%%%%%%%%%%%%%%%%%%%%%%%%%%%%%%

\section{Additional experiments}\label{sec:appG}
\subsection{Experiments on over-parameterized Least-squares}\label{sec:plexp}
In Fig. \ref{fig:4}, we conduct experiments for highly over-parameterized Least-squares ($f(w,x)=(1-w^\top x)^2$), where $d$ is typically significantly larger than $n$ to ensure perfect interpolation of dataset. Note that, the train loss is not strongly-convex in this case, instead it satisfies the PL condition(Assumption \ref{ass:polyak}). Notably, as predicted by Lemma \ref{lem:pl-tr}, we notice the linear convergence of the train loss and the consensus error in Fig. \ref{fig:4} (Left). On the other hand, for the test loss, we observe its remarkably fast convergence (after approximately 50 iterations) to the optimal value, which is followed by a sharp increase in the subsequent iterations. 

\subsection{On the update rule of $\mathsf{FDLR}$}

In the final section of the paper, we state a remark regarding the update rule of $\mathsf{FDLR}$. Recall the update rule of DGD, 
\bea\label{eq:dec_app}
w^{(t+1)}_\ell = \sum_{k\in\Nn_\ell} A_{\ell k} w_k^{(t)} - \eta_t\nabla \hat F_\ell (w_\ell^{(t)}). 
\eea
\begin{figure}[t]
\includegraphics[width=6.4cm,height=5.3cm]{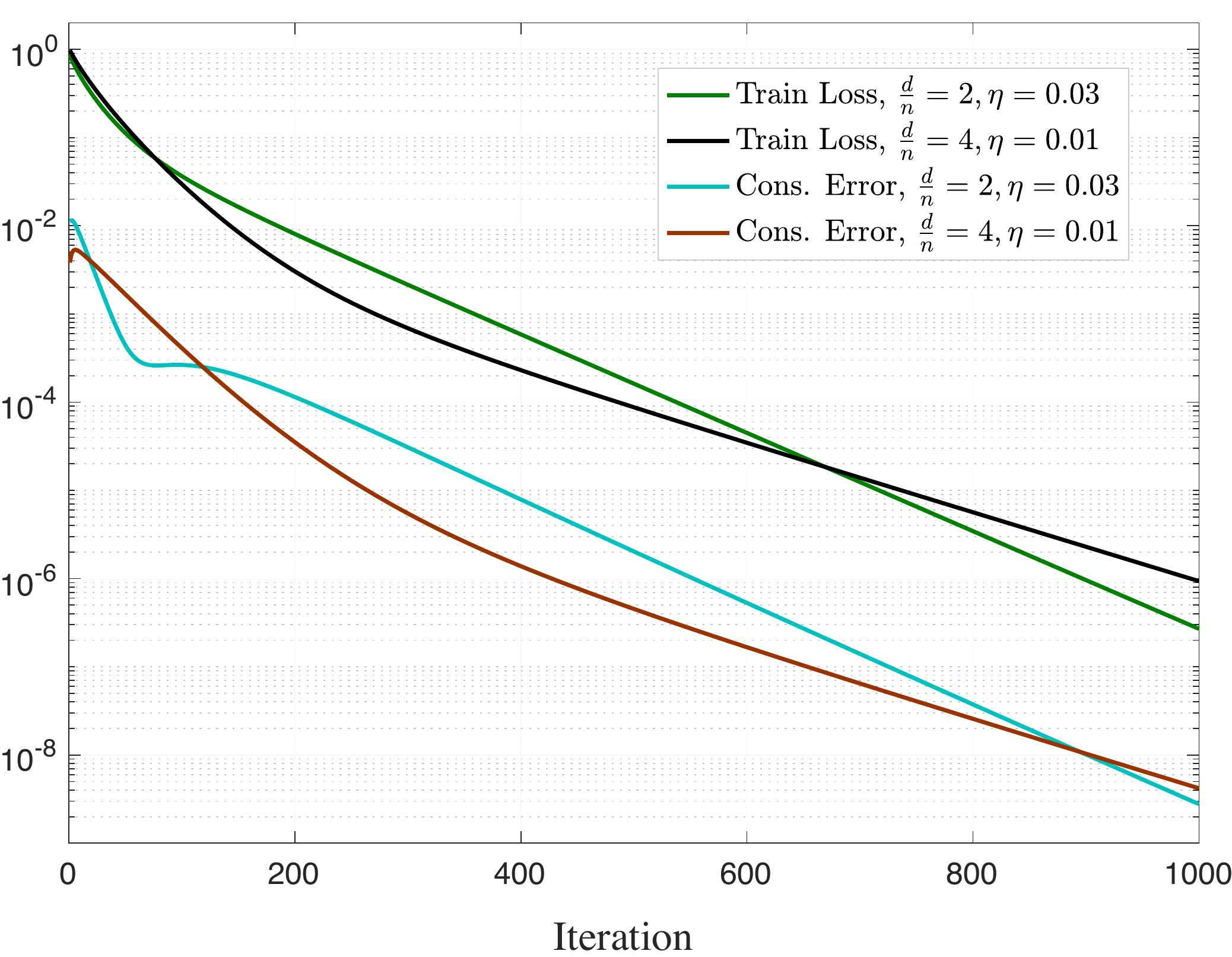}
\centering
%\caption{Misclassification Train Errors(\%) for our proposed algorithms compared to the vanilla decentralized Gradient-Descent algorithm in logistic regression with Signed measurements.}
\;\;\;\;\;
\includegraphics[width=6.4cm,height=5.3cm]{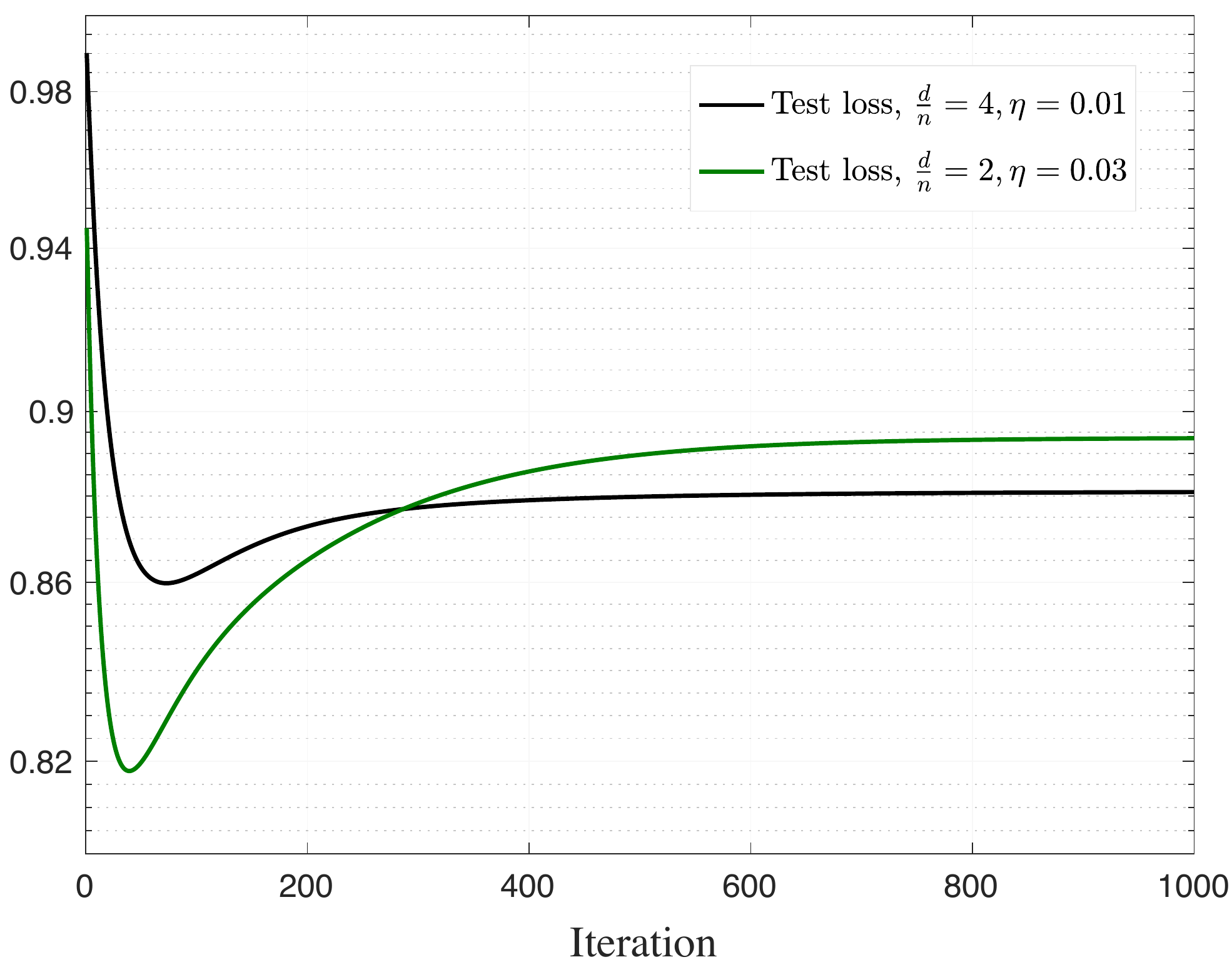}
\caption{Consensus error, train loss and test loss for DGD with over-parameterized least-squares(square loss). The test loss achieves its optimum at the very early stages of DGD.}
\label{fig:4}
\end{figure}
\begin{figure}[t]
\includegraphics[width=7.4cm,height=5.9cm]{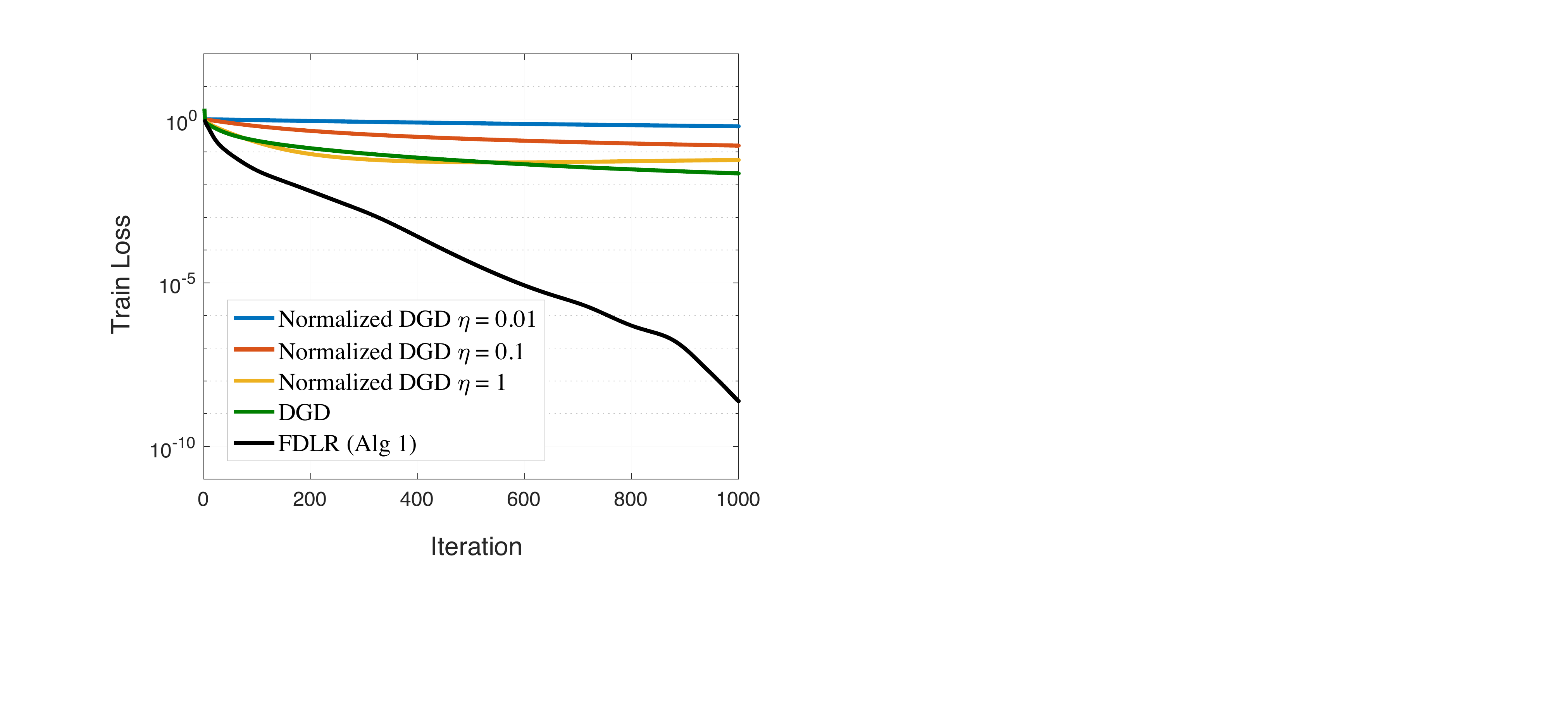}
\centering
\caption{Normalized DGD with the update rule in Eq.\eqref{eq:normalizedDGDapp} for different step-sizes $\eta$ compared to DGD (Eq.\eqref{eq:dec_app}) and to $\mathsf{FDLR}$ (Alg \ref{alg:alg1}). The step-sizes for DGD and FDLR are fine-tuned so that best of each algorithm is depicted. Normalized DGD cannot outperform DGD while FDLR is significantly faster than DGD. Here we consider linear classification with the exponential loss function and the dataset is generated according to signed measurements with Gaussian features and $n=100,d=50.$}
\label{fig:NDGD}
\end{figure}

Notably, we expect $\mathsf{FDLR}$ to be perhaps the simplest approach for accommodating normalized gradients in decentralized learning setting since in DGD the agents only have access to their local gradients. In particular consider a Normalized DGD algorithm with the same update as in \eqref{eq:dec_app} but with $\nabla \hat F_i(w_i^{(t)})$ replaced by $\nabla \hat F_i(w_i^{(t)})/\|\nabla \hat F_i(w_i^{(t)})\|$, i.e.,

\bea\label{eq:normalizedDGDapp}
w^{(t+1)}_\ell = \sum_{k\in\Nn_\ell} A_{\ell k} w_k^{(t)} - \eta_t\frac{\nabla \hat F_\ell (w_\ell^{(t)})} {\|\nabla \hat F_\ell(w_\ell^{(t)})\|}
\eea

The Normalized DGD algorithm above does not lead to faster convergence. This is due to the fact that in DGD the local gradient norm $\|\nabla \hat F_i(w_i^{(t)})\|$ can be different than the global gradient norm $\|\nabla \hat F(w_i^{(t)})\|$. Thus even if with the update rule \eqref{eq:normalizedDGDapp} the local parameters $w_i^{(t)}$ converge to the global optimal solution, still the update rule for the averaged parameter $\bar w^{(t)}$ is different than the update rule of centralized normalized GD.  
Our numerical experiment in Fig. \ref{fig:NDGD} demonstrates the incapability of Normalized DGD in speeding up DGD. In particular, we note that for any choice of step-size Normalized DGD does not lead to acceleration  compared to DGD whereas $\mathsf{FDLR}$ massively outperforms DGD.

\section{A note about convergence rates of DGD}
As mentioned in the paper's introduction, many prior works on investigate convergence of DGD and of its stochastic variant decentralized stochastic gradient descent (DSGD) under various assumptions, e.g.
\cite{jiang2017collaborative,wang2019slowmo,koloskova2019decentralized,koloskova2020unified} and many references therein. Most recently,  \cite{koloskova2020unified} has presented a powerful unifying analysis of DSGD under rather weak assumptions. Specialized to convex $L$-smooth functions for which there exists $w^*$ such that $\|\nabla f_i(w^*)\|=0$ (i.e. interpolation) \cite[Thm.~2]{koloskova2020unified} shows a rate of $\Oc(LR_0/T)$ for average DSGD updates. Here, $R_0=\|w_1-w^*\|_2$. Ignoring logarithmic factors, this rate is the same as what we obtained in \eqref{eq:trainlosscvx} (as a consequence of Lemma \ref{lem:trainloss-convex}) for DGD specifically applied to logistic loss over separable data. However, our result does \emph{not} directly follow from \cite[Thm.~2]{koloskova2020unified}. The reason is that logistic loss on separable data does \emph{not} attain a bounded estimator. In fact, we believe the $\log^2 T$ dependence of the rate that shows up in our analysis (see Eq. \eqref{eq:trainloss_convex}), is a consequence of the  infinitely normed-optimizers in our setting and we expect the bound to be tight as suggested by our experiments (see Fig \ref{fig:3}) and in agreement with convergence bounds for logistic regression on separable data in the centralized case derived recently in \cite[Thm.~1.1]{ji2018risk}. On the other hand, the results of \cite{koloskova2020unified} are applicable to finite optimizers, which yields $\Oc(1/T)$ convergence rates without $\log$ factor. Besides the above, in Theorem \ref{lem:exp_dsc}, we prove novel last-iterate (as opposed to averaged in the literature) convergence bounds for the train loss and faster consensus error rates of $\tilde \Oc(1/T^2).$ This is possible by leveraging additional Hessian self-bounded (Ass. \ref{ass:laplace}) and self-lower-boudned (Ass. \ref{ass:8}) assumptions, which hold for example for the exponential loss. Finally, 
 we recall that our main focus is on studying finite time \emph{generalization} bounds for DGD (e.g. Thm. \ref{thm:testloss_cvx}), which to the best of our knowledge are new in this setting. Having discussed these, it is worth noting that the analysis of \cite{koloskova2020unified}
 applies under a relaxed assumption on the mixing matrix (see \cite[Ass.~4]{koloskova2020unified}) than the corresponding assumptions (e.g.  Assumption \ref{ass:mixing}) in the literature.  For example, this relaxed assumption covers decentralized local SGD (with multiple local updates per iteration) as a special case and is interesting to extend our results (on logistic regression over separable data) to such settings.

\end{document}